\newcommand{\oomit}[1]{}
\begin{document}
\title{Safety Verification for Neural Networks Based on Set-boundary Analysis}
%
%

\author{Zhen Liang\inst{1} \and
Dejin Ren\inst{2} \and
Wanwei Liu\inst{1} \and
Ji Wang\inst{1} \and \\ 
Wenjing Yang\inst{1} \and 
Bai Xue \inst{2} \thanks{Corresponding author} 
}

\authorrunning{Z. Liang et al.}
%
\institute{College of Computer Science and Technology, National University of Defense Technology, Changsha, China \\
\email{\{liangzhen, wwliu, Jiwang, wenjing.yang\}@nudt.edu.cn}\\
 \and
Institute of Software CAS, Beijing, China\\
\email{\{rendj, xuebai\}@ios.ac.cn}}

%
\maketitle              
\begin{abstract}
Neural networks (NNs) are increasingly applied in safety-critical systems such as autonomous vehicles. However, they are fragile and are often ill-behaved. Consequently, their behaviors should undergo rigorous guarantees before deployment in practice. In this paper we propose a set-boundary reachability method to investigate the safety verification problem of NNs from a topological perspective. Given an NN with an input set and a safe set, the safety verification problem is to determine whether all outputs of the NN resulting from the input set fall within the safe set. In our method, the homeomorphism property of NNs is mainly exploited, which establishes a relationship mapping boundaries to boundaries. The exploitation of this property facilitates reachability computations via extracting subsets of the input set rather than the entire input set, thus controlling the wrapping effect in reachability analysis and facilitating the reduction of computation burdens for safety verification. The  homeomorphism property exists in some widely used NNs such as invertible NNs. Notable representations are invertible residual networks (i-ResNets) and Neural ordinary differential equations (Neural ODEs). For these NNs, our set-boundary reachability method only needs to perform reachability analysis on the boundary of the input set. For NNs which do not feature this property with respect to the input set, we explore subsets of the input set for establishing the local homeomorphism property, and then abandon these subsets for reachability computations. Finally, some examples demonstrate the performance of the proposed method.

\keywords{Safe verification  \and Neural networks \and Boundary analysis \and Homeomorphism.}
\end{abstract}
\section{Introduction}
Machine learning has seen rapid growth due to the high amount of data produced in many industries and the increase in computation power. NNs have emerged as a leading candidate computation model for machine learning, which promote the prosperity of artificial intelligence in various fields, such as computer vision \cite{tian2021image,dahnert2021panoptic}, natural language processing \cite{yuan2021bartscore,karch2021grounding} and so on. In recent years, NNs are increasingly applied in safety critical systems. For example, a neural network has been implemented in the ACAS Xu airborne collision avoidance system for unmanned aircraft, which is a highly safety-critical system and currently
being developed by the Federal Aviation Administration. Consequently, to gain users’ trust and ease their concerns, it is of vital importance to ensure that NNs are able to produce safe outputs and satisfy the essential safety requirements before the deployment. 

Safety verification of NNs, which determines whether all outputs of an NN satisfy specified safety requirements via computing output reachable sets, has attracted a huge attention from different communities such as machine learning \cite{lomuscio2017approach,akintunde2019verification}, formal methods \cite{huang2017safety,tran2019star,liuww2020article}, and security \cite{wang2018efficient,gehr2018ai2}.  Because NNs are generally large, nonlinear, and non-convex, exact computation of output reachable sets is challenging. Although there are some methods on exact reachability analysis such as  SMT-based \cite{katz2017reluplex} and polyhedron-based approaches \cite{xiang2017reachable,tran2019parallelizable}, they are usually time-consuming and do not scale well. Moreover, these methods are limited to NNs with ReLU activation functions. Consequently, over-approximate reachability analysis, which mainly involves the computation of super sets of output reachable sets, is often resorted to in practice. The over-approximate analysis is usually more efficient and can be applied to more general NNs beyond ReLU ones. Due to these advantages, an increasing attention has been attracted and thus a large amount of computational techniques have been developed for over-approximate reachability analysis \cite{liu2021algorithms}.

Overly conservative over-approximations, however, often render many
safety properties unverifiable in practice. This conservatism
mainly results from the wrapping effect, which is the accumulation of over-approximation errors through layer-by-layer propagation. As the extent of the wrapping effect correlates strongly with the size of the input set \cite{xiang2018output}, techniques that partition the input set and independently compute output reachable sets of the resulting subsets are often adapted to reduce the wrapping effect, especially for large input sets. Such partitioning may, however, produce a 
 large number of subsets, which is generally exponential in the dimensionality. This will induce extensive demand on computation time and memory, often rendering existing reachability analysis techniques not suitable for safety verification of complex NNs in real applications. Therefore, exploring subsets of the input set rather than the entire input set could help reduce computation burdens and thus accelerate computations tremendously. 


In this work we investigate the safety verification problem of NNs from the topological perspective and extend the set-boundary reachability method, which is originally proposed for verifying safety properties of systems modeled by ODEs in \cite{xue2016reach}, to safety verification of NNs. In \cite{xue2016reach}, the set-boundary reachability method only performs over-approximate reachability analysis on the initial set's boundary rather than the entire initial set to address safety verification problems. It was built upon the homeomorphism property of ODEs. This nice property also widely exists in NNs, and typical NNs are invertible NNs such as neural ODEs \cite{chen2018neural} and invertible residual networks \cite{behrmann2019invertible}. Consequently, it is straightforward to extend the set-boundary reachability method to safety verification of these NNs, just using the boundary of the input set for reachability analysis which does not involve reachability computations of interior points and thus reducing computation burdens in safety verification. Furthermore, we extend the set-boundary reachability method to general NNs via exploiting the local homeomorphism property with respect to  the input set. This exploitation is instrumental for constructing a subset of the input set for reachability computations, which is gained via removing a set of points in the input set such that the NN is a homeomorphism with respect to them. The above methods of extracting subsets for performing reachability computations can also be applied to intermediate layers of NNs rather than just between the input and output layers. Finally, we demonstrate the performance of the proposed method on several examples.

Main contributions of this paper are listed as follows.

\begin{itemize}
    \item We investigate the safety verification problem of NNs from the topological perspective. More concretely, we exploit the homeomorphism property and aim at extracting a subset of the input set rather than the entire input set for reachability computations. To the best of our knowledge, this is the first work on the use of the homeomorphism property to address safety verification problems of NNs. This might on its own open research directions on digging into topological properties of facilitating reachability computations for NNs. 
    \item  The proposed method is able to enhance the capabilities and performances of existing reachability computation methods for safety verification of NNs via reducing computation burdens. Based on the homeomorphism property, the computation burdens of solving the safety verification problem can be reduced for invertible NNs. We further show that the computation burdens can also be reduced for more general NNs via exploiting this property on subsets of the input set. 
\end{itemize}


\section{Related Work}
There has been a dozen of works on safety verification of NNs. The first work on DNN verification was published in \cite{pulina2010abstraction}, which focuses on DNNs with Sigmoid activation functions via a partition-refinement approach. Later, Katz
et al. \cite{katz2017reluplex} and \cite{ehlers2017formal} independently implemented Reluplex and Planet, two SMT solvers to verify DNNs with ReLU activation function on properties expressible with SMT constraints. 


Recently, methods based on abstract interpretation attracts more attention, which is to propagate sets in a sound (i.e., over-approximate) way \cite{cousot1977abstract} and is more efficient. There are many widely used abstract domains, such as intervals \cite{wang2018efficient}, and star-sets \cite{tran2019star}. A method based on zonotope abstract domains is proposed in \cite{gehr2018ai2}, which works for any piece linear activation function with great scalability. Then, it is further improved \cite{singh2018fast} for obtaining tighter results via imposing abstract transformation on  ReLU, Tanh and Sigmoid. \cite{singh2018fast} proposed specialized abstract zonotope transformers for handling NNs with ReLU, Sigmoid and Tanh activation functions. \cite{singh2019abstract} proposes an abstract domain that combines floating point polyhedra with intervals to over-approximate output reachable sets. Subsequently, a spurious region guided approach is proposed to infer tighter output reachable sets \cite{yang2021improving} based on the method in \cite{singh2019abstract}. \cite{dutta2017output} abstracts an NN by a polynomial, which has the advantage that dependencies can in principle be preserved. This approach can be precise in practice for small input sets. Afterwards, \cite{huang2019reachnn} approximates Lipschitz-continuous NNs with Bernstein polynomials.  \cite{ivanov2020verifying} transforms a neural network with Sigmoid activation functions into a hybrid automaton and then uses existing reachability analysis methods for the hybrid automaton to perform reachability computations. \cite{xiang2018output} proposed a maximum sensitivity based approach for  solving safety verification problems for multi-layer perceptrons with monotonic activation functions. In this approach, an exhaustive search of the input set is enabled by  discretizing input space to compute output reachable set which consists of a union of reachtubes.




 
Neural ODEs were first introduced in 2018, which exhibit considerable computational efficiency on time-series modeling tasks \cite{chen2018neural}. Recent years have witnessed an increase use of them on real-world applications \cite{lechner2020neural,hasani2020natural}. However, the verification techniques for Neural ODEs are rare and still in fancy. The first reachability technique for Neural ODEs 
 appeared in \cite{grunbacher2021verification}, which proposed Stochastic Lagrangian reachability, an abstraction-based technique for constructing an over-approximation of the output reachable set with probabilistic guarantees. Later, this method was improved and implemented in a tool GoTube \cite{gruenbacher2021gotube}, which is able to perform reachability analysis for long time horizons. Since these methods only provide stochastic bounds on the computed over-approximation and thus cannot provide formal guarantees on the satisfaction of safety properties, \cite{lopez2022reachability} presented a deterministic verification framework for a general class of Neural ODEs with multiple continuous- and discrete-time layers. 

 Based on entire input sets, all the aforementioned works focus on developing computational techniques for reachability analysis and safety verification of appropriate NNs. In contrast, the present work shifts this focus to topological analysis of NNs and guides reachability computations on subsets of the input set rather than the entire input set, reducing computation burdens and thus increasing the power of existing safety verification methods for NNs. Although there are studies on topological properties of NNs \cite{behrmann2019invertible,dupont2019augmented,naitzat2020topology}, there is no work on the utilization of homeomorphism property to analyze their reachability and safety verification problems, to the best of our knowledge.


\section{Preliminaries}
In this section, we give an introduction on the safety verification problem of interest for NNs and homeomorphisms. Throughout this paper, given a set $\Delta$, $\Delta^{\circ}$, $\partial \Delta$ and $\overline{\Delta}$ respectively denotes its interior, boundary and the closure. 

NNs, also known as artificial NNs, are a subset of machine learning and are at the heart of deep learning algorithms. It works by using interconnected nodes or neurons in a layered structure that resembles a human brain, and is generally composed of three layers: an input layer, hidden layers and an output layer. Mathematically, it is a mathematical function $\bm{N}(\cdot): \mathbb{R}^n\rightarrow \mathbb{R}^m$, where $n$ and $m$ respectively denote the dimension of the input and output of the NN.

\subsection{Problem Statement}
Given an input set $\mathcal{X}_{in}$, the output reachable set of an NN $\bm{N}(\cdot): \mathbb{R}^n \rightarrow \mathbb{R}^m$ is stated by the following definition.

\begin{definition}
\label{safety}
For a given neural network $\bm{N}(\cdot): \mathbb{R}^n \rightarrow \mathbb{R}^m$, with an input set $\mathcal{X}_{in} \subseteq \mathbb{R}^n$, the output reachable  set $\mathcal{R}(\mathcal{X}_{in})$ is defined as
\[\mathcal{R}(\mathcal{X}_{in})=\{\bm{y}\in \mathbb{R}^m \mid \bm{y}=\bm{N}(\bm{x}), \ \bm{x}\in \mathcal{X}_{in}\}.\]
\end{definition}

The safety verification problem is formulated in Definition \ref{safety1}.
\begin{definition}[Safety Verification Problem]
\label{safety1}
Given a neural network $\bm{N}(\cdot): \mathbb{R}^n \rightarrow \mathbb{R}^m$, an input set $\mathcal{X}_{in}\subseteq \mathbb{R}^n$ which is compact, and a safe set $\mathcal{X}_s\subseteq \mathbb{R}^m$ which is simply connected, the safety verification problem is to verify that 
 \[\forall \bm{x}_0\in \mathcal{X}_{in}. \ \bm{N}(\bm{x}_0) \in \mathcal{X}_s.\]
\end{definition}

In topology, a simply connected set is a path-connected set where one can continuously shrink any simple closed curve into a point while remaining in it. The requirement that the safe set $\mathcal{X}_{s}$ is a simply  connected set is not strict, since many widely used sets such as intervals, ellipsoids, convex polyhedra and zonotopes are simply connected. 

Obviously, the safety property that $\forall \bm{x}_0\in \mathcal{X}_{in}. \  \bm{N}(\bm{x}_0) \in \mathcal{X}_s$ holds if and only if $\mathcal{R}(\mathcal{X}_{in})\subseteq \mathcal{X}_s$. However, it is challenging to compute the exact output reachable set $\mathcal{R}(\mathcal{X}_{in})$ and thus  an over-approximation $\Omega(\mathcal{X}_{in})$, which is a super set of the set $\mathcal{R}(\mathcal{X}_{in})$ (i.e., $\mathcal{R}(\mathcal{X}_{in})\subseteq \Omega(\mathcal{X}_{in})$), is commonly resorted to in existing literature for formally reasoning about the safety property. If $\Omega(\mathcal{X}_{in}) \subseteq \mathcal{X}_s$, the safety property that $\forall \bm{x}_0\in \mathcal{X}_{in}.\  \bm{N}(\bm{x}_0) \in \mathcal{X}_s$ holds.


\subsection{Homeomorphisms}
In this subsection, we will recall the definition of a homeomorphism, which is a map between spaces that preserves all topological properties.
\begin{definition}
A map $h: \mathcal{X}\rightarrow \mathcal{Y}$ with $\mathcal{X},\mathcal{Y}\subseteq \mathbb{R}^n$ is a homeomorphism with respect to $\mathcal{X}$ if it is a continuous bijection and its inverse $h^{-1}(\cdot): \mathcal{Y}\rightarrow \mathcal{X}$ is also continuous. 
\end{definition}

Homeomorphisms are continuous functions that preserve topological properties, which map boundaries to boundaries and interiors to interiors \cite{massey2019basic}. 
\begin{proposition}
 Suppose sets $\mathcal{X},\mathcal{Y}\subseteq \mathbb{R}^n$ are compact. If a map $h(\cdot): \mathcal{X}\rightarrow \mathcal{Y}$ is a homeomorphism, then  $h$ maps the boundary of the set $\mathcal{X}$ onto the boundary of the set $\mathcal{Y}$, and the interior of  the set $\mathcal{X}$ onto the interior of the set $\mathcal{Y}$.
\end{proposition}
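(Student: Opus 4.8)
The plan is to prove the statement by establishing the two halves of the claim --- preservation of boundary and preservation of interior --- and noting that together with surjectivity they force the "onto" conclusions. I would first recall the relevant topological facts: a homeomorphism $h$ between compact sets $\mathcal{X},\mathcal{Y}\subseteq\mathbb{R}^n$ is in particular a continuous open map onto its image (by invariance of domain, since both sets sit in the same $\mathbb{R}^n$), and both $h$ and $h^{-1}$ are open maps. The key engine is \emph{Brouwer's invariance of domain}: if $U\subseteq\mathbb{R}^n$ is open and $f:U\to\mathbb{R}^n$ is continuous and injective, then $f(U)$ is open in $\mathbb{R}^n$. This is exactly what lets us move between "interior of $\mathcal{X}$" (which, for a full-dimensional compact set, coincides with points having a neighborhood in $\mathbb{R}^n$ contained in $\mathcal{X}$) and "interior of $\mathcal{Y}$".

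Concretely, the steps I would carry out are: (1) Show $h(\mathcal{X}^{\circ})\subseteq\mathcal{Y}^{\circ}$. Take $\bm{x}\in\mathcal{X}^{\circ}$, so there is an open ball $B\subseteq\mathcal{X}$ around $\bm{x}$; since $h$ restricted to $B$ is continuous and injective, invariance of domain gives that $h(B)$ is open in $\mathbb{R}^n$, and $h(B)\subseteq h(\mathcal{X})=\mathcal{Y}$, so $h(\bm{x})\in\mathcal{Y}^{\circ}$. (2) Apply the same argument to $h^{-1}$ to get $h^{-1}(\mathcal{Y}^{\circ})\subseteq\mathcal{X}^{\circ}$, i.e. $\mathcal{Y}^{\circ}\subseteq h(\mathcal{X}^{\circ})$; combining with (1) yields $h(\mathcal{X}^{\circ})=\mathcal{Y}^{\circ}$. (3) Deduce the boundary statement: since $\mathcal{X}$ is compact hence closed, $\mathcal{X}=\overline{\mathcal{X}}=\mathcal{X}^{\circ}\cup\partial\mathcal{X}$ with the union disjoint, and likewise for $\mathcal{Y}$; because $h$ is a bijection from $\mathcal{X}$ onto $\mathcal{Y}$ mapping $\mathcal{X}^{\circ}$ bijectively onto $\mathcal{Y}^{\circ}$, it must map the complementary set $\partial\mathcal{X}=\mathcal{X}\setminus\mathcal{X}^{\circ}$ bijectively onto $\mathcal{Y}\setminus\mathcal{Y}^{\circ}=\partial\mathcal{Y}$.

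The main obstacle --- and the only nontrivial ingredient --- is the use of invariance of domain in step (1); everything else is elementary set-theoretic bookkeeping with the disjoint decomposition of a closed set into interior and boundary. One subtlety worth flagging is what "interior" and "boundary" mean here: the proposition implicitly treats $\mathcal{X},\mathcal{Y}$ as subsets of $\mathbb{R}^n$ and uses the topology of $\mathbb{R}^n$ (so $\partial\Delta$ is the topological frontier in $\mathbb{R}^n$), which is the reading under which invariance of domain applies directly; if instead one meant interior/boundary relative to the subspace topology, the statement would be essentially trivial (a homeomorphism always maps relative boundary to relative boundary) but less useful for the reachability application. I would state explicitly that we work with the $\mathbb{R}^n$-topology, cite invariance of domain (e.g. \cite{massey2019basic}), and then present steps (1)--(3) as above.
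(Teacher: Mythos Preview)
Your proof is correct and complete; invariance of domain is precisely the right tool, and your decomposition into (1)--(3) together with the remark on ambient versus relative interior is clean and accurate. However, the paper does not actually supply its own proof of this proposition: it simply states the result and defers to the cited reference \cite{massey2019basic}. So there is nothing substantive to compare against --- your argument is a genuine proof where the paper offers only a citation, and the approach you take (invariance of domain plus bijective bookkeeping on the disjoint decomposition $\mathcal{X}=\mathcal{X}^{\circ}\cup\partial\mathcal{X}$) is the standard one that any algebraic-topology reference would give.
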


\begin{figure}[htbp]
\centering
\subfigure[A homeomorphic map]{
\begin{minipage}[t]{0.5\linewidth}
\centering
\includegraphics[width=1.5in]{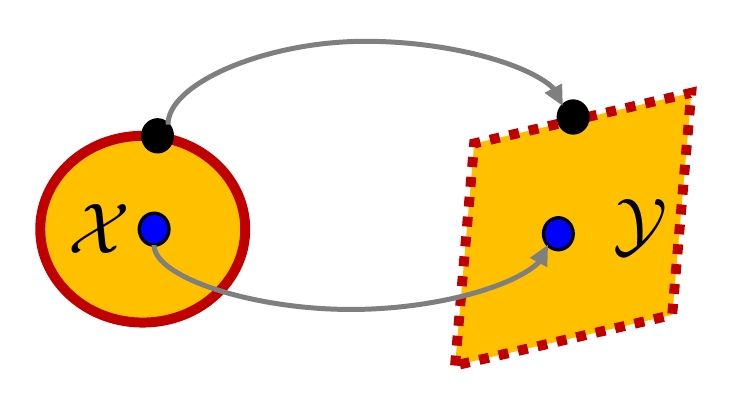}
\label{home}
\end{minipage}%
}%
\subfigure[A non-homeomorphic map]{
\begin{minipage}[t]{0.5\linewidth}
\centering
\includegraphics[width=1.5in]{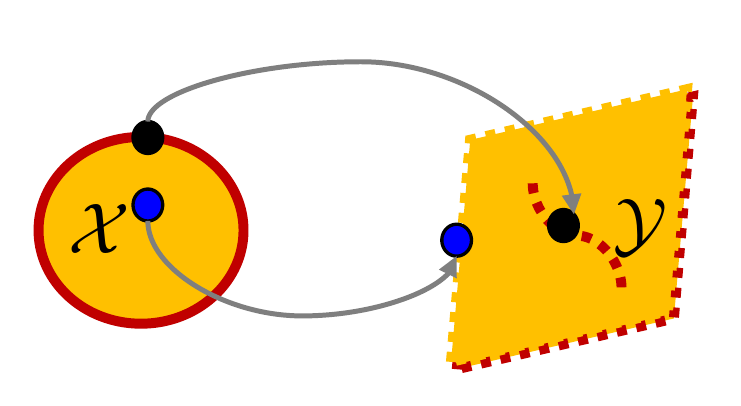}
\label{non-homeo}
\end{minipage}%
}%
\centering
\caption{Homeomorphic and non-homeomorphic maps}
\label{ill-home}
\end{figure}

Based on this property, \cite{xue2016reach} proposed a set-boundary reachability method for safety verification of ODEs, via only propagating the initial set's boundary. Later, this method was extended to a class of delay differential equations \cite{xue2020over}.




\section{Safety Verification Based on Boundary Analysis}
In this section we introduce our set-boundary reachability method for addressing the safety verification problem in the sense of Definition \ref{safety}. We first consider invertible NNs in Subsection \ref{Sec-inn}, and then extend the method to more general NNs in Subsection \ref{Sec-noninn}.

\subsection{Safety Verification on Invertible NNs}\label{Sec-inn}
In this subsection we introduce  our set-boundary reachability method for safety verification on invertible NNs, which relies on the homeomorphism property of these NNs.

Invertible NNs, such as i-RevNets \cite{jacobsen2018revnet}, RevNets \cite{gomez2017reversible}, i-ResNets \cite{behrmann2019invertible} and Neural ODEs \cite{chen2018neural}, are NNs with invertibility by designed architectures, which can reconstruct inputs from their outputs. These NNs are continuous bijective maps. Based on the facts that $\mathcal{X}_{in}$ is compact, they are homeomorphisms [Corollary 2.4, \cite{joshi1983introduction}]\footnote{A continuous bijection from a compact space onto a Hausdorff space is a homeomorphism. (Euclidean space and any subset of Euclidean space is Hausdorff.)}. In existing literature, many invertible NNs are constructed by requiring their Jacobian determinants to be 
non-zero \cite{ardizzone2018analyzing}. Consequently, based on the inverse function theorem \cite{krantz2002implicit}, these NNs are homeomorphisms. In the present work, we also use Jacobian determinants to justify the invertibility of some NNs. It is noteworthy that Jacobian determinants being non-zero is a sufficient but not necessary condition for homeomorphisms and the reason why we resort to this requirement lies in the simple and efficient computations of Jacobian determinants with interval arithmetic. However, this demands the differentiability of NNs. Thus, this  technique of computing Jacobian determinants to determining homeomorphisms is not applicable to NNs with ReLU activation functions.


Based on the homeomorphism property of mapping the input set's boundary onto the output reachable set's boundary, we propose a set-boundary reachability method for safety verification of invertible NNs, which just performs the over-approximate reachability analysis on the input set's boundary. Its computation procedure is presented in Algorithm \ref{alg: iNNs}. 
\begin{algorithm}[tb]
\caption{Safety Verification Framework for Invertible NNs Based on Boundary Analysis}
\label{alg: iNNs}
\textbf{Input}: an invertible NN $\bm{N}(\cdot): \mathbb{R}^n \rightarrow \mathbb{R}^n$, an input set $\mathcal{X}_{in}$ and a safe set $\mathcal{X}_s$.\\
\textbf{Output}: \textbf{Safe} or \textbf{Unknown}.
\begin{algorithmic}[1] 
\STATE extract the boundary $\partial \mathcal{X}_{in}$ of the input set $\mathcal{X}_{in}$;
\STATE apply existing methods to compute an over-approximation $\Omega(\partial \mathcal{X}_{in})$;
\IF {$\Omega(\partial \mathcal{X}_{in})\subseteq \mathcal{X}_s$} 
\STATE return \textbf{Safe}
\ELSE
\STATE return \textbf{Unknown}
\ENDIF
\end{algorithmic}
\end{algorithm}

\begin{remark}
   In the second step  of Algorithm \ref{alg: iNNs}, we may take partition operator on the input set's boundary to refine the computed over-approximation for addressing the safety verification problem.  
\end{remark}

\begin{theorem}[Soundness]
\label{sound}
If Algorithm \ref{alg: iNNs} returns \textbf{\em Safe},  the safety property in the sense of Definition \ref{safety} holds.    
\end{theorem}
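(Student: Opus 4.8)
The plan is to show that whenever Algorithm~\ref{alg: iNNs} returns \textbf{Safe}, we have $\mathcal{R}(\mathcal{X}_{in}) \subseteq \mathcal{X}_s$, which by the remark following Definition~\ref{safety1} is equivalent to the safety property. The chain of reasoning has three links: first, that the over-approximation is sound on the boundary, i.e. $\bm{N}(\partial\mathcal{X}_{in}) \subseteq \Omega(\partial\mathcal{X}_{in})$; second, a topological step showing that controlling the image of the boundary actually controls the image of the whole set; and third, assembling these with the algorithm's termination condition.

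First I would record that the algorithm returning \textbf{Safe} means precisely that $\Omega(\partial\mathcal{X}_{in}) \subseteq \mathcal{X}_s$, and that by definition of an over-approximation $\bm{N}(\partial\mathcal{X}_{in}) \subseteq \Omega(\partial\mathcal{X}_{in})$, hence $\bm{N}(\partial\mathcal{X}_{in}) \subseteq \mathcal{X}_s$. The heart of the argument is the second link. Since $\bm{N}$ is an invertible NN and $\mathcal{X}_{in}$ is compact, $\bm{N}$ restricted to $\mathcal{X}_{in}$ is a homeomorphism onto $\mathcal{R}(\mathcal{X}_{in})$, so by the Proposition on homeomorphisms, $\bm{N}(\partial\mathcal{X}_{in}) = \partial\mathcal{R}(\mathcal{X}_{in})$ and $\bm{N}(\mathcal{X}_{in}^{\circ}) = \mathcal{R}(\mathcal{X}_{in})^{\circ}$. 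So it suffices to prove: if $\partial\mathcal{R}(\mathcal{X}_{in}) \subseteq \mathcal{X}_s$ and $\mathcal{X}_s$ is simply connected, then $\mathcal{R}(\mathcal{X}_{in}) \subseteq \mathcal{X}_s$. I would argue by contradiction: suppose some $\bm{y}^* \in \mathcal{R}(\mathcal{X}_{in}) \setminus \mathcal{X}_s$. Since $\partial\mathcal{R}(\mathcal{X}_{in}) \subseteq \mathcal{X}_s$, the point $\bm{y}^*$ lies in $\mathcal{R}(\mathcal{X}_{in})^{\circ}$. Pick any $\bm{y}_0 \in \mathcal{R}(\mathcal{X}_{in}) \cap \mathcal{X}_s$ (nonempty, e.g. take a boundary point) — in fact I'd take $\bm{y}_0 \in \partial\mathcal{R}(\mathcal{X}_{in})$. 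Now I want a path from $\bm{y}^*$ to the exterior of $\mathcal{X}_s$ that stays inside $\mathcal{R}(\mathcal{X}_{in})$: since $\mathcal{R}(\mathcal{X}_{in})$ is connected and $\bm{y}^* \notin \mathcal{X}_s$ while its boundary is in $\mathcal{X}_s$, a continuous path inside $\mathcal{R}(\mathcal{X}_{in})$ from $\bm{y}^*$ to any boundary point must cross $\partial\mathcal{X}_s$; combined with the fact that a simply connected set is in particular connected and — more to the point — its complement's structure, one derives that $\mathcal{R}(\mathcal{X}_{in})$ must contain a point of $\partial\mathcal{X}_s$ or escape through it, contradicting $\partial\mathcal{R}(\mathcal{X}_{in}) \subseteq \mathcal{X}_s$. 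The cleanest formulation: the sets $\mathcal{R}(\mathcal{X}_{in}) \cap \mathcal{X}_s^{\circ}$ and $\mathcal{R}(\mathcal{X}_{in}) \setminus \mathcal{X}_s$ are both nonempty, relatively open (using that $\partial\mathcal{R}(\mathcal{X}_{in}) \subseteq \mathcal{X}_s$ means near the boundary of $\mathcal{R}$ we are safely inside $\mathcal{X}_s$, so points of $\mathcal{R} \setminus \mathcal{X}_s$ are interior to $\mathcal{R}$), and they would disconnect $\mathcal{R}(\mathcal{X}_{in})$ unless there is a point of $\mathcal{R}(\mathcal{X}_{in})$ on $\partial\mathcal{X}_s$ — but such a point is again in $\mathcal{X}_s$ if $\mathcal{X}_s$ is closed, so one has to handle the topological boundary carefully.

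The main obstacle I anticipate is making this separation argument fully rigorous without extra hypotheses on $\mathcal{X}_s$ — in particular whether $\mathcal{X}_s$ is assumed closed or open, and exactly where the "simply connected" hypothesis is used (connectedness of $\mathcal{X}_s$ alone seems to be what the separation argument consumes; simple connectedness would matter if $\mathcal{R}(\mathcal{X}_{in})$ could "wrap around" a hole). I would likely take $\mathcal{X}_s$ to be closed (or replace it by its closure, harmless for the over-approximation check) so that $\partial\mathcal{X}_s \subseteq \mathcal{X}_s$, then phrase the contradiction as: $\mathcal{R}(\mathcal{X}_{in})$ is connected, $\partial \mathcal R(\mathcal X_{in}) \subseteq \mathcal X_s^{\circ}$ would force $\mathcal R(\mathcal X_{in})$ into the union of $\mathcal X_s^{\circ}$-side and the outside-side as a disconnection; and the boundary-of-$\mathcal X_s$ case is excluded because the image of the connected set $\mathcal X_{in}$ cannot meet $\partial\mathcal X_s$ without meeting the complement of $\mathcal X_s$, which would have to happen at an interior point of $\mathcal{R}$, propagated outward to $\partial\mathcal{R}(\mathcal{X}_{in})$, the desired contradiction. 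Once the topological lemma is in place, the proof of Theorem~\ref{sound} is immediate: \textbf{Safe} output $\Rightarrow$ $\Omega(\partial\mathcal{X}_{in}) \subseteq \mathcal{X}_s$ $\Rightarrow$ $\partial\mathcal{R}(\mathcal{X}_{in}) = \bm{N}(\partial\mathcal{X}_{in}) \subseteq \mathcal{X}_s$ $\Rightarrow$ (by the lemma) $\mathcal{R}(\mathcal{X}_{in}) \subseteq \mathcal{X}_s$ $\Rightarrow$ $\forall \bm{x}_0 \in \mathcal{X}_{in}.\ \bm{N}(\bm{x}_0) \in \mathcal{X}_s$.
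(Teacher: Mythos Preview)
Your reduction is precisely the paper's: the algorithm returning \textbf{Safe} gives $\Omega(\partial\mathcal{X}_{in})\subseteq\mathcal{X}_s$, hence $\mathcal{R}(\partial\mathcal{X}_{in})\subseteq\mathcal{X}_s$, and the remaining task is the topological implication $\mathcal{R}(\partial\mathcal{X}_{in})\subseteq\mathcal{X}_s\Rightarrow\mathcal{R}(\mathcal{X}_{in})\subseteq\mathcal{X}_s$. The paper does not argue this step; it dispatches it in one line by invoking Lemma~3 of \cite{xue2016reach}. You instead attempt a self-contained connectedness/separation argument, so the overall route is the same and the only difference is that you try to supply what the paper outsources.

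That attempt has a genuine gap, one you partly anticipate. Your sketch uses only that $\mathcal{R}(\mathcal{X}_{in})$ is connected and that $\partial\mathcal{R}(\mathcal{X}_{in})\subseteq\mathcal{X}_s$; simple connectedness of $\mathcal{X}_s$ never actually enters, and connectedness alone is not enough once $n\geq 3$. Take $\mathcal{X}_s=\mathbb{R}^n\setminus\{0\}$, which \emph{is} simply connected for $n\geq 3$, and let $\mathcal{R}(\mathcal{X}_{in})$ be the closed unit ball (e.g.\ $\bm{N}=\mathrm{id}$ on $\mathcal{X}_{in}$ the ball). Then $\partial\mathcal{R}(\mathcal{X}_{in})=S^{n-1}\subseteq\mathcal{X}_s$ while $0\in\mathcal{R}(\mathcal{X}_{in})\setminus\mathcal{X}_s$, so your separation argument --- and indeed the bare hypothesis ``$\mathcal{X}_s$ simply connected'' --- fails here. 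What the implication really needs is that $\mathbb{R}^n\setminus\mathcal{X}_s$ has no bounded component (equivalently, trivial $(n{-}1)$-st homology of the complement, or contractibility of $\mathcal{X}_s$); in $\mathbb{R}^2$ simple connectedness does yield this via Jordan--Schoenflies, which is presumably the regime of the cited lemma. To close your argument you would have to either import that stronger hypothesis on $\mathcal{X}_s$ or replace the $\pi_1$-level reasoning with a degree/homology argument.
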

\begin{proof}
It is equivalent to show that if $\mathcal{R}(\partial \mathcal{X}_{in})\subseteq \mathcal{X}_s$, 
\[\forall \bm{x}_0\in \mathcal{X}_{in}. \ \bm{N}(\bm{x}_0)\in \mathcal{X}_s.\]
The conclusion holds by Lemma 3 in \cite{xue2016reach}.
\end{proof}


In order to enhance the understanding of Algorithm \ref{alg: iNNs} and its benefits, we use a sample example  to illustrate it. 
\begin{example}
\label{ex1}
Consider an {\em NN} from \cite{xiang2018output}, which has 2 inputs, 2 outputs and 1 hidden layer consisting of 5 neurons. The input set is $\mathcal{X}_{in}=[0,1]^2$. Its boundary is $\partial \mathcal{X}_{in}=\cup_{i=1}^4 \mathcal{B}_i$, where $\mathcal{B}_1=[0,0]\times [0,1]$, $\mathcal{B}_2=[1,1]\times [0,1]$, $\mathcal{B}_3=[0,1]\times [0,0]$ and $\mathcal{B}_4=[0,1]\times [1,1]$. The activation functions for the hidden layer and the output layer are $\mathtt{Tanh}$ and $\mathtt{Purelin}$ functions, respectively, whose weight matrices and bias vectors can be found in Example 1 in \cite{xiang2018output}. For this neural network, based on interval arithmetic, we can show that the determinant of the Jacobian matrix $\frac{\partial \bm{y}}{\partial \bm{x}_0}=\frac{\partial \bm{N}(\bm{x}_0)}{\partial \bm{x}_0}$ is non-zero for any $\bm{x}_0\in \mathcal{X}_{in}$. Therefore, this NN is invertible and the map $\bm{N}(\cdot):\mathcal{X}_{in} \rightarrow \mathcal{R}(\mathcal{X}_{in})$ is a homeomorphism with respect to the input set $\mathcal{X}_{in}$, leading to $\mathcal{R}(\partial \mathcal{X}_{in})=\partial \mathcal{R}(\mathcal{X}_{in})$. This statement is also verified via the visualized results in Fig. \ref{illu_eps1}. 

The homeomorphism property facilitates the reduction of the wrapping effect in over-approximate reachability analysis and thus reduces computation burdens in addressing the safety verification problem in the sense of Definition \ref{safety}. For this example, with the safe set $\mathcal{X}_s=[-3.85, -1.85]\times[-0.9,1.7]$, we first 
take the input set and its boundary for reachability computations. Based on interval arithmetic, we respectively compute over-approximations $\Omega(\mathcal{X}_{in})$ and $\Omega(\partial \mathcal{X}_{in})$, which are illustrated in Fig. \ref{illu_eps10}. Although the approximation $\Omega(\partial \mathcal{X}_{in})$ is indeed smaller than $\Omega(\mathcal{X}_{in})$, it still renders the safety property unverifiable. We next take partition operator for more accurate reachability computations. If the entire input set is used, we can successfully verify the safety property when the entire input set is divided into  $10^4$ small intervals of equal size. In contrast, our set-boundary reachability method just needs $400$ equal partitions on the input set's boundary, significantly reducing the computation burdens. The reachability results, i.e., the computation of $\Omega(\partial \mathcal{X}_{in})$, are illustrated in Fig. \ref{illu_eps11}. 

\end{example}

\begin{figure}[htbp]
\centering
\subfigure[\textcolor{blue}{$\mathcal{R}(\mathcal{X}_{in})$} and \textcolor{red}{$\mathcal{R}(\partial \mathcal{X}_{in})$} estimated via Monte-Carlo method]{
\begin{minipage}[t]{0.32\linewidth}
\centering
\includegraphics[width =1.7in, height=1.2in]{ 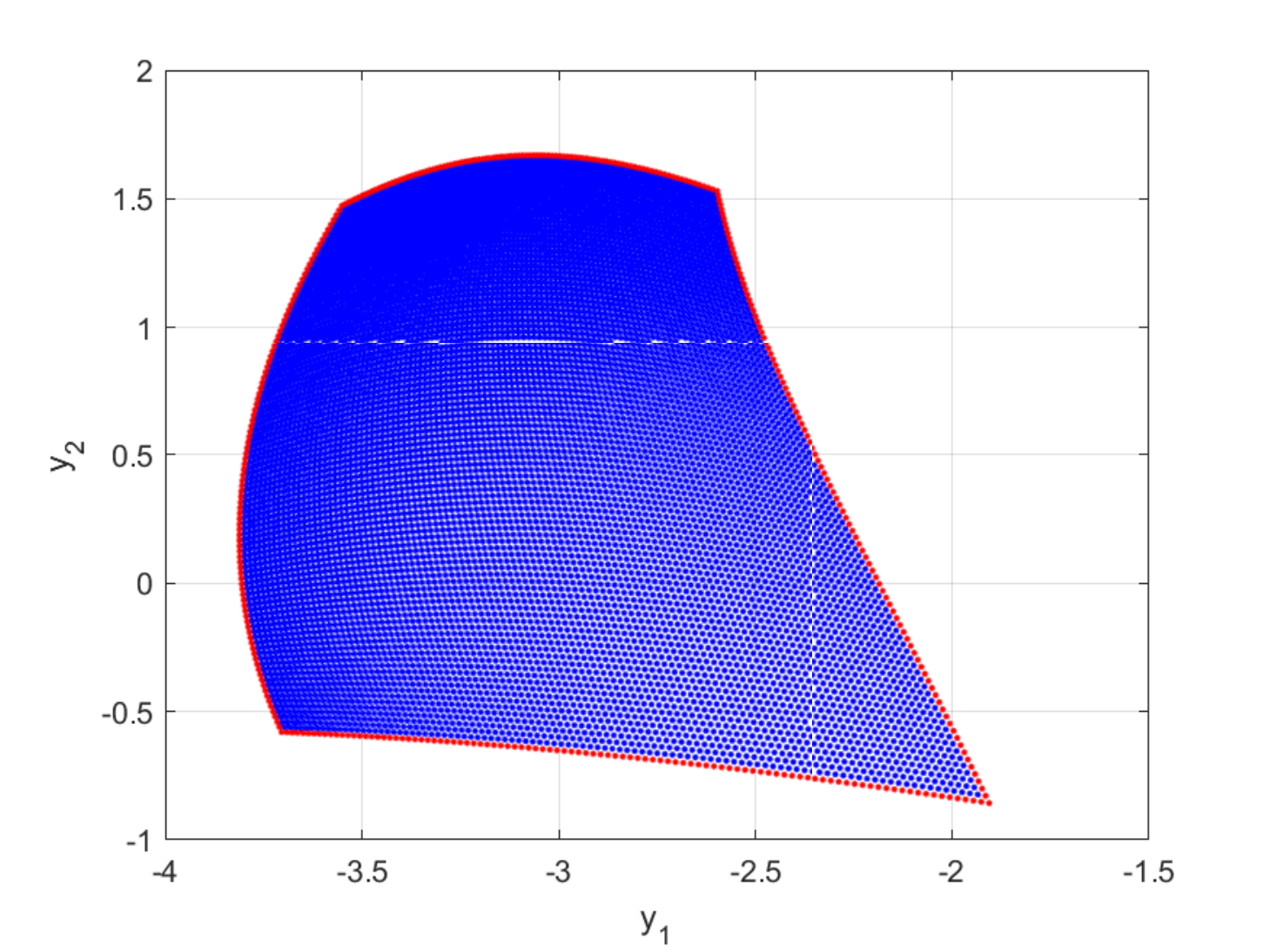}
\label{illu_eps1}
\end{minipage}%
}%
\subfigure[\textcolor{blue}{$\partial \Omega(\mathcal{X}_{in})$}; \textcolor{red}{$\Omega(\partial \mathcal{X}_{in})$}; \textcolor{green}{$\partial \mathcal{X}_s$}]{
\begin{minipage}[t]{0.32\linewidth}
\centering
\includegraphics[width =1.7in, height=1.2in]{ 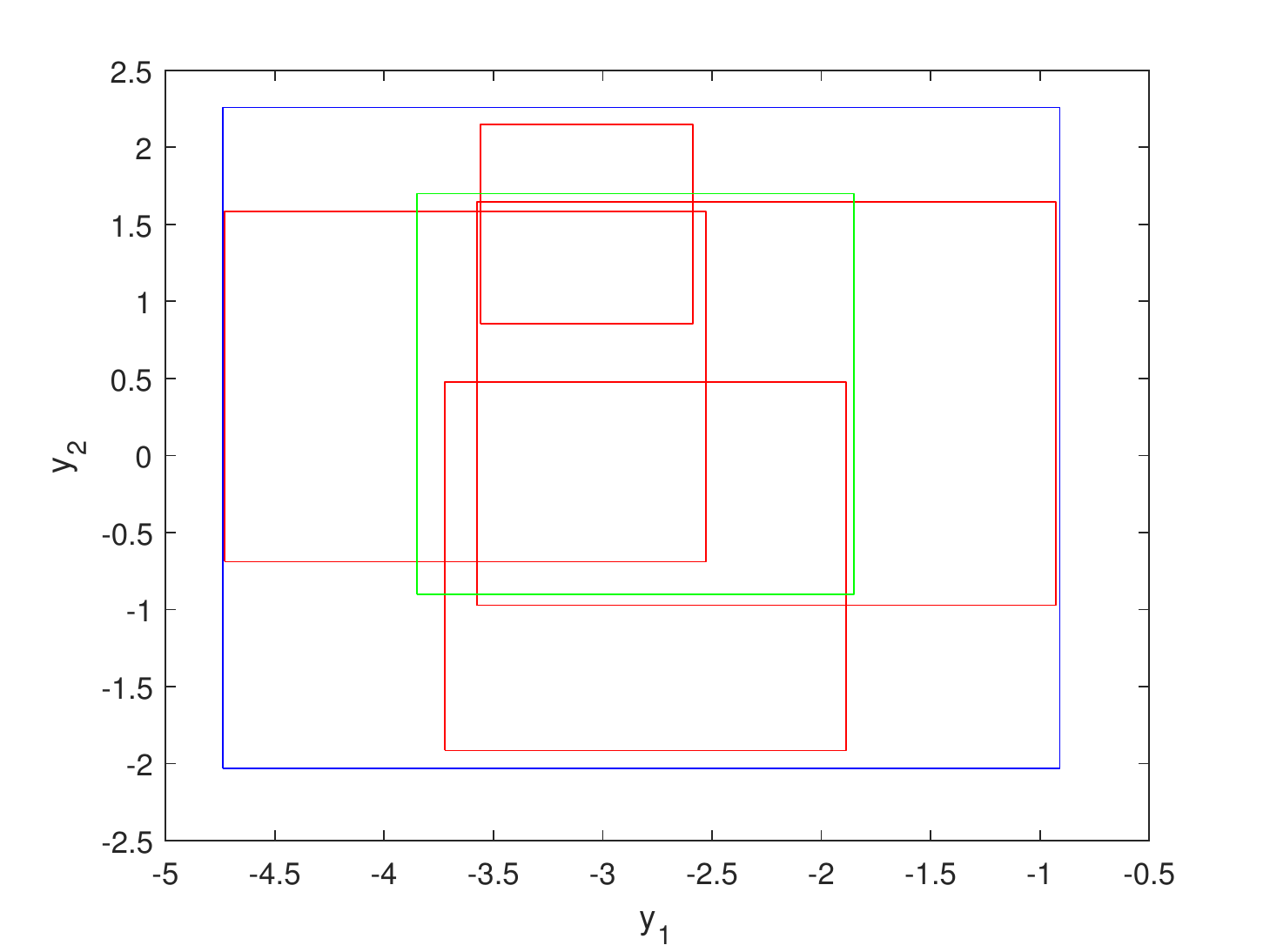}
\label{illu_eps10}
\end{minipage}%
}%
\subfigure[\textcolor{blue}{$\Omega(\mathcal{X}_{in})$}; \textcolor{red}{$\Omega(\partial \mathcal{X}_{in})$}; \textcolor{green}{$\partial \mathcal{X}_s$}]{
\begin{minipage}[t]{0.32\linewidth}
\centering
\includegraphics[width =1.7in, height=1.2in]{ 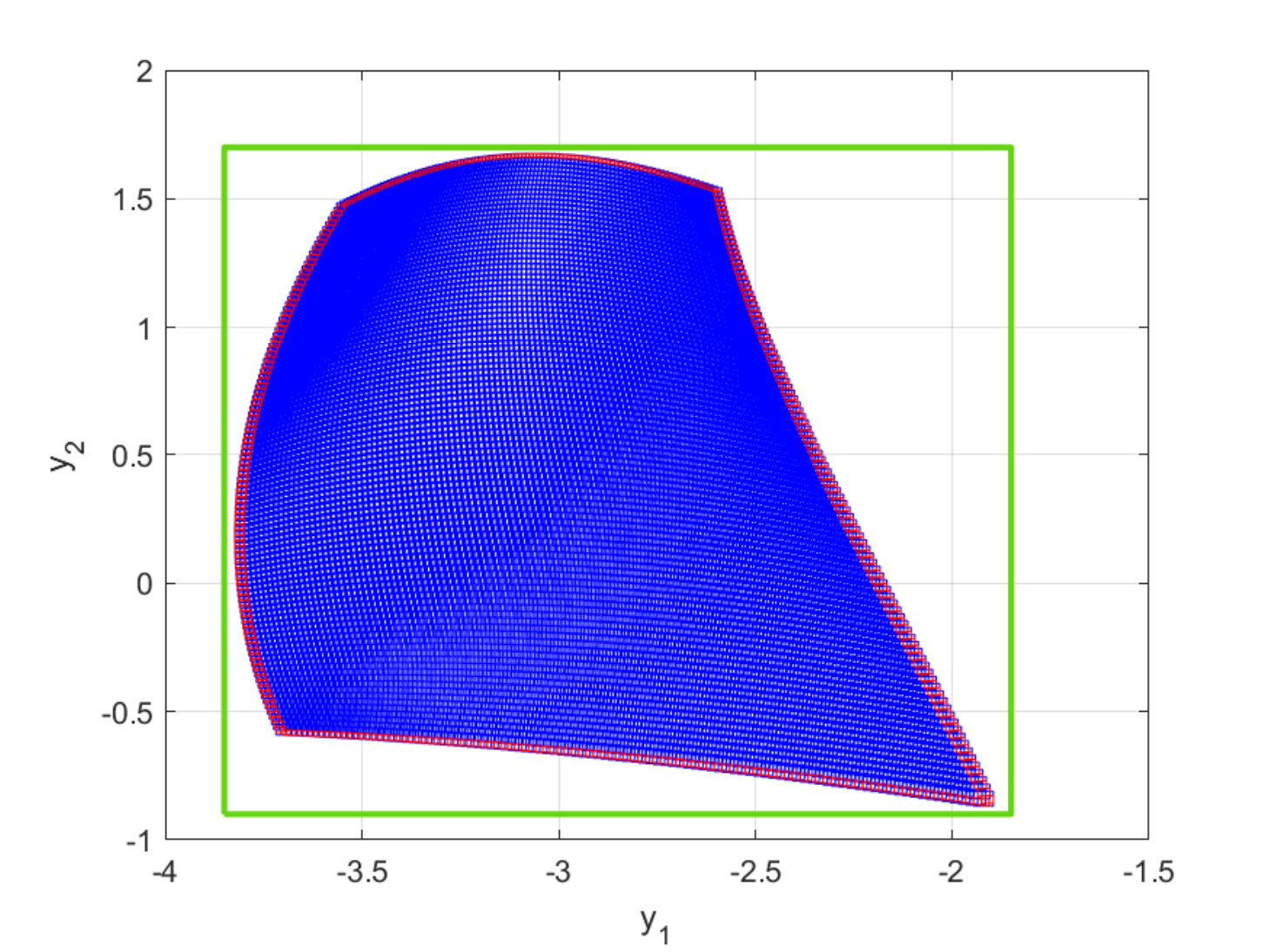}
\label{illu_eps11}
\end{minipage}%
}%
\centering
\caption{Illustrations on Example \ref{ex1}}
\label{ex1_figure}
\end{figure}


\oomit{\begin{lemma}{[\cite{behrmann2019invertible}, Theorem 1]}
Let $F_{\theta}:\mathbb{R}^{d} \rightarrow \mathbb{R}^{d}$ with $F_{\theta} = (F_{\theta}^{1} \circ \cdots \circ F_{\theta}^{T})$ denote a
ResNet \cite{he2016deep} with blocks $F_{\theta}^{t} = I + g\theta_t$. Then, the ResNet $F_{\theta}$ is invertible if {\rm{Lip}}$(g\theta_t) < 1$, for all $t = 1, \cdots, T$,
where {\rm{Lip}}$(g\theta_t)$ is the Lipschitz-constant of $g\theta_t$.
\label{ires}
\end{lemma}
}

\oomit{\begin{definition} 
A neural ODE is an evolution of vector $\bm{x}_t$ in time $t\in [1,2, \cdots, T]$, formulated the processing of the input vector $\bm{x}_0$ through a series of residual blocks.
\begin{gather*}
  \bm{x}_{t+1} -  \bm{x}_t = f_{\Theta}(\bm{x}_t, t)\\
  \frac{d\bm{x}_t}{dt}=\lim_{\delta_t\rightarrow 0}\frac{\bm{x}_{t+\delta_t}- \bm{x}_t}{\delta_t} = f_{\Theta}(\bm{x}_t, t)
\end{gather*}
and thus, the initial value problem (IVP)
$$\bm{x}_{T}=\bm{x}_{0}+ \int_{0}^{T}f_{\Theta}(\bm{x}_t, t)dt$$
involving an underlying neural network $f_{\Theta}(\bm{x}_t, t)$ with trainable parameters $\Theta$.
\label{nodes}
\end{definition}
}

\oomit{The invertibility of ResNets and the IVP involving  the neural ODEs provide a great insight into homeomorphism. }

\oomit{\begin{definition}
A homeomorphism $h:X\rightarrow Y$ is a function which is continuous, one-one, and onto, and which has continuous inverse.
\end{definition}
}

\oomit{For ResNets, if the sufficient condition in Theorem \ref{ires} holds, the ResNet mapping 
$F_{\theta}$ is invertible, it is easy to prove that $F_{\theta}$ is a homeomorphism.
For neural ODEs, if  $f_{\Theta}(\bm{x}_t, t)$ satisfies Lipschitz assumption, then there is a homeomorphism between the two topological spaces $\mathcal{X}_{in}$ and $\mathcal{X}_{out}$.
When the homeomorphism property holds on, a consequent sufficient condition for boundary analysis follows, as in Lemma \ref{bb}.}

\oomit{Combining above analysis, we conclude Theorem \ref{hbb}, which ensures the correctness of boundary propagation method in i-ResNets and neural ODEs, allowing the computation of only boundary propagation.

\begin{theorem}
For an i-ResNet or a neural ODE completing mapping $g:\mathbb{R}^{n}\rightarrow \mathbb{R}^{n}$, then $g$ is a homeomorphism and take $X \subseteq  \mathbb{R}^{n}$ as an input set and $Y =g(X)\subseteq  \mathbb{R}^{n}$ is an output set, then the image  of a boundary point $\bm{x}$, $\tilde{\bm{y}}=g(\bm{x})$, is a boundary point in Y and the image  of an interior point $\bm{x}$ is an interior point in Y.
\label{hbb}
\end{theorem}
}

\subsection{Safety Verification on Non-invertible NNs}\label{Sec-noninn}

\oomit{\textcolor{red}{Generally, an NN consists of three typical classes of layers: An input layer, that serves to pass the input vector to the network, hidden layers of computation neurons, and an output layer composed of at least a computation neuron to produce the output vector. The action of a neuron depends on its activation function, which is described as
\[y_i=f(\sum_{j=1}^n w_{ij}x_j+\theta_i),\]
where $x_j$ is the $j$th input of the $i$th neuron, $w_{ij}$ is the weight from the $j$th input to th $i$th neuron, $\theta_i$ is called the bias of the $i$th neuron, $y_i$ is the output of the $i$th neuron, and $f(\cdot)$ is the activation function. The activation function is a nonlinear function describing the reaction of $i$th neuron with inputs $x_j, j=1,\ldots,n$. Typical activation functions include rectified unit, logistic, tanh, exponential linear unit, and linear functions.  }}

\oomit{\textcolor{red}{An NN has multiple layers, each layer $l$, $1 \leq l\leq  L$, has $n_l$ neurons. In particular, layer $n_0$ is used to denote the input layer
and $n_0$ stands for the number of inputs,
and $n_L$ stands for the last layer, that is the output layer. For a
neuron $i$, $1 \leq i\leq n_l$ in layer $l$, the corresponding input vector is
denoted by $\bm{x}_l$ and the weight matrix is $\bm{W}_l = (w_{l1},\ldots,w_{ln_l})^{\top}$, where $w_{li}$ is the weight vector. The bias vector for layer $l$ is $\bm{\theta}_l=(\theta_{l1},\ldots, \theta_{ln_l})^{\top}$. The output vector of layer $l$ can be expressed as
\[\bm{y}_l=f_l(\bm{W}_l \bm{x}_l+\bm{\theta}_l),\]
where $f_l$ is the activation function for layer $l$.
For an NN, the output of $l-1$ layer is the input of $l$ layer. The
mapping from the input $\bm{x}_0$ of input layer to the output $\bm{y}_{L}$ of output
layer stands for the input–output relation of the NN, denoted by
\[\bm{y}_L=\bm{N}(\bm{x}_0),\]
where $\bm{N}(\cdot)=f_L\circ f_{L-1}\circ f_1(\cdot).$
}}

When an NN has the homeomorphism property, we can use Algorithm \ref{alg: iNNs} to address the safety verification problem in the sense of Definition \ref{safety}.  However, not all of NNs have such a nice property. In this subsection we extend the set-boundary reachability method to safety verification of non-invertible NNs, via analyzing the homeomorphism property of NNs with respect to subsets of the input set $\mathcal{X}_{in}$.
\begin{example}
\label{ex2}
Consider an NN from \cite{xiang2018reachability}, which has 2 inputs, 2 outputs and 1 hidden layer consisting of 7 neurons. The input set is $\mathcal{X}_{in}=[-1,1]^2$. The activation functions for the hidden layer and the output layer are $\mathtt{Tanh}$ and $\mathtt{Purelin}$ functions, respectively, whose  weight matrices and bias vectors can be found in Example 4.3 in \cite{xiang2018reachability}. For this neural network, the boundary of the output reachable set, i.e.,$\partial \mathcal{R}(\mathcal{X}_{in})$, is not included in the output reachable set of the input set's boundary $\mathcal{R}(\partial \mathcal{X}_{in})$. This statement is visualized in Fig. \ref{illu_eps2}.

\end{example}

Example \ref{ex2} presents us an NN, whose mapping does not admit the homeomorphism property with respect to the input set and the output reachable set. However, the NN may feature the homeomorphism property with respect to a subset of the input set. This is illustrated in Example \ref{ex3}.
\begin{example}
    \label{ex3}
    Consider the NN in Example \ref{ex2} again. We divide the input set $\mathcal{X}_{in}$ into $4\times 10^4$ small intervals of equal size and verify whether the NN is a homeomorphism with respect to each of them based on the use of interval arithmetic to determine the determinant of the  corresponding Jacobian matrix $\frac{\partial \bm{y}}{\partial \bm{x}_0}=\frac{\partial \bm{N}(\bm{x}_0)}{\partial \bm{x}_0}$. The blue region in Fig. \ref{illu_eps3} is the set of intervals, which features the NN with the homeomorphism property. The number of these intervals is 31473. For simplicity, we denote these intervals by $\mathcal{A}$. 

\end{example}

\begin{figure}[htbp]
\centering
\subfigure[\textcolor{blue}{$\mathcal{R}(\mathcal{X}_{in})$}; \textcolor{red}{$\mathcal{R}(\partial \mathcal{X}_{in})$}]{
\begin{minipage}[t]{0.32\linewidth}
\centering
\includegraphics[width =1.7in, height=1.2in]{ 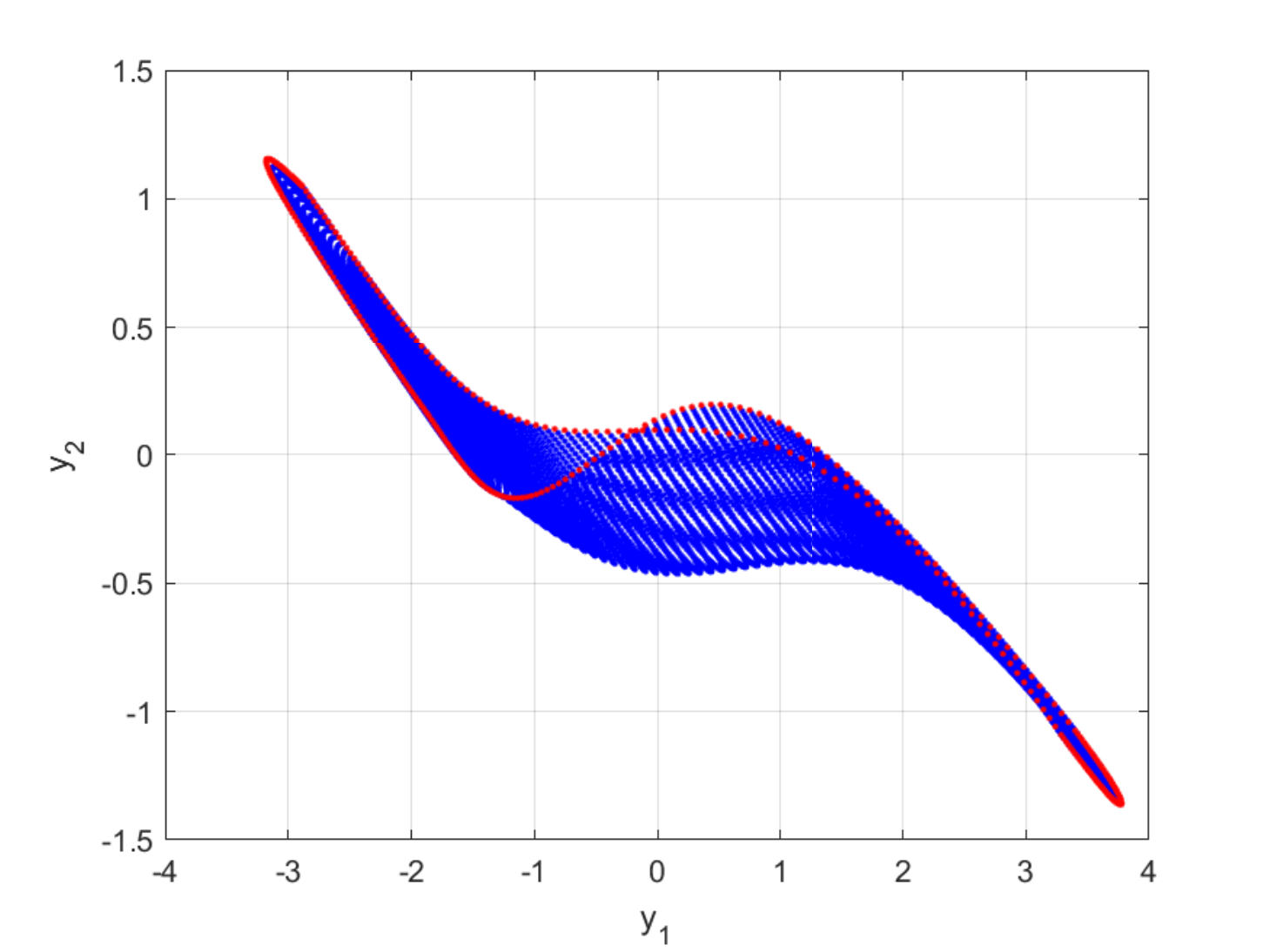}
\label{illu_eps2}
\end{minipage}%
}%
\subfigure[Set \textcolor{blue}{$\mathcal{A}$}]{
\begin{minipage}[t]{0.32\linewidth}
\centering
\includegraphics[width = 1.7in, height=1.2in]{ 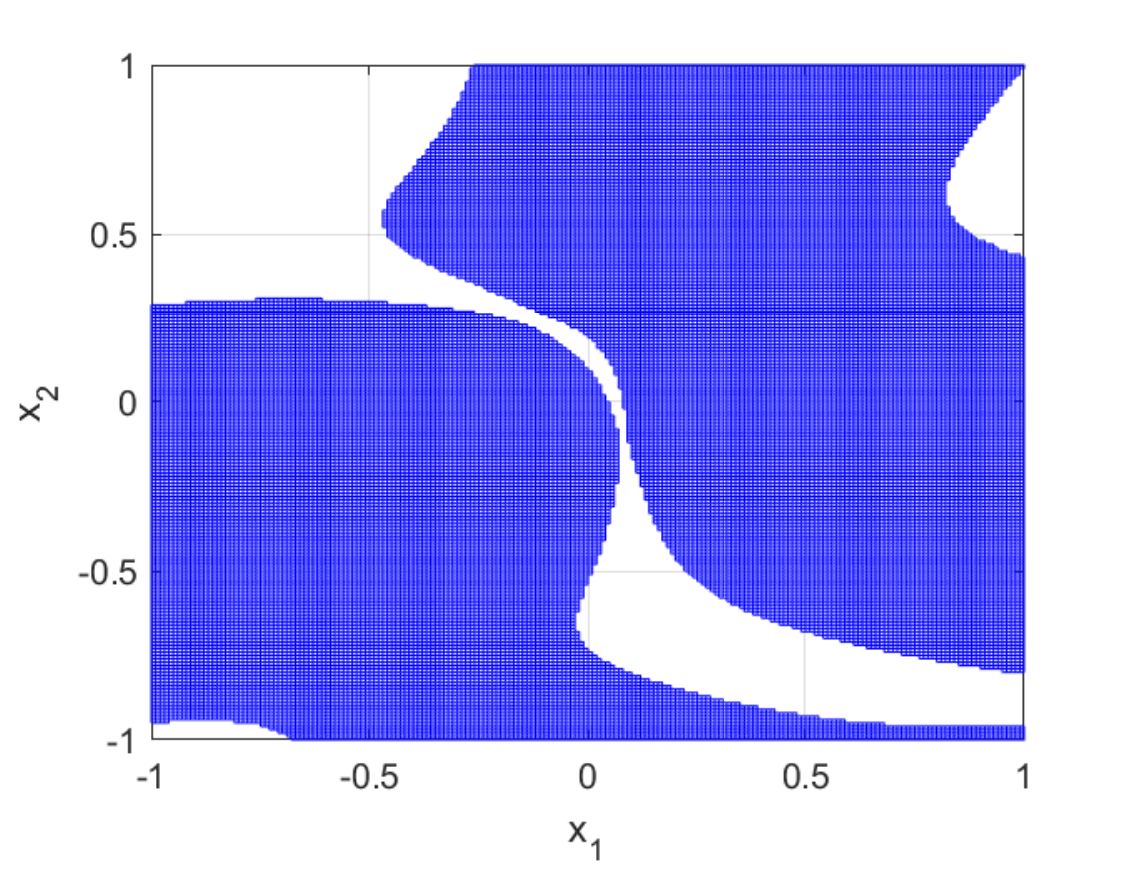}
\label{illu_eps3}
\end{minipage}%
}%
\subfigure[\textcolor{blue}{$\Omega(\mathcal{X}_{in})$}; \textcolor{red}{$\Omega(\overline{\mathcal{X}_{in}\setminus \mathcal{A}})$}]{
\begin{minipage}[t]{0.32\linewidth}
\centering
\includegraphics[width = 1.7in, height=1.2in]{ 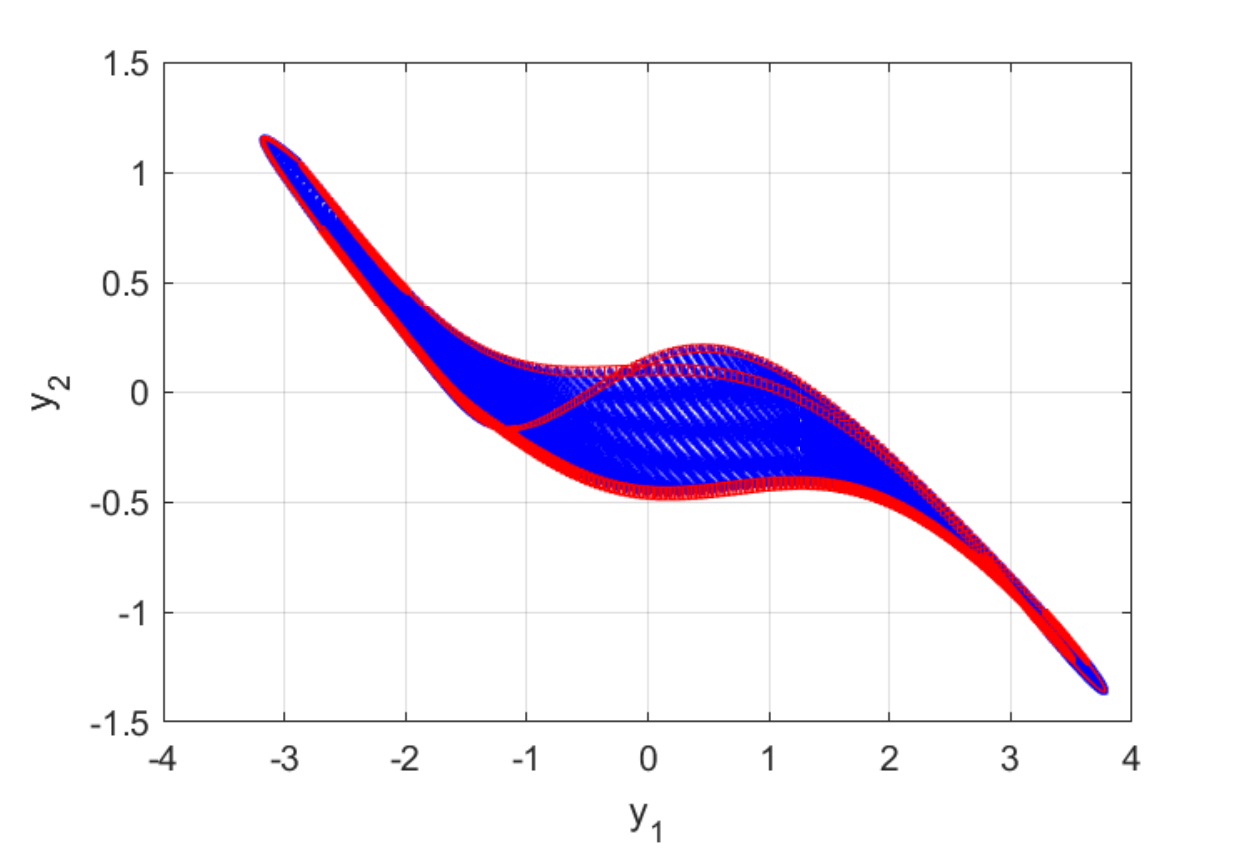}
\label{illu_eps4}
\end{minipage}%
}%
\centering
\caption{Illustrations on Example \ref{ex2}, \ref{ex3} and \ref{ex4}}
\label{ex234_figure}
\end{figure}

It is interesting to find that the safety verification in the sense of Definition \ref{safety} can be addressed by performing reachability analysis on a subset of the input set $\mathcal{X}_{in}$. This subset is obtained via removing subsets in the input set $\mathcal{X}_{in}$, which features the NN with the homeomorphism property.
\begin{theorem}
\label{wadiao}
Let $\mathcal{A}\subseteq \mathcal{X}_{in}$ and $\mathcal{A}\cap \partial \mathcal{X}_{in}=\emptyset$, and $\bm{N}(\cdot): \mathcal{A}\rightarrow \mathcal{R}(\mathcal{A})$ be a homeomorphism with respect to the input set $\mathcal{A}$. Then, if the output reachable set of the closure of the set $\mathcal{X}_{in}\setminus \mathcal{A}$ is a subset of the safe set $\mathcal{X}_s$, i.e., $\mathcal{R}(\overline{\mathcal{X}_{in}\setminus \mathcal{A}})\subseteq \mathcal{X}_{s}$, the safety property that $\forall \bm{x}_0\in \mathcal{X}_{in}. \ \bm{N}(\bm{x}_0)\in \mathcal{X}_s$
holds.
\end{theorem}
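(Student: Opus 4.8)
The plan is to split the output reachable set of $\mathcal{X}_{in}$ into the part coming from $\mathcal{A}$ and the part coming from its complement, and to treat the two pieces separately. Since taking images commutes with unions, $\mathcal{R}(\mathcal{X}_{in})=\mathcal{R}(\mathcal{A})\cup\mathcal{R}(\mathcal{X}_{in}\setminus\mathcal{A})$. The complementary piece is immediate: because $\mathcal{X}_{in}\setminus\mathcal{A}\subseteq\overline{\mathcal{X}_{in}\setminus\mathcal{A}}$ and $\mathcal{R}$ is monotone under set inclusion, the hypothesis gives $\mathcal{R}(\mathcal{X}_{in}\setminus\mathcal{A})\subseteq\mathcal{R}(\overline{\mathcal{X}_{in}\setminus\mathcal{A}})\subseteq\mathcal{X}_s$. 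Hence it remains only to show $\mathcal{R}(\mathcal{A})\subseteq\mathcal{X}_s$, and for this I would reuse exactly the reasoning behind Theorem \ref{sound} (i.e. Lemma 3 in \cite{xue2016reach}), but applied with $\mathcal{A}$ playing the role of the input set instead of $\mathcal{X}_{in}$.

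To invoke that lemma for the homeomorphism $\bm{N}(\cdot):\mathcal{A}\to\mathcal{R}(\mathcal{A})$, I need only verify that $\mathcal{R}(\partial\mathcal{A})\subseteq\mathcal{X}_s$. The key geometric observation is that $\partial\mathcal{A}\subseteq\overline{\mathcal{X}_{in}\setminus\mathcal{A}}$. Indeed, let $\bm{x}\in\partial\mathcal{A}$; as $\mathcal{A}$ is closed we have $\bm{x}\in\mathcal{A}\subseteq\mathcal{X}_{in}$, and since $\mathcal{A}\cap\partial\mathcal{X}_{in}=\emptyset$ it follows that $\bm{x}\notin\partial\mathcal{X}_{in}$, hence $\bm{x}\in\mathcal{X}_{in}^{\circ}$; therefore some ball $B(\bm{x},r)$ is contained in $\mathcal{X}_{in}$, while $\bm{x}\in\partial\mathcal{A}$ forces every ball $B(\bm{x},\varepsilon)$ with $0<\varepsilon\le r$ to contain a point outside $\mathcal{A}$, which then lies in $\mathcal{X}_{in}\setminus\mathcal{A}$; letting $\varepsilon\to 0$ yields $\bm{x}\in\overline{\mathcal{X}_{in}\setminus\mathcal{A}}$. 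By monotonicity of $\mathcal{R}$ and the hypothesis, $\mathcal{R}(\partial\mathcal{A})\subseteq\mathcal{R}(\overline{\mathcal{X}_{in}\setminus\mathcal{A}})\subseteq\mathcal{X}_s$. Now $\mathcal{A}$ is compact, so $\mathcal{R}(\mathcal{A})$ is compact (continuous image of a compact set) and the boundary-to-boundary property of homeomorphisms gives $\partial\mathcal{R}(\mathcal{A})=\bm{N}(\partial\mathcal{A})\subseteq\mathcal{X}_s$; since $\mathcal{X}_s$ is simply connected, Lemma 3 in \cite{xue2016reach} then delivers $\mathcal{R}(\mathcal{A})\subseteq\mathcal{X}_s$. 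Combining with the complementary piece, $\mathcal{R}(\mathcal{X}_{in})\subseteq\mathcal{X}_s$, which is the safety property.

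The step that needs the most care is the invocation of the boundary-to-boundary proposition and of Lemma 3 in \cite{xue2016reach} for $\mathcal{A}$: both require $\mathcal{A}$ to be compact, so that $\mathcal{R}(\mathcal{A})$ is compact and interiors/boundaries are mapped to interiors/boundaries. This is harmless in practice — in the construction used in Example \ref{ex3}, $\mathcal{A}$ is a finite union of closed boxes and hence compact — but it should be recorded as a standing assumption on $\mathcal{A}$ (which is then automatically bounded as a subset of the compact set $\mathcal{X}_{in}$). The simple-connectedness of $\mathcal{X}_s$ is genuinely essential in the last step: it is what rules out $\mathcal{R}(\mathcal{A})$ enclosing a "hole" of $\mathcal{X}_s$ while keeping its boundary inside $\mathcal{X}_s$, and it cannot be dropped. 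Everything else — monotonicity of $\mathcal{R}$, commuting $\bm{N}$ with unions, and $\overline{\mathcal{X}_{in}\setminus\mathcal{A}}\subseteq\mathcal{X}_{in}$ since $\mathcal{X}_{in}$ is closed — is routine.
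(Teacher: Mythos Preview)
Your proof is correct and follows essentially the same approach as the paper: split $\mathcal{X}_{in}$ into $\mathcal{A}$ and its complement, observe that $\partial\mathcal{A}\subseteq\overline{\mathcal{X}_{in}\setminus\mathcal{A}}$ (using $\mathcal{A}\subseteq\mathcal{X}_{in}^{\circ}$), and then invoke Theorem~\ref{sound} (Lemma~3 in \cite{xue2016reach}) with $\mathcal{A}$ in place of $\mathcal{X}_{in}$. Your write-up is in fact more careful than the paper's, since you explicitly flag the compactness of $\mathcal{A}$ needed for the boundary-to-boundary step and spell out the $\partial\mathcal{A}\subseteq\overline{\mathcal{X}_{in}\setminus\mathcal{A}}$ inclusion, both of which the paper leaves implicit.
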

\begin{proof}
Obviously, if $\mathcal{R}(\mathcal{A})\subseteq \mathcal{X}_{s}$ and $\mathcal{R}(\overline{\mathcal{X}_{in}\setminus \mathcal{A}})\subseteq \mathcal{X}_s$, the safety property that $\forall \bm{x}_0\in \mathcal{X}_{in}.\  \bm{N}(\bm{x}_0)\in \mathcal{X}_s$
holds.

According to Theorem \ref{sound}, we have that if $\mathcal{R}(\partial \mathcal{A}) \subseteq \mathcal{X}_{in}$, 
the safety property that $\forall \bm{x}_0\in \mathcal{A}.\  \bm{N}(\bm{x}_0)\in \mathcal{X}_s$ holds. 

According to the condition that $\mathcal{A}\subseteq \mathcal{X}_{in}$ and $\mathcal{A}\cap \partial \mathcal{X}_{in}=\emptyset$, we have that $\mathcal{A} \subseteq \mathcal{X}_{in}^{\circ}$ and thus
$\partial \mathcal{A}\subseteq \overline{\mathcal{X}_{in}\setminus \mathcal{A}}$. Therefore,  $\mathcal{R}(\overline{\mathcal{X}_{in}\setminus \mathcal{A}})\subseteq \mathcal{X}_s$ implies that $\forall \bm{x}_0\in \mathcal{X}_{in}. \ \bm{N}(\bm{x}_0)\in \mathcal{X}_s$.
The proof is completed.
\end{proof}

Theorem \ref{wadiao} tells us that it is still possible to use a subset of the input set for addressing the problem in Definition \ref{safety}, even if the given NN is not a homeomorphism with respect to $\mathcal{X}_{in}$. This is shown in Example \ref{ex4}.
\begin{example}
    \label{ex4}
    Consider the situation in Example \ref{ex3} again. If the entire input set is used for computations, all of $4\times 10^4$ small intervals participate in calculations. However, 
    Theorem \ref{wadiao} tells us that only 9071 intervals (i.e., subset $\overline{\mathcal{X}_{in}\setminus \mathcal{A}}$) are needed, which is much smaller than $4\times 10^4$. The computation results based on interval arithmetic are illustrated in Fig. \ref{illu_eps4}. It is noting that $9071$ intervals rather than $8527(=4\times 10^4 -31473)$ intervals are used since some intervals, which have non-empty intersection with the boundary of the input set $\mathcal{X}_{in}$ (since Theorem \ref{wadiao} requires $\mathcal{A}\cap \partial \mathcal{X}_{in} = \emptyset$), should participate in calculations.

\end{example}

\begin{remark}
According to Theorem \ref{wadiao}, we can also observe that the boundary of the output reachable set $\mathcal{R}(\mathcal{X}_{in})$ is included in the output reachable set of the input set $\overline{\mathcal{X}_{in}\setminus \mathcal{A}}$, i.e., $\partial \mathcal{R}(\mathcal{X}_{in}) \subseteq \mathcal{R}(\overline{\mathcal{X}_{in}\setminus \mathcal{A}})$. This can also be visualized in Fig. \ref{illu_eps4}. Consequently, this observation may open new research directions of addressing various problems of NNs \cite{ghorbani2019interpretation}.  For instance, it may facilitate the generation of adversarial examples, which are inputs causing the NN to falsify the safety property, and the characterization of decision boundaries of NNs, which are a surface that separates data points belonging to different class labels.  
\end{remark}

Therefore, we arrive at an algorithm for safety verification of non-invertible NNs, which is formulated in Algorithm \ref{alg: iNNs1}.
\begin{algorithm}[tb]
\caption{Safety Verification Framework for Non-Invertible NNs}
\label{alg: iNNs1}
\textbf{Input}: a non-invertible NN $\bm{N}(\cdot): \mathbb{R}^n \rightarrow \mathbb{R}^n$, an input set $\mathcal{X}_{in}$ and a safe set $\mathcal{X}_s$.\\
\textbf{Output}: \textbf{Safe} or \textbf{Unknown}.
\begin{algorithmic}[1] 
\STATE determine a subset $\mathcal{A}$ of the set $\mathcal{X}_{in}$ such that $\bm{N}(\cdot): \mathbb{R}^n \rightarrow \mathbb{R}^n$ is a homeomorphism with respect to it;
\STATE apply existing methods to compute an over-approximation $\Omega(\overline{\mathcal{X}_{in}\setminus  \mathcal{A}})$;
\IF {$\Omega(\overline{\mathcal{X}_{in}\setminus  \mathcal{A}})\subseteq \mathcal{X}_s$} 
\STATE return \textbf{Safe}
\ELSE
\STATE return \textbf{Unknown}
\ENDIF
\end{algorithmic}
\end{algorithm}

\begin{theorem}[Soundness]
    \label{sound1}
If Algorithm \ref{alg: iNNs1} returns \textbf{\em Safe},  the safety property that $\forall \bm{x}_0\in \mathcal{X}_{in}. \ \bm{N}(\bm{x}_0) \in \mathcal{X}_s$
holds.    
\end{theorem}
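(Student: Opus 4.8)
The plan is to obtain the statement as an almost immediate corollary of Theorem~\ref{wadiao}, reading Algorithm~\ref{alg: iNNs1} as a direct implementation of that theorem's hypotheses. Assume Algorithm~\ref{alg: iNNs1} returns \textbf{Safe}. By the test in line~3, this can only happen when the computed over-approximation satisfies $\Omega(\overline{\mathcal{X}_{in}\setminus\mathcal{A}})\subseteq\mathcal{X}_s$, where $\mathcal{A}$ is the subset produced in line~1.

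From there I would assemble the two ingredients needed to invoke Theorem~\ref{wadiao}. First, since $\Omega(\overline{\mathcal{X}_{in}\setminus\mathcal{A}})$ is by construction an over-approximation of the corresponding output reachable set, we have $\mathcal{R}(\overline{\mathcal{X}_{in}\setminus\mathcal{A}})\subseteq\Omega(\overline{\mathcal{X}_{in}\setminus\mathcal{A}})\subseteq\mathcal{X}_s$; this supplies the reachability hypothesis of Theorem~\ref{wadiao}. Second, line~1 returns $\mathcal{A}\subseteq\mathcal{X}_{in}$ with $\mathcal{A}\cap\partial\mathcal{X}_{in}=\emptyset$ and with $\bm{N}$ restricting to a homeomorphism from $\mathcal{A}$ onto $\mathcal{R}(\mathcal{A})$ — precisely the structural hypothesis of Theorem~\ref{wadiao}. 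Applying Theorem~\ref{wadiao} then yields $\forall\bm{x}_0\in\mathcal{X}_{in}.\ \bm{N}(\bm{x}_0)\in\mathcal{X}_s$, i.e.\ the safety property in the sense of Definition~\ref{safety1}, and the argument is complete.

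The one point that warrants care — and the place where the gap between the informal description of line~1 and the precise hypotheses of Theorem~\ref{wadiao} could bite — is the disjointness requirement $\mathcal{A}\cap\partial\mathcal{X}_{in}=\emptyset$. I would handle it by stipulating that line~1 always selects $\mathcal{A}$ away from $\partial\mathcal{X}_{in}$; this costs nothing, because the restriction of a homeomorphism to any subset of its domain is still a homeomorphism onto its image, so any homeomorphism region identified in practice can be shrunk to lie in $\mathcal{X}_{in}^{\circ}$, with the discarded boundary-adjacent portion absorbed into $\overline{\mathcal{X}_{in}\setminus\mathcal{A}}$ and left to the over-approximation $\Omega$. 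This is exactly the accounting already seen in Example~\ref{ex4}, where the boundary-touching cells are kept in the complement set. Once this convention is fixed, the hypotheses of Theorem~\ref{wadiao} hold verbatim and nothing further is required; I do not anticipate any substantive obstacle beyond making this bookkeeping explicit.
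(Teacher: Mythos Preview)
Your proposal is correct and follows the same route as the paper, which simply cites Theorem~\ref{wadiao}. Your additional care about the condition $\mathcal{A}\cap\partial\mathcal{X}_{in}=\emptyset$ is well placed and matches the convention implicitly used in Example~\ref{ex4}.
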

\begin{proof}
This conclusion can be assured by Theorem \ref{wadiao}.
\end{proof}

\begin{remark}
    The set-boundary reachability method can also be applied to intermediate layers in a given NN, rather than just the input and output layers. Suppose that there exists a sub-NN $\bm{N}'(\cdot):\mathbb{R}^{n'}\rightarrow \mathbb{R}^{n'}$, which maps the input of the $l$-th layer to the output of the $k$-th layer, in the given NNs, and its input set is $\mathcal{X}'_{in}$ which is an over-approximation of the output reachable set of the $(l-1)$-th layer. If $\bm{N}'(\cdot): \mathbb{R}^{n'}\rightarrow \mathbb{R}^{n'}$ is a homeomorphism with respect to $\mathcal{X}'_{in}$, we can use $\partial \mathcal{X}'_{in}$ to compute an over-approximation $\Omega'(\partial \mathcal{X}'_{in})$ of the output reachable set $\{\bm{y}\mid \bm{y}=\bm{N}'(\bm{x}_0),\bm{x}_0\in \partial \mathcal{X}'_{in}\}$; otherwise, we can apply Theorem \ref{wadiao} and compute an over-approximation $\Omega'(\overline{\mathcal{X}'_{in}\setminus \mathcal{A}})$ of the output reachable set $\{\bm{y}\mid \bm{y}=\bm{N}'(\bm{x}_0),\bm{x}_0\in \overline{\mathcal{X}'_{in}\setminus \mathcal{A}}\}$. In case that the $k$-th layer is not the output layer of the NN, we need to construct a simply connected set, like convex polytope, zonotope or interval, to cover $\Omega'(\partial \mathcal{X}'_{in})$ or $\Omega'(\overline{\mathcal{X}'_{in}\setminus \mathcal{A}})$ for the subsequent layer-by-layer propagation. This set is an over-approximation of the output reachable set of the $k$-th layer, according to Lemma 1 in \cite{xue2016under}.
\end{remark}

\begin{remark}
    Any existing over-approximate  reachability methods such as interval arithmetic- \cite{wang2018efficient}, zonotopes- \cite{singh2018fast}, star sets \cite{tran2019star} based methods, which are suitable for given NNs, can be used to compute the involved over-approximations, i.e., $\Omega(\partial \mathcal{X}_{in})$ and $\Omega(\overline{\mathcal{X}_{in}\setminus \mathcal{A}})$, in Algorithm \ref{alg: iNNs} and \ref{alg: iNNs1}.
\end{remark}

\oomit{
\begin{lemma}
Given an NN without hidden layer, which has $n$ inputs and $m$ outputs, and the activation functions for the output layer are strictly monotonic. Then, if $n\geq m$, for any input set $\mathcal{X}_{in}\subseteq \mathbb{R}^n$, the boundary of the output reachable set of the input set is included in the output set of the input set's boundary, i.e,
\[\partial \mathcal{R}(\mathcal{X}_{in}) \subseteq \mathcal{R}(\partial \mathcal{X}_{in}).\]
\end{lemma}}

\oomit{
\begin{proposition}
Let $\widehat{\mathcal{X}}_{in}\supseteq \mathcal{X}_{in}$ and $\partial \mathcal{R}(\widehat{\mathcal{X}}_{in}) \subseteq \mathcal{R}(\partial \widehat{\mathcal{X}}_{in})$. If 
\[\mathcal{R}(\partial \widehat{\mathcal{X}}_{in})\subseteq \mathcal{X}_s,\]
then \[\forall \bm{x}_0\in \mathcal{X}_{in}. \bm{N}(\bm{x}_0) \in \mathcal{X}_s\]
holds.
\end{proposition}
}

\oomit{Similar to the homeomorphism, to deal with general NNs, we also find an appealing property hidden in them. A general neural network is composed of a sequence of network layers and each layer contains an affine transformation following by an activation function. Taking a general neural network $\mathbb{N}$ with $l+1$ layers, the function $g$ of $\mathbb{N}$ can be represented by the composition of the transformation of each layer, i.e.,
\begin{equation}
\begin{aligned}
    g(\bm{x}) &= g_l\circ g_{l-1} \circ \cdots \circ g_{2} \circ g_{1}(\bm{x}) \\
    &=\sigma_{l}(\mathbf{W}_{l}\cdots \sigma_{2}(\mathbf{W}_2\sigma_1(\mathbf{W}_1 \bm{x} + \bm{b}_1)+\bm{b}_2)\cdots +\bm{b}_{l})
\end{aligned}
     \label{nn}
\end{equation}
where $\mathbf{W}_i$ and $\bm{b}_i$, $1 \leq i \leq l$, stands for the weight matrix and bias in the adjacent $i$ and $i+1$ layers. $\sigma_i$ represents the corresponding activation function. Moreover, $\sigma_i$ is an element-wise operator, can be treated as a lifted operator $\sigma_i = \sigma_{i}^{d} \circ \sigma_{i}^{d-1} \circ \cdots \circ \sigma_{i}^{2} \circ \sigma_{i}^{1}$, where $d$ is the size of the $i-$th layer and $\sigma_{i}^{j}$ imposes the activation function only on the $j-$th dimension.}

\oomit{In this subsection, we mainly utilize the {\it{ open mapping theorem}}  to illustrate the correctness of boundary analysis of a neural network given in Eq.\ref{nn}, relaxing the homeomorphism property  to a more loose condition, even a naturally existing condition. First of all, it is necessary to gain some knowledge of open mapping and open mapping theorem, which are the important theoretical basis of boundary analysis in the reachable set computation of  NNs.
\begin{lemma}\label{L1}
Suppose $f:\mathbb{R}\rightarrow \mathbb{R}$ is a strictly monotonic and continuous mapping, then $f$ is an open mapping.
\end{lemma}
\begin{proof}[proof]
Without loss of generality, suppose $f$ is strictly increasing mapping, i.e., $\forall x_1> x_2, f(x_1) > f(x_2)$, obviously $f$ is injective. Let $Y=f(\mathbb{R})$, then $f:\mathbb{R}\rightarrow Y$ is bijective and the inverse $f^{-1}: Y \rightarrow \mathbb{R}$ exists. It is easy to prove $f^{-1}$~ is continuous with the “epsilon-delta” definition of continuity, thus $f$ is an open mapping.
\end{proof}
}

\oomit{\begin{lemma}\label{L2}
 (Open mapping theorem). Let X and Y be Banach spaces, and let T be a bounded linear map from X onto Y . Then T maps open sets in X onto open sets in Y.
\end{lemma}
}

\oomit{Having a deep insight into the open mapping, it guarantees that the preimage of output boundary lies in the input boundary, as clarified in Lemma \ref{L3}. 
\begin{lemma}\label{L3}
Let $A$ and $B$ be topological spaces, $f:A \rightarrow B$ is an open map, $X \subset A$ and $Y \subset B, Y=f(X)$, then an inverse
image $x \in \{x|f(x)=y, x \in X\}$ of a boundary point $y \in Y$ is the boundary point in X.
\end{lemma}
\begin{proof}[proof]
Let $y$ be a boundary point in $Y$, $x$ is an inverse image of $y$. Assuming that $x$ is an interior point of $X$, then exist an open set $O_{X} \subset X$ such that $x \in O_{X}$, because $f$ is an open map, $O_{Y}=f(O_{X})$ is also an open set in $Y$, $y \in O_{Y} \subset Y$, it infers that $y$ is an interior point in $Y$, which is contradicted that $y$ is a boundary point in $Y$.
\end{proof}
}

\oomit{Different from the homeomorphism property described above, ensuring the exact computation from input boundary to output boundary, the boundary computation of open mapping seems to be redundant at some cases, for a boundary point may be mapped to an interior point with an open mapping. Whereas, the computation based on open mapping still is sound and the redundancy can be ignored compared with the computation on the entire set. Last but not least, the open mapping condition is much easier to hold on in general NNs.}

\oomit{\begin{theorem}\label{T1}
For a neural network layer with $d$ neurons,  we take the mapping $h: \mathbb{R}^{m} \rightarrow \mathbb{R}^{n}$, with input $\bm{x} \in \mathbb{R}^{m}$, output $\bm{y} \in \mathbb{R}^{n}$ to complete the layer function, i.e., $\bm{y}=h(\bm{x})=\sigma(\mathbf{W}\bm{x}+\mathbf{b})$, where $\mathbf{W} \in \mathbb{R}^{n \times m}$, $\mathbf{b} \in \mathbb{R}^{n}$, and the activation function $\sigma = \sigma^{d} \circ \sigma^{d-1} \circ \cdots \circ \sigma^{2} \circ \sigma^{1}:\mathbb{R}^{n} \rightarrow \mathbb{R}^{n} $ where  $\sigma^{j} (1 \leq j\leq d): \mathbb{R} \rightarrow \mathbb{R}$ is a strictly monotonic and continuous mapping. Suppose $X$ is subspace of $\mathbb{R}^{m}$ which is a Banach space, $Y = h(X)$ is subspace of $\mathbb{R}^{n}$, and then $h: X \rightarrow Y$ is an open mapping.
\end{theorem}
\begin{proof}[proof]
For the linear map $L: \mathbb{R}^{m} \rightarrow \mathbb{R}^{n}$, $\bm{y}=L(\bm{x})=\mathbf{W}\bm{x}$, then $L$ is a bounded linear map from Banach space $X$ onto $Z$, where $Z = L(X) \subset \mathbb{R}^{n}$, with open mapping theorem (Lemma \ref{L2}), $L$ is an open map. Obviously the mapping $P: \bm{y}\mapsto \bm{y} + \mathbf{b}$ is an open mapping. For the activation function $\sigma$ is invertible and $
\sigma^{-1}=({\sigma^{d}}^{-1} \circ {\sigma^{d-1}}^{-1} \circ \cdots \circ {\sigma^{2}}^{-1} \circ {\sigma^{1}}^{-1})$ is continuous with the guarantee of Lemma \ref{L1}, then the mapping $F: Z \rightarrow \mathbb{R}^n, \bm{y}_{i} = \sigma^{i}(\bm{z}_{i}), \bm{y}=F(\bm{z})$ is an open mapping. Thus $h=F \circ P \circ L = \sigma (\mathbf{Wx}+\bm{b})$ is an open mapping.
\end{proof}
}


\oomit{\begin{theorem}\label{T2}
For a full-connected neural network with $l$ layers, it completes the mapping $g: \mathbb{R}^{m} \rightarrow \mathbb{R}^{n}$, with input $\bm{x} \in \mathbb{R}^{m}$, output $\bm{y} \in \mathbb{R}^{n}$. Then $\bm{y}=g(\bm{x})=g_{l} \circ \dots \circ g_{2}\circ g_{1}(x)$, where $g_{i}=\sigma(\mathbf{W}_{i}\bm{x} + \mathbf{b}_{i}), i=1,2,\dots ,l$, is a neural network layer mapping,  activation function $\sigma^{j}: \mathbb{R} \rightarrow \mathbb{R}$ is a strictly monotonic and continuous mapping and $\sigma: \mathbb{R}^{n} \rightarrow \mathbb{R}^{n}$. Take $X \subseteq  \mathbb{R}^{m}$ as an input set and $Y =g(X)\subseteq  \mathbb{R}^{n}$ is an output set, then the inverse image  of a boundary point $\bm{y} \in Y$, $\tilde{\bm{x}}\in \{\bm{x}|g(\bm{x})=\bm{y}, \bm{x} \in X\}$, is the boundary point in X.
\end{theorem}
\begin{proof}[proof]
 With Theorem \ref{T1}, declaring every layer $h_{i}$ is an open mapping,  then $g|_{X}: X \rightarrow Y$ is an open mapping, with Lemma \ref{L3}, an inverse image of a boundary point $y \in Y$ defined as $\tilde{\bm{x}} \in \{\bm{x}|g(\bm{x})=\bm{y}, \bm{x} \in X\}$ is an boundary point of input set X.
\end{proof}}

\oomit{With the guarantee of Theorem \ref{T2}, the reachable set computation of general NNs based on boundary analysis is theoretically ensured. Moreover, it can be seen that the Theorem \ref{T1} and Theorem \ref{T2} are widely satisfied in most of the NNs, except the network with activation function like ReLU (Rectified linear unit), for ReLU is not a strictly monotonic and continuous function. Further, there are some variants of ReLU, like LeakyReLU \cite{maas2013rectifier} and ELU(Exponential Linear Unit) \cite{clevert2015fast}, satisfy the open mapping requirement.}

\oomit{\noindent\textbf{Remark.} As a matter of fact, the neural network in RSCBA can be an i-ResNet, neural ODEs (Theorem \ref{hbb}) or a general neural network (Theorem \ref{T2}) so far, even the composition of these types, like the GNODE defined recently \cite{manzanas2022reachability}.}

\oomit{For existing solutions of reachable sets on different abstract domains, the error accumulates as the computation goes, though various methods proposed to refine the computation between two layers. Instead of propagating the entire set from layer to layer, we compute reachable sets via propagating the set boundary from layer to layer. Different from  Eq.\ref{frs}, we reformulate the problem as Eq.\ref{refor}.
\begin{equation}
 \begin{aligned}
 \forall x \in \mathcal{X}_{in}, g(x)\in {\rm{CVX}}(\mathcal{B}_{out})\\
 \mathcal{B}_{out} = \mathop{\cup}\limits_{x\in {\rm{B}}(\mathcal{X}_{in})} g(x) 
 \end{aligned}
 \label{refor}
\end{equation}
where function {\it{B()}} and {\it{CVX()}} stands for the computation of boundary and convex hull algorithms respectively, which can be implemented by  quickhull algorithms\cite{1993The}. It is called {\textit{Boundary Propagation Method}}. On one hand, it can be seen that the soundness of reachable set computation still holds on, even though only boundaries are utilized in the process. On the other hand, the approximation of set boundaries generally performs better than that of the set, i.e., there would be less error accumulated in the forward propagation. }

\oomit{With the problem reformulation in Eq.\ref{refor},  all the existing algorithms of reachable set computation can be integrated with boundary analysis, compatible with any abstract domains.  The whole computation framework is displayed in Algorithm \ref{alg: framework}, termed reachable set computation framework based on boundary analysis (RSCBA, for short). There are two cases in the framework, according to the dimensions of two adjacent network layers (Line \ref{case}), and the reason will be explained later. For the case of mapping a higher dimension to a lower one,  we begin the forward propagation with the boundary of the input set (Line \ref{Line 1}) and return the convex hull of the propagation result (Line \ref{Line 3}) for the next iteration. For the other case, it works as the network verification tool $T$, propagating the entire set. The propagation computation on boundary are the same as that of sets (Line \ref{Line 2} and Line \ref{set}), which means only small modification needed for the existing verification tool $T$.}

\oomit{\noindent\textbf{Remark.} Notably, since the input set is generally regular and simple, its boundary computation is easy to complete. Moreover, on most cases, the boundary of reachable set is enough, meaning the convex hull computation is redundant, such as the verification of robustness, existence of adversarial examples and so on. Here, the purpose that we write in this style  to meet the goal of {\textit{set computing}}.}

\oomit{\begin{algorithm}[tb]
\caption{(RSCBA) Reachable set computation framework based on boundary analysis}
\label{alg: framework}
\textbf{Input}: neural network $\mathbb{N}$ with $l+1$ layers, input set $\mathcal{X}_{in}$, verification tool $T$.\\
\textbf{Output}: reachable set $\widehat{\mathcal{X}_{out}}.$
\begin{algorithmic}[1] 
\FOR{$i=1$ to $l$}
\IF {dim($\mathbb{N},i$) $\geq$ dim($\mathbb{N},i+1$)} \label{case}
\STATE $\mathcal{B}_{in}=$B$(\mathcal{X}_{in})$. \label{Line 1}
\STATE $\mathcal{B}_{out}=T(\mathbb{N}, \mathcal{B}_{in})$.\label{Line 2}
\STATE $\widehat{\mathcal{X}_{out}}=$CVX$(\mathcal{B}_{out})$.\label{Line 3}
\ELSE
\STATE $\widehat{\mathcal{X}_{out}}=T(\mathbb{N}, \mathcal{X}_{in})$.\label{set}
\ENDIF
\STATE $\mathcal{X}_{in}=\widehat{\mathcal{X}_{out}}$.
\ENDFOR
\STATE \textbf{return} $\widehat{\mathcal{X}_{out}}$
\end{algorithmic}
\end{algorithm}
}

\oomit{However, the problem that how about the feasibility of boundary analysis still is doubtful so far. In the following two subsections, we will address this problem from both particular network type (i.e., i-ResNet and neural ODEs) and general NNs, such as fully-connected NNs (FNN) \cite{bishop2006pattern}, convolution NNs (CNN) \cite{fukushima1981neocognitron, krizhevsky2012imagenet} and so on. 
}

\section{Experiment}
In this section, several examples of NNs are used to demonstrate the performance of the proposed set-boundary reachability method for safety verification. Experiments are conducted on invertible NNs and non-invertible ones respectively. Recall that the proposed set-boundary method is applicable for any reachability analysis algorithm based on set representation, resulting in tighter and verifiable over-approximations when existing approaches fail.  Thus, we compare  the set-boundary method versus the entire set one on some existing reachability tools in terms of efficiency.

\noindent\textbf{Experiment Setting.} All the experiments herein are run on MATLAB 2021a with Intel (R) Core (TM) i7-10750H CPU@2.60 GHz and RAM 16 GB. The codes and models are available from \url{https://github.com/laode2022/BoundaryNN}.

\subsection{Experiments on Invertible NNs}
In this subsection, we carry out some examples involving neural ODEs and invertible feedforward neural networks.

\begin{figure}[htbp]
\centering
\oomit{\subfigure[Reachable Sets]{
\begin{minipage}[t]{0.5\linewidth}
\centering
\includegraphics[width=1in]{ 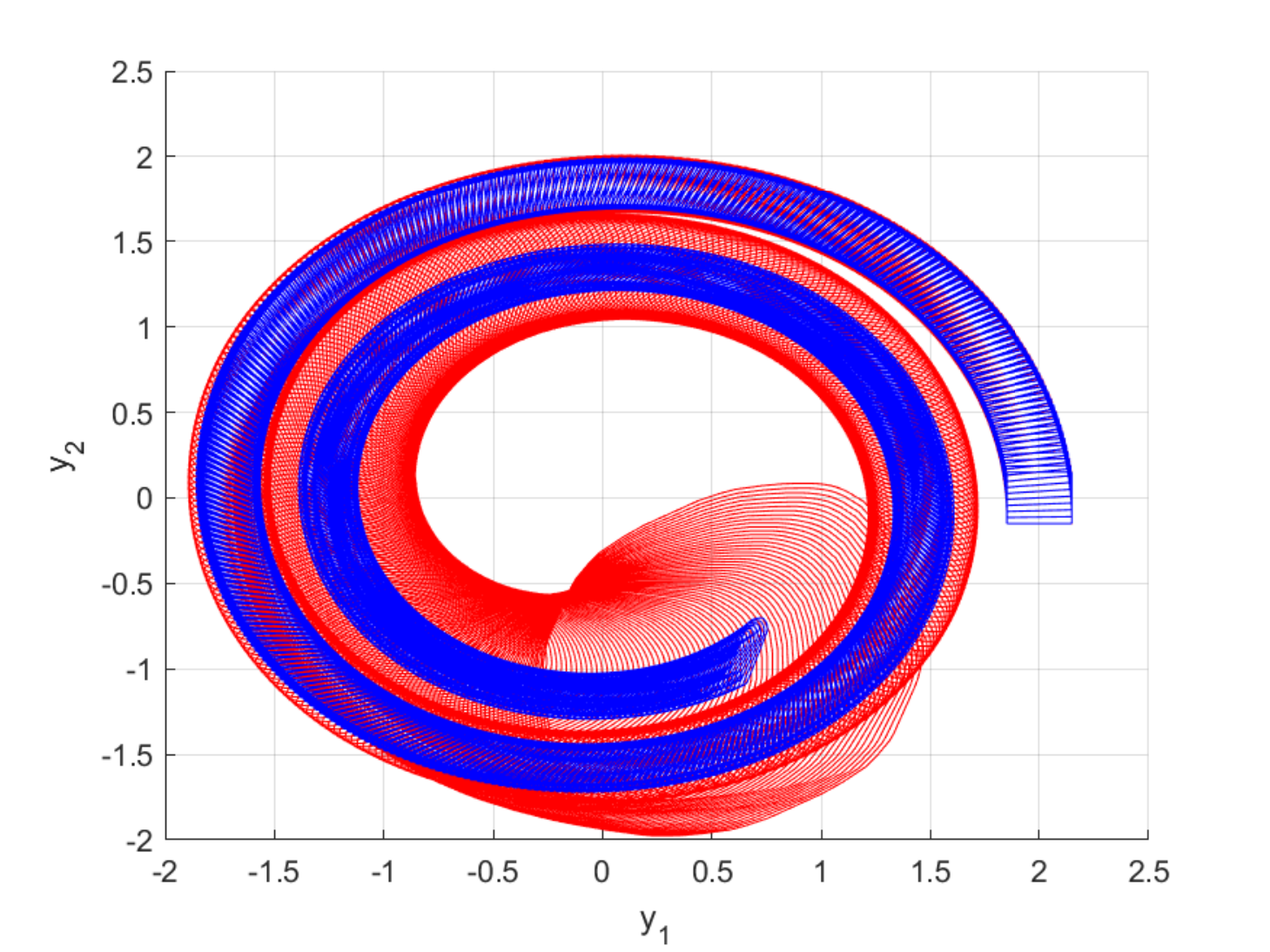}
\label{spiral_non}
\end{minipage}%
}%
\subfigure[Verification:  \textcolor{blue}{Safe}, \textcolor{red}{Unknown}]{
\begin{minipage}[t]{0.5\linewidth}
\centering
\includegraphics[width=1in]{ 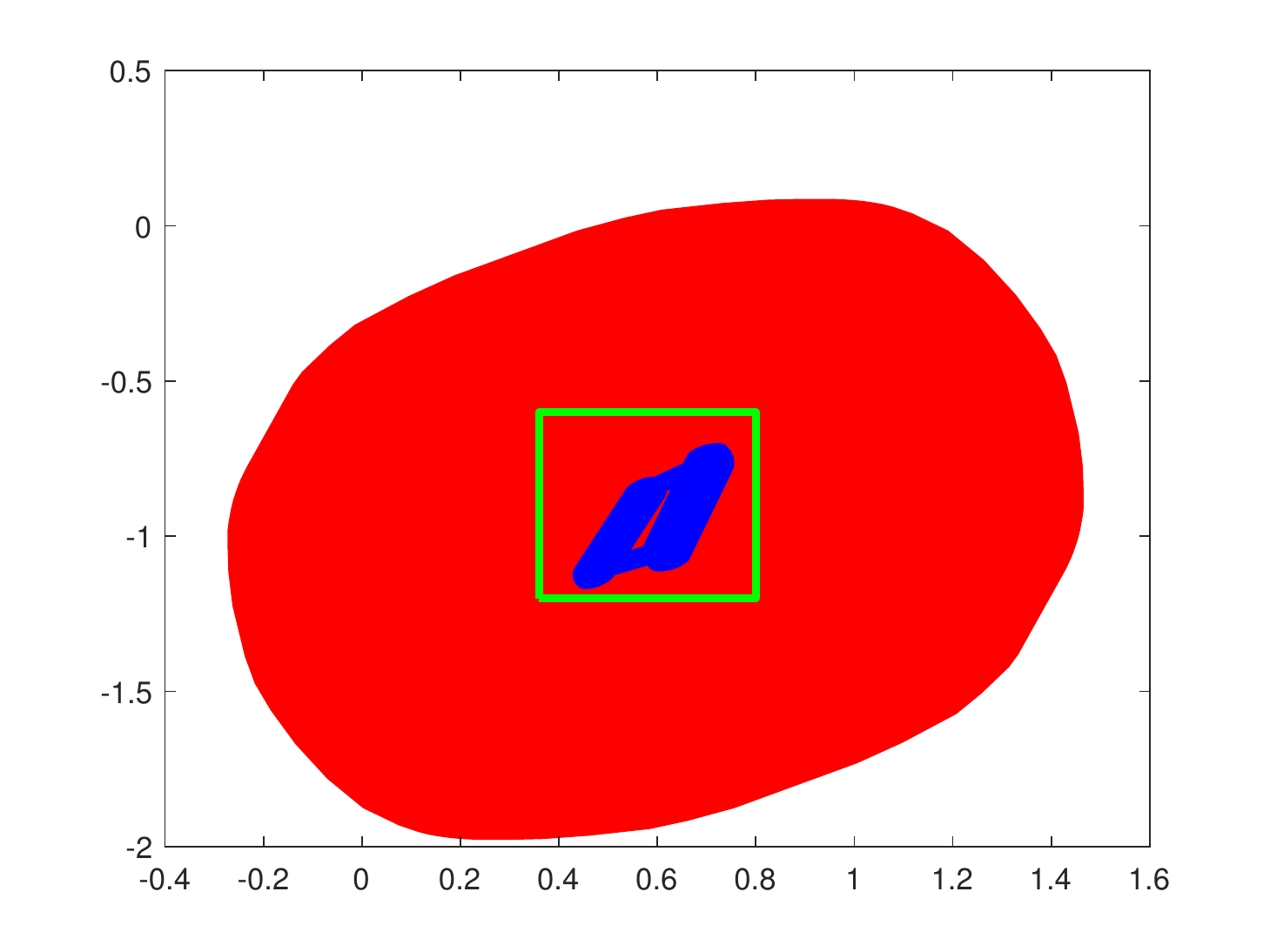}
\label{spiral_non_conc}
\end{minipage}%
}%
\\}
\subfigure[\textcolor{blue}{$4\times4$} Vs. \textcolor{red}{$4^2$}; \textcolor{blue}{Safe}, \textcolor{red}{Unknown}]{
\begin{minipage}[t]{0.5\linewidth}
\centering
\includegraphics[width=1.6in]{ 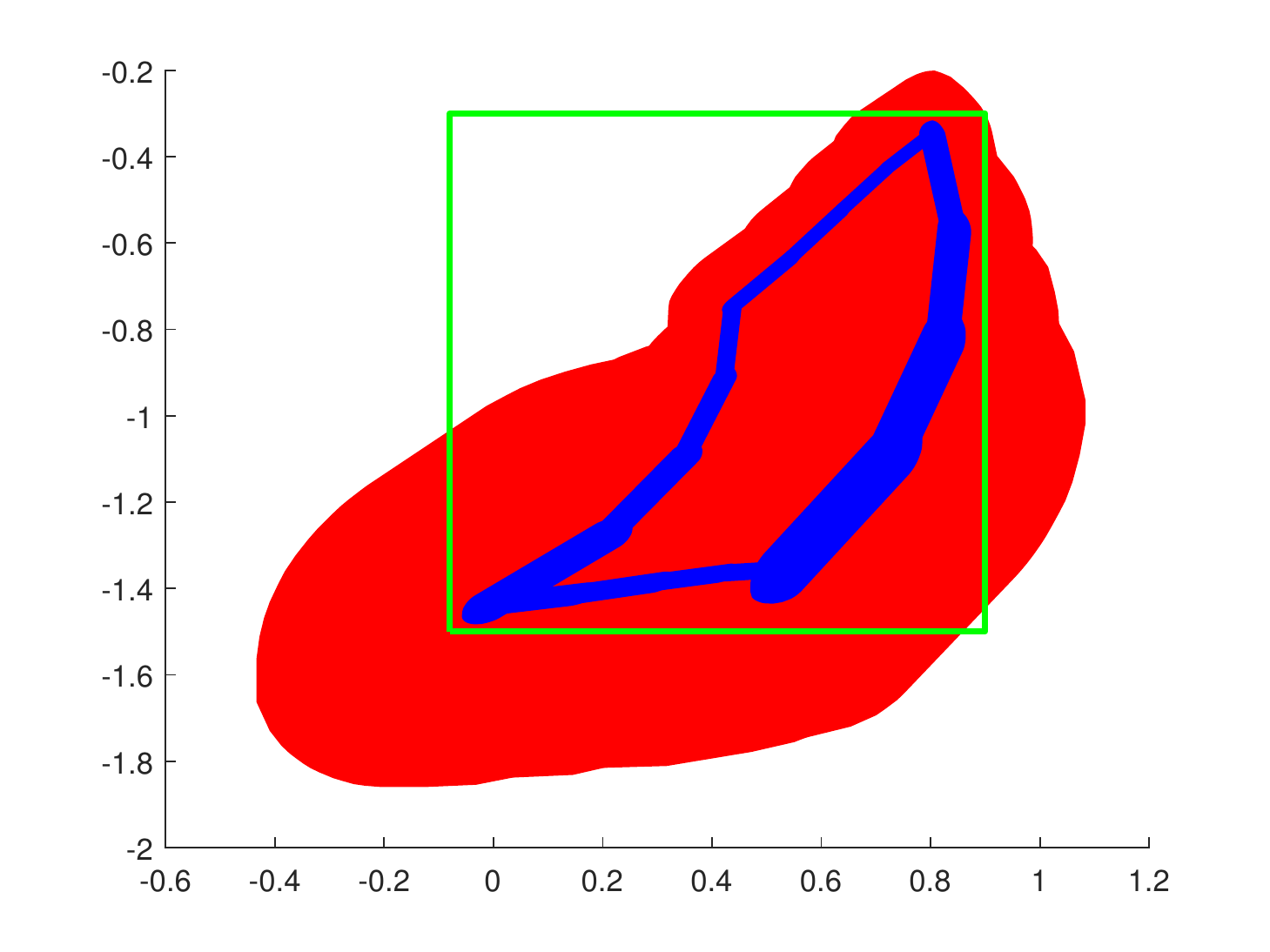}
\label{spiral_2}
\end{minipage}%
}%
\subfigure[\textcolor{blue}{$4\times4$} Vs. \textcolor{red}{$7^2$}; \textcolor{blue}{Safe}, \textcolor{red}{Safe}]{
\begin{minipage}[t]{0.5\linewidth}
\centering
\includegraphics[width=1.6in]{ 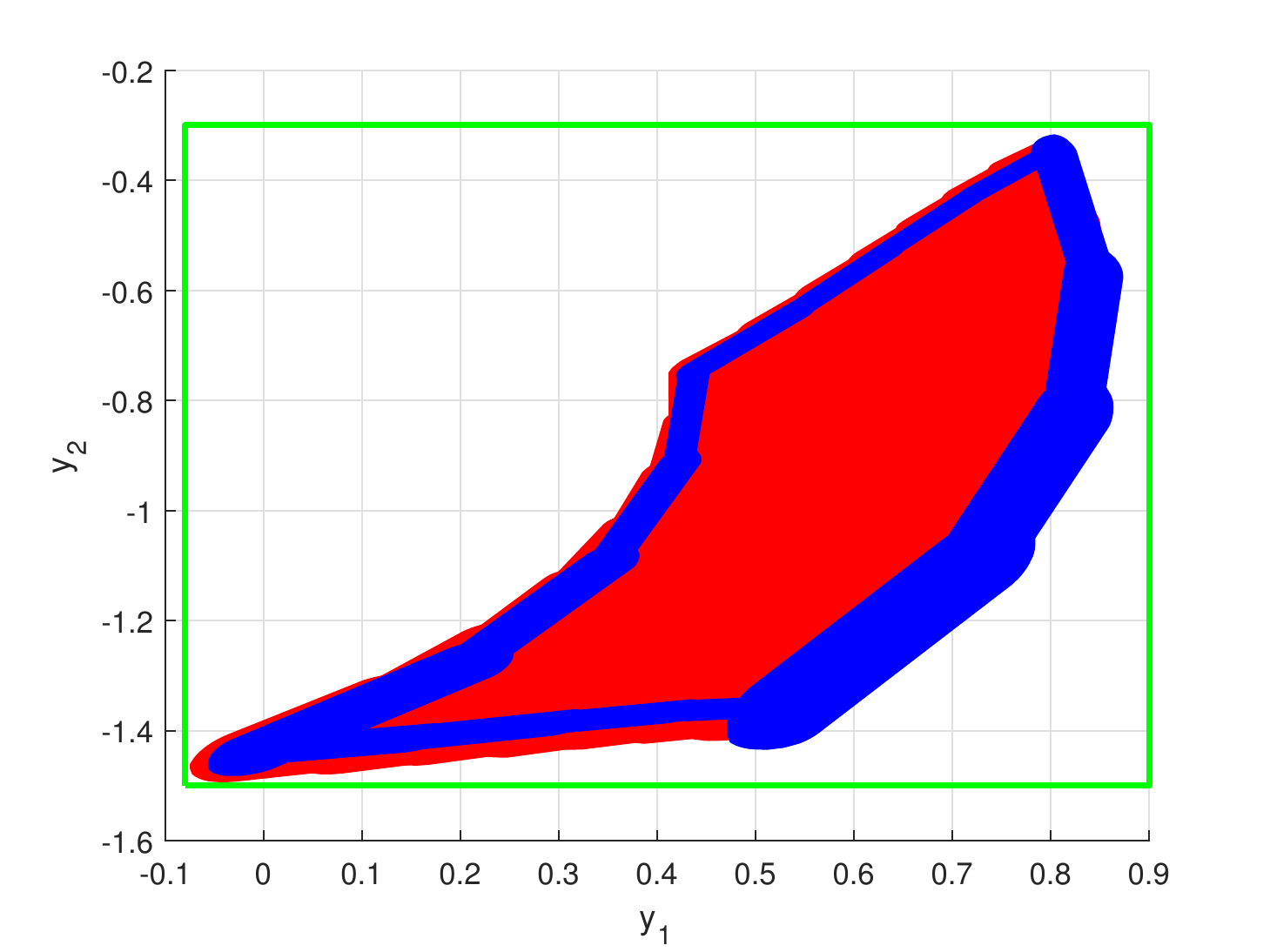}
\label{spiral_3}
\end{minipage}
}
\\
\centering
\subfigure[Reachable Sets]{
\begin{minipage}[t]{0.5\linewidth}
\centering
\includegraphics[width=1.6in]{ 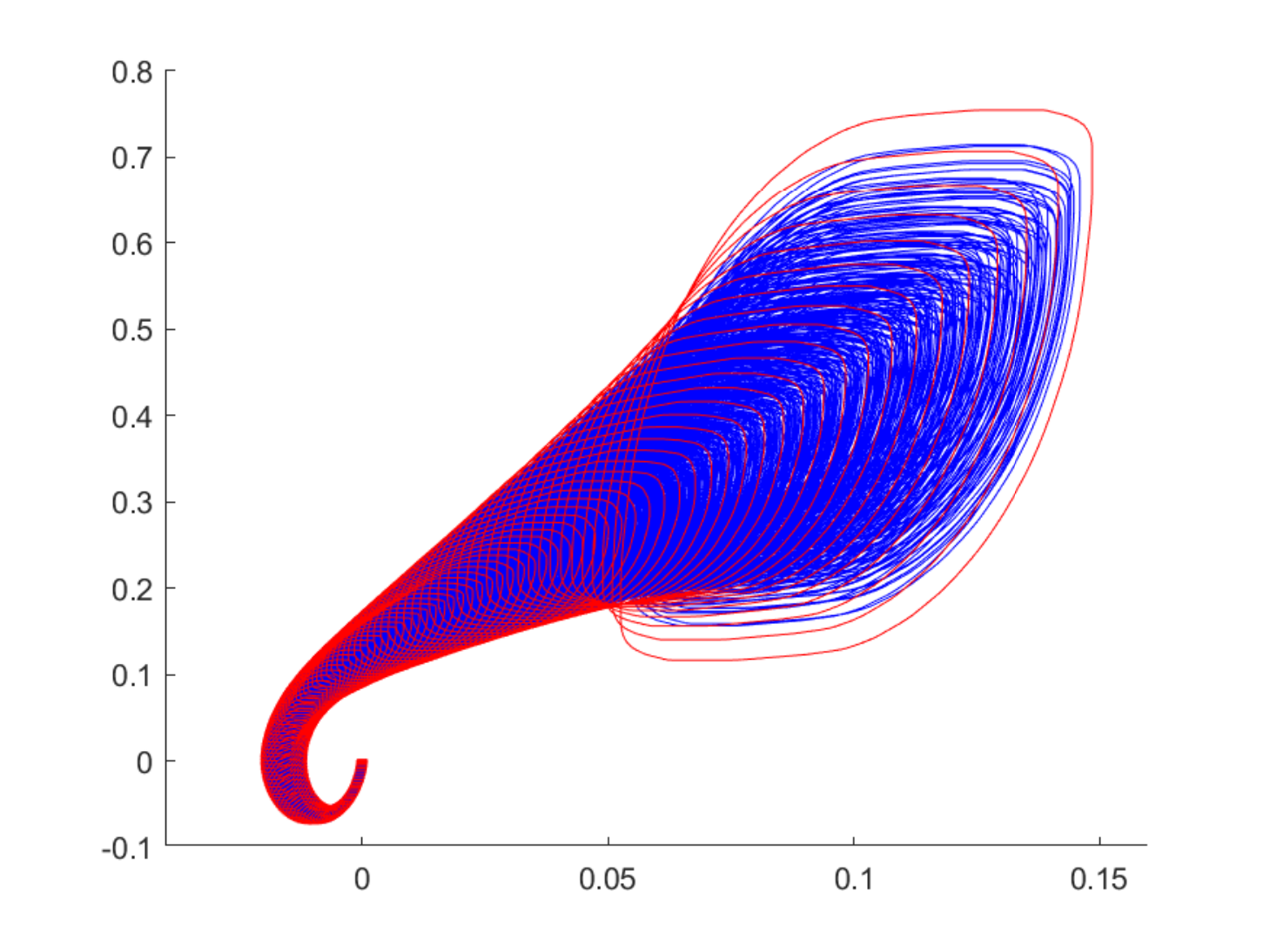}
\label{FPA}
\end{minipage}%
}%
\subfigure[Verification:  \textcolor{blue}{Safe}, \textcolor{red}{Unknown}]{
\begin{minipage}[t]{0.5\linewidth}
\centering
\includegraphics[width=1.6in]{ 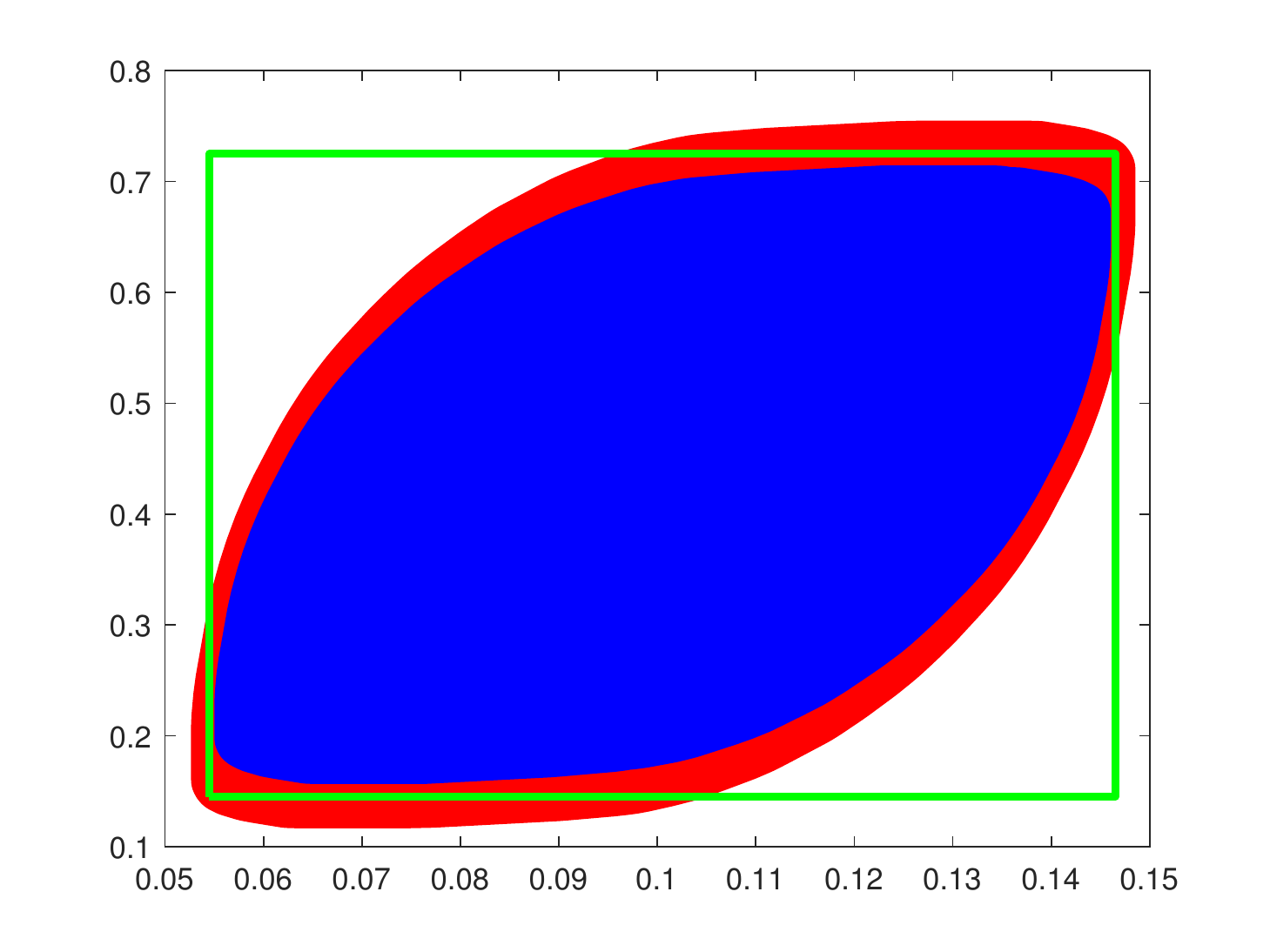}
\label{FPA_conc}
\end{minipage}%
}%
\caption{Verification on $\bm{N}_1$, $\bm{N}_2$.
\textcolor{blue}{$\Omega(\partial \mathcal{X}_{in})$}; \textcolor{red}{$\Omega(\mathcal{X}_{in})$}; \textcolor{green}{$\partial \mathcal{X}_{s}$}}
\label{node}
\end{figure}

\noindent{\textbf{Neural ODEs.}} We experiment on two widely-used neural ODEs in  \cite{manzanas2022reachability}, which are respectively a nonlinear 2-dimensional spiral \cite{chen2018neural} with the input set $\mathcal{X}_{in}=[1.5,2.5]\times [-0.5,0.5]$ and the safe set $\mathcal{X}_s=[-0.08, 0.9]\times [-1.5,-0.3]$  and a 12-dimensional controlled cartpole \cite{gruenbacher2020lagrangian} with the input set $\mathcal{X}_{in}=[-0.001,0.001]^{12}$ and the safe set $\mathcal{X}_s=[0.0545,0.1465]\times[0.145,0.725]$. For simplicity, we respectively denote them $\bm{N}_1$ and $\bm{N}_2$. Here, we take zonotopes as abstract domains and compare the output reachable sets computed by our set-boundary reachability method and the entire input set based method. The over-approximate reachability analysis is performed on the continuous reachability analyzer CORA toolbox \cite{althoff2015introduction}. 
When the time horizon is $[0, 6]$ and the time step is 0.01, our set-boundary reachability method for $\bm{N}_1$ returns `\textcolor{blue}{Safe}' when the boundary of the input set is partitioned into $16$ equal subsets, with the computation time being about 220.83 seconds. However, the entire input set based method returns `\textcolor{red}{Unknown}' when the input set is  partitioned into $16$ equal subsets. These color marks for safety verification results also apply to the experiments below.  The safety property is verified until the entire input set is partitioned into $49$ equal subsets. The corresponding computation time is 671.32 seconds. Consequently, the computation time from our set-boundary reachability method is reduced by 67.1\%, compared to the entire input set based method. The computed output reachable sets for $\bm{N}_1$ are displayed in Fig. \ref{spiral_2} and \ref{spiral_3}. When the time horizon is $[0.0, 1.1]$ and the time step is 0.01, the computed output reachable sets for $\bm{N}_2$ are displayed in Fig. \ref{FPA}. Fig. \ref{FPA_conc} shows the reachable sets at the time instant $t=1.1$.

\oomit{\begin{figure}[htbp]
\centering
\subfigure[Computation time comparison on $\bm{N}_1$]{
\begin{minipage}[t]{0.5\linewidth}
\centering
\includegraphics[width=2.5in]{ 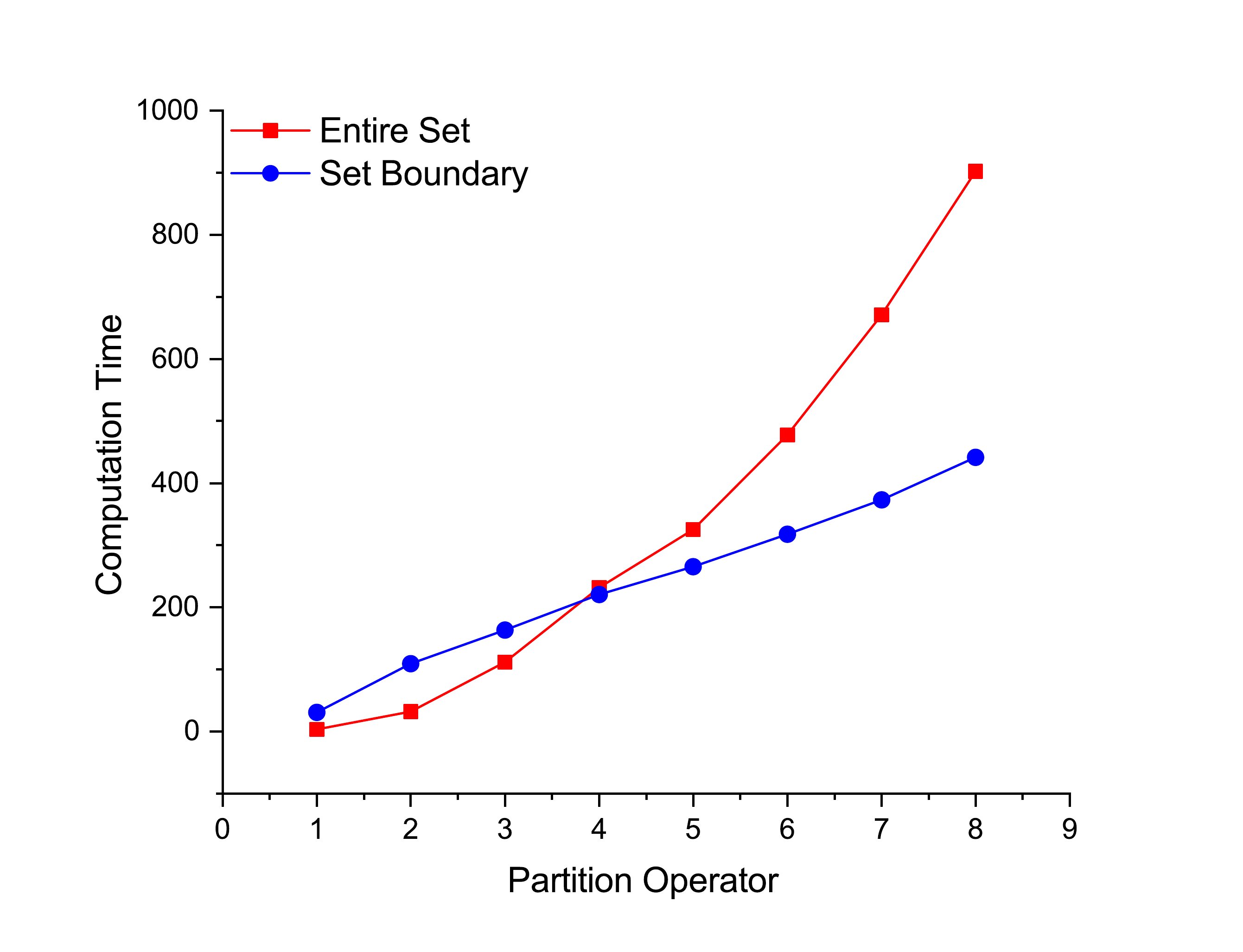}
\label{timeN1}
\end{minipage}%
}%
\subfigure[Computation time comparison on $\bm{N}_3$]{
\begin{minipage}[t]{0.5\linewidth}
\centering
\includegraphics[width=2.5in]{ 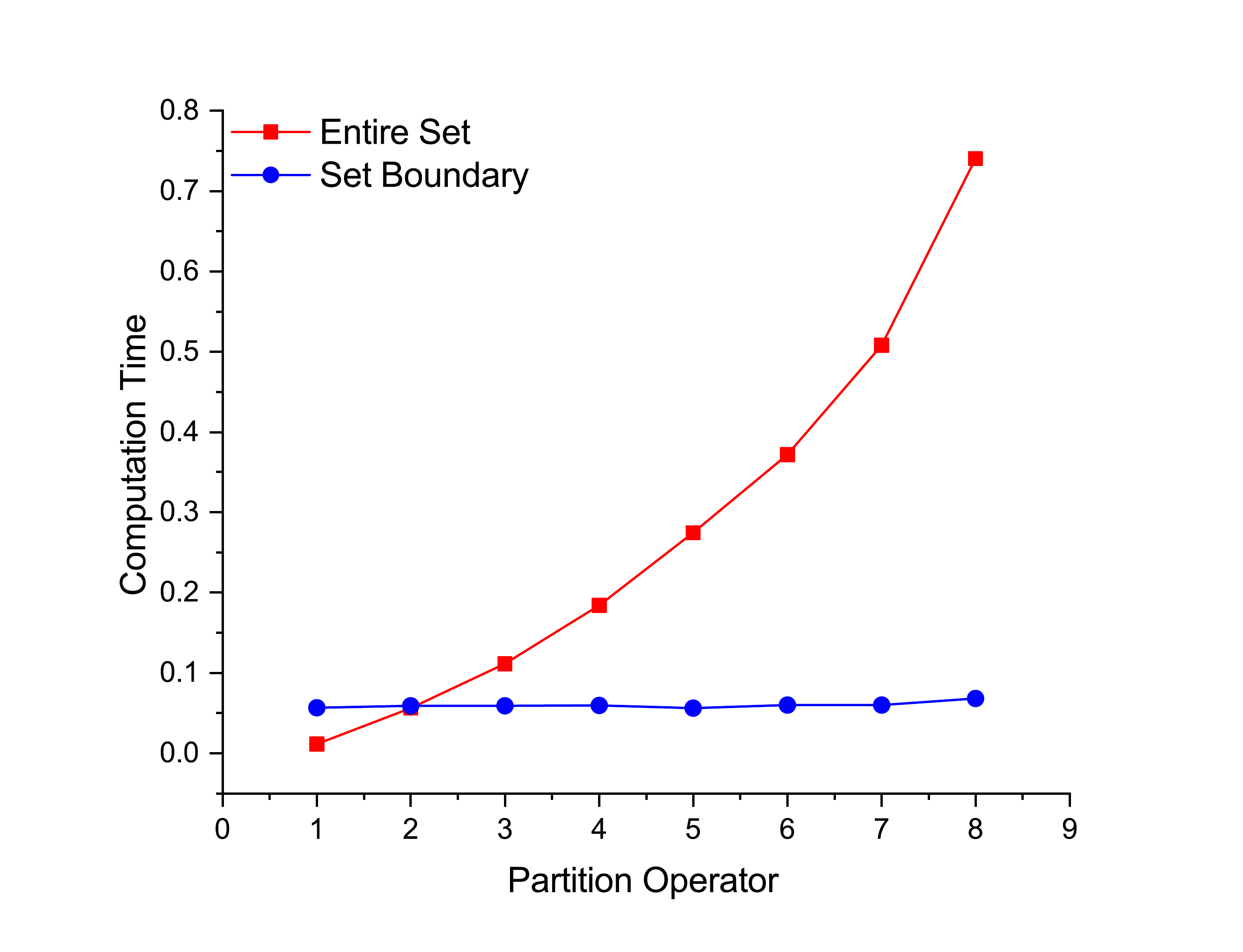}
\label{timeN3}
\end{minipage}%
}%
\centering
\caption{Computation efficiency comparisons with partition operator.}
\label{time}
\end{figure}
}

\noindent\textbf{Feedforward Neural Networks.} Rather than considering neural ODEs, we instead take more general invertible NNs into account.  The invertibility of  NNs used here, i.e., $\bm{N}_3$ and $\bm{N}_4$,  are assured by their Jacobian determinant not being zero. The NN $\bm{N}_3$ is fully connected with Sigmoid activation functions, having an input/output layer with dimension 2 and  10 hidden layers with size 100. The NN $\bm{N}_4$ is similar to $\bm{N}_3$, except that its input/output dimensions are 3.

The results of safety verification of $\bm{N}_3$ and $\bm{N}_4$ are demonstrated in Fig. \ref{2-dim inn} and \ref{3-dim inn1}. Their safe sets $\mathcal{X}_{s}$ are respectively $ [0.914304,0.9143525]\times [0.9508425,0.950896]$ and $[0.2884, 0.289] \times [0.465,0.466]\times[0.5752,0.5762]$, whose boundaries are shown in green color in Fig. \ref{2-dim inn} and \ref{3-dim inn1}.  The input sets in Fig. \ref{2-dim inn} are  $[-0.125,0.125]^2$, $[-0.25,0.25]^2$, $[-0.375,0.375]^2$, $[-0.425,0.425]^2$ and $[-1.0,1.0]^2$ (Fig. \ref{-11_2}-\ref{-11_3}.) respectively  and those of Fig. \ref{3-dim inn1} are $[-0.25,0.25]^3$, $[-0.30,0.30]^3$,  $[-0.375,0.375]^3$. The over-approximate reachability analysis is implemented using DeepZ \cite{singh2018fast}, which is a tool for safety verification of  large feed-forward, convolutional, and residual networks via propagating zonotopes through networks.

\begin{figure}[t]
\centering
\subfigure[$\epsilon=0.125$, \textcolor{blue}{Safe}, \textcolor{red}{Safe}]{
\begin{minipage}[t]{0.46\linewidth}
\centering
\includegraphics[width=1.6in]{ 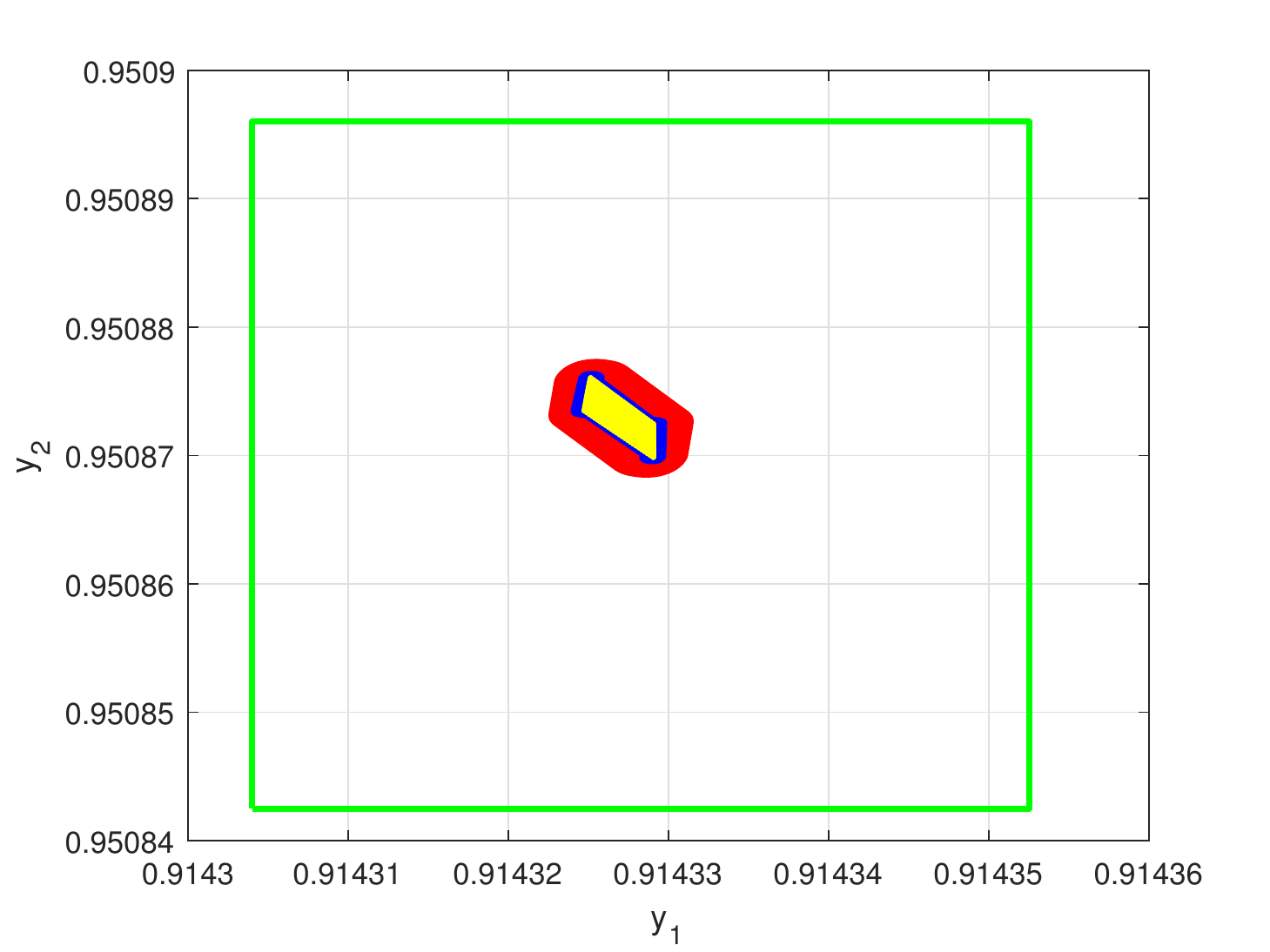}
\label{0.125}
\end{minipage}%
}%
\subfigure[$\epsilon=0.250$,  \textcolor{blue}{Safe}, \textcolor{red}{Safe}]{
\begin{minipage}[t]{0.46\linewidth}
\centering
\includegraphics[width=1.6in]{ 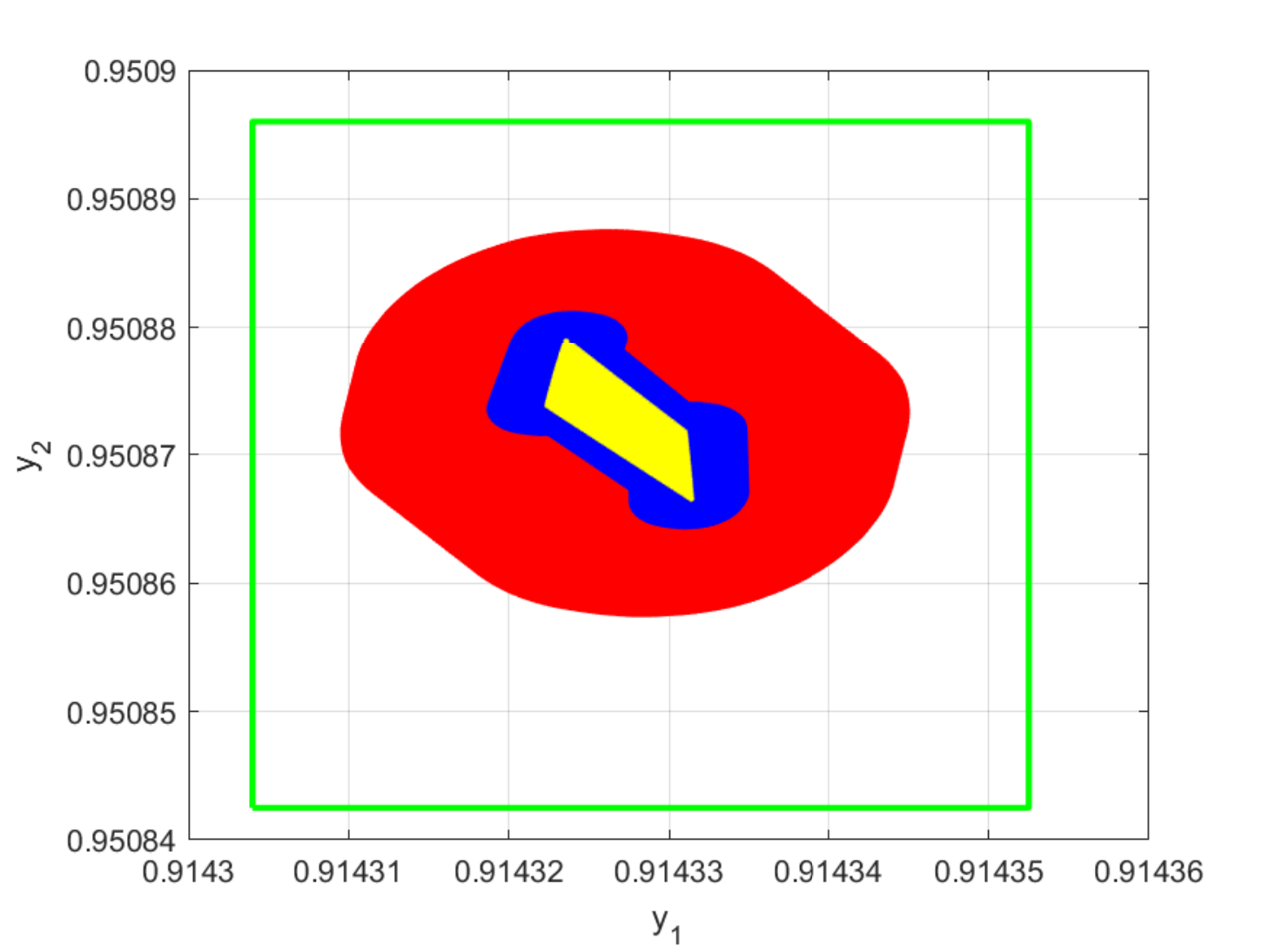}
\label{0.25}
\end{minipage}%
}%
\\
\subfigure[$\epsilon=0.375$,  \textcolor{blue}{Safe}, \textcolor{red}{Unknown}]{
\begin{minipage}[t]{0.46\linewidth}
\centering
\includegraphics[width=1.6in]{ 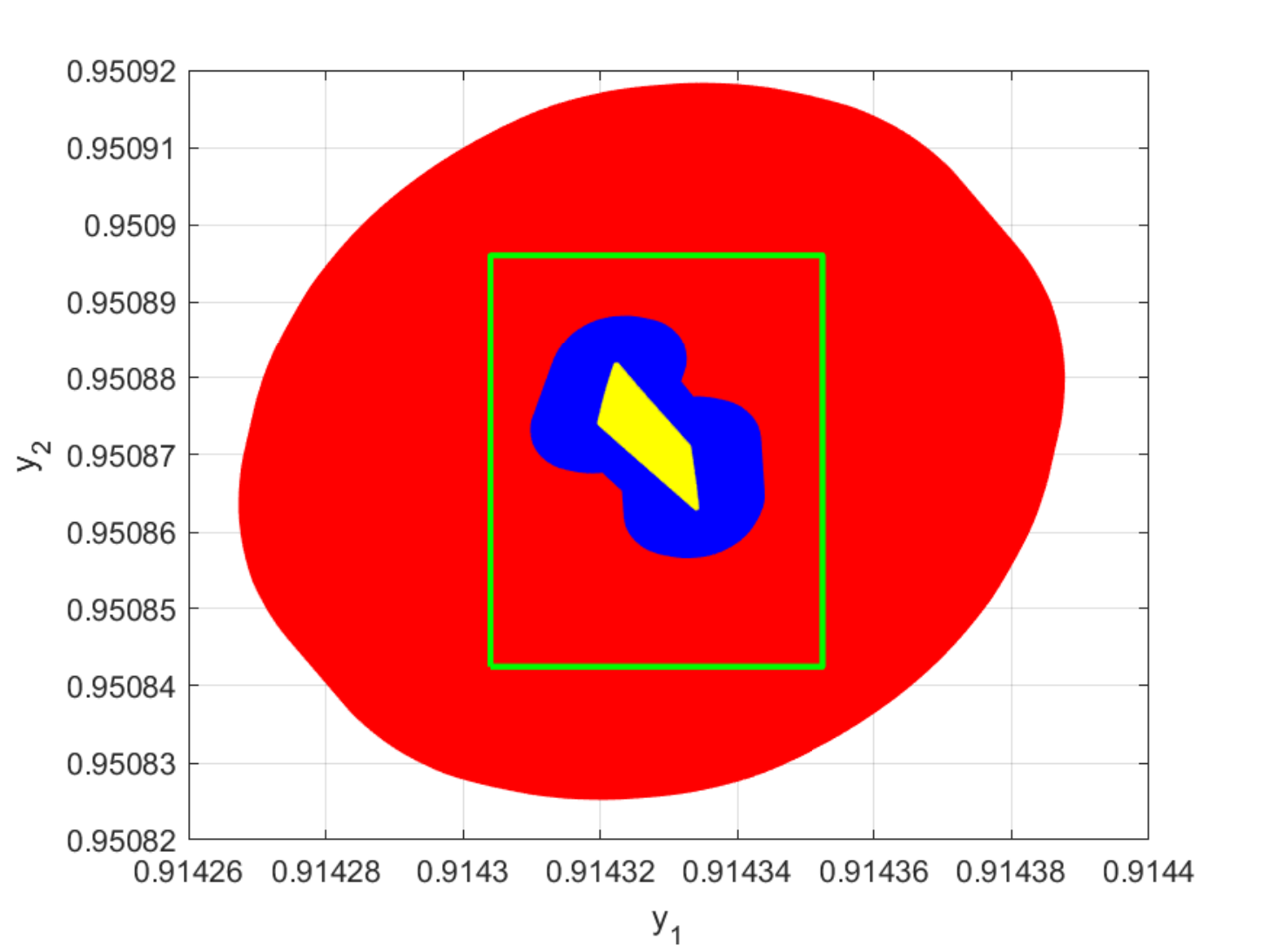}
\label{0.375}
\end{minipage}%
}%
\subfigure[$\epsilon=0.425$,  \textcolor{blue}{Safe}, \textcolor{red}{Unknown}]{
\begin{minipage}[t]{0.46\linewidth}
\centering
\includegraphics[width=1.6in]{ 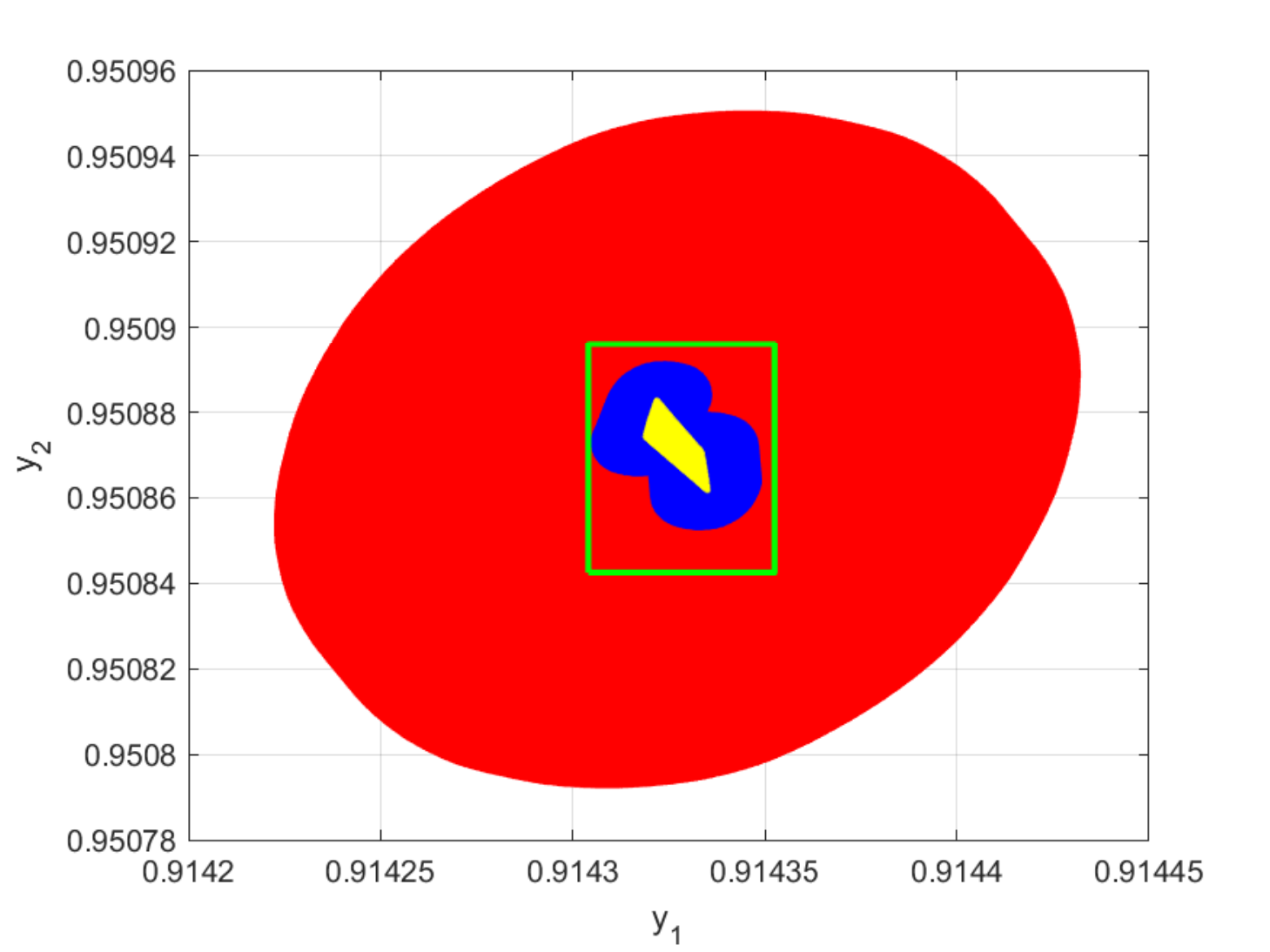}
\label{0.425}
\end{minipage}%
}%

\oomit{\subfigure[\textcolor{blue}{$4\times4$} Vs. \textcolor{red}{$4^2$}.]{
\begin{minipage}[t]{0.46\linewidth}
\centering
\includegraphics[width=1.6in]{ 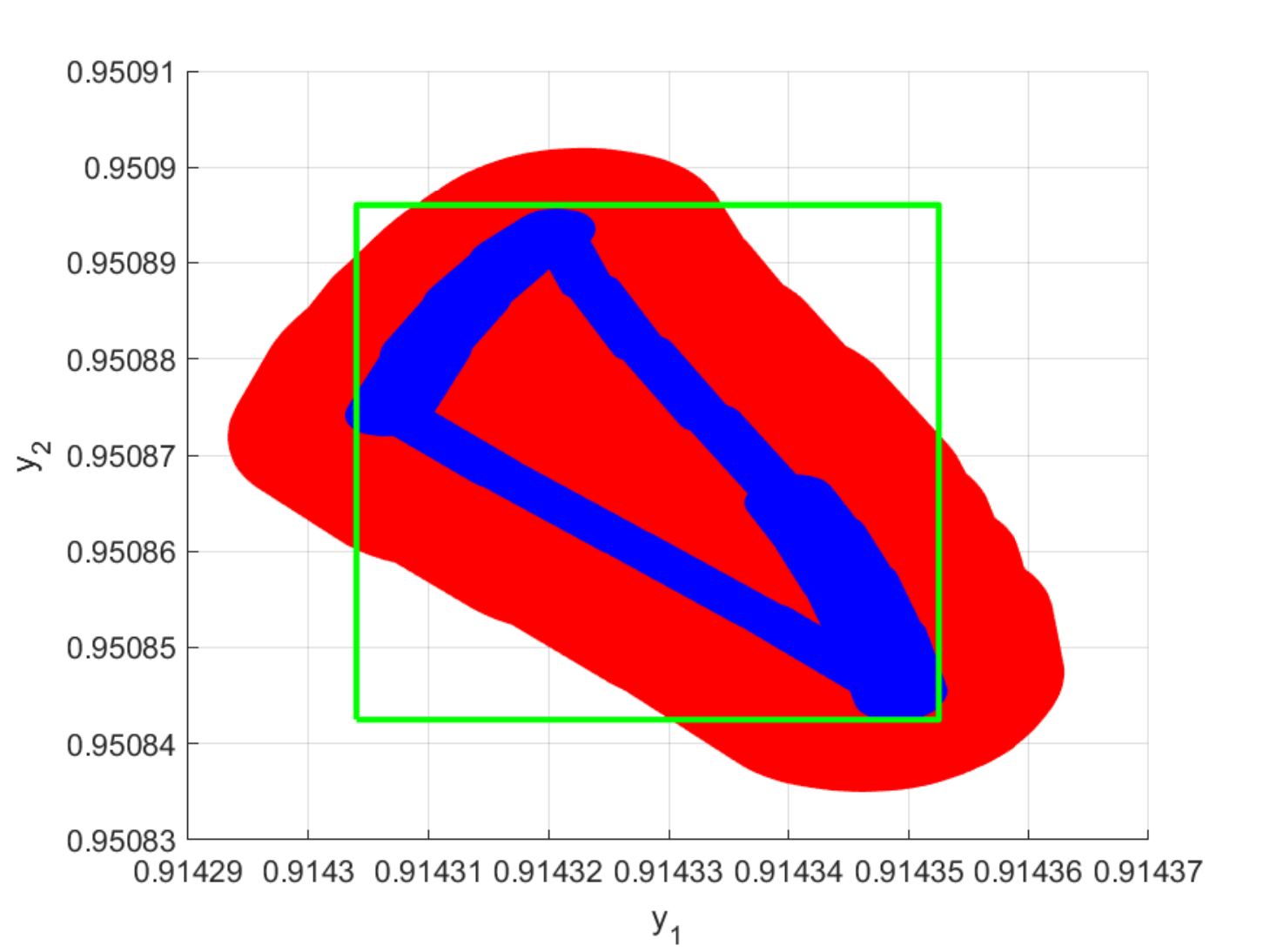}
\label{-11_1}
\end{minipage}%
}%
}
\subfigure[\textcolor{blue}{$5\times4$} Vs. \textcolor{red}{$5^2$}. \textcolor{blue}{Safe}, \textcolor{red}{Unknown}]{
\begin{minipage}[t]{0.46\linewidth}
\centering
\includegraphics[width=1.6in]{ 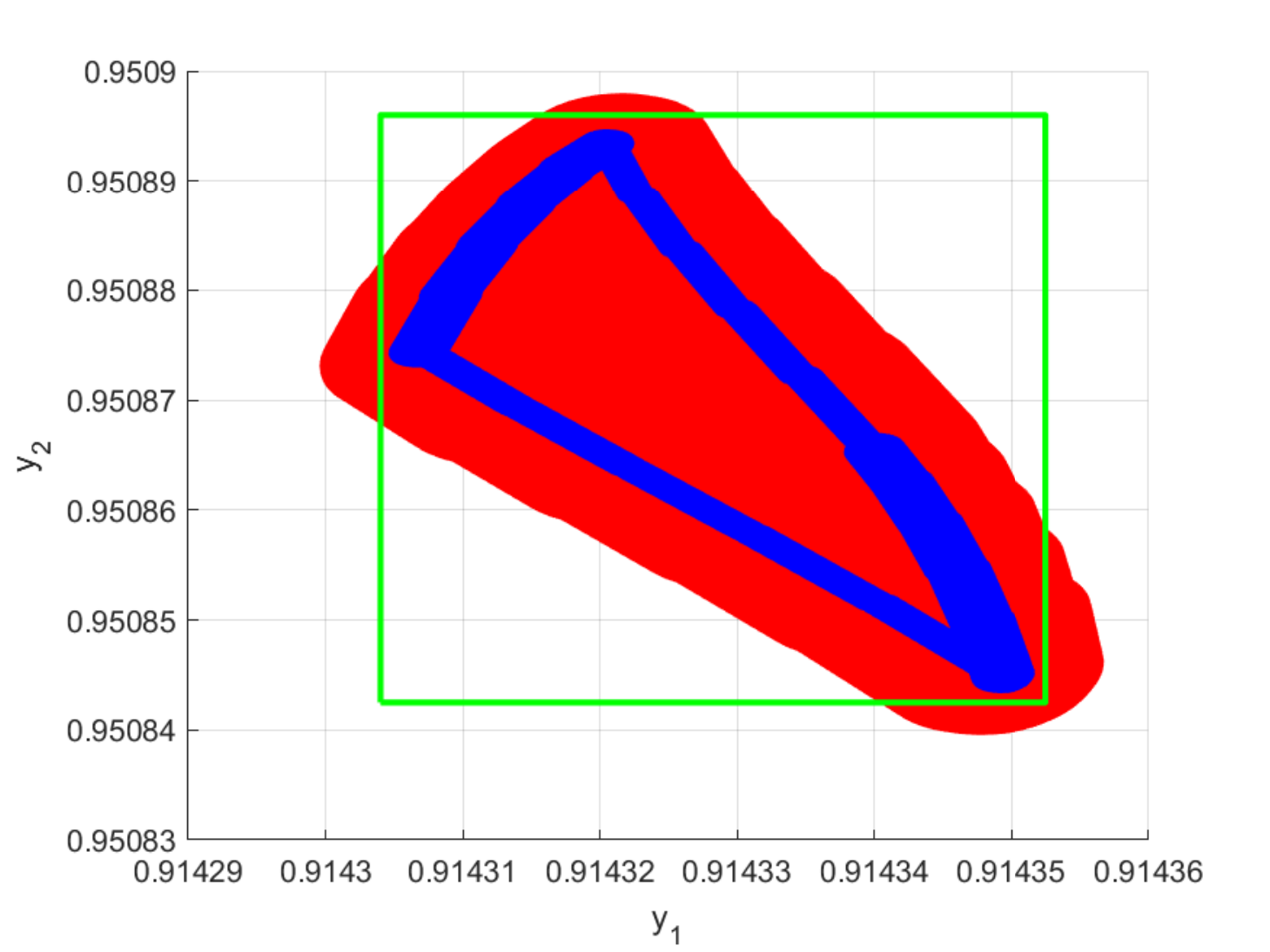}
\label{-11_2}
\end{minipage}%
}%
\subfigure[\textcolor{blue}{$5\times4$} Vs. \textcolor{red}{$8^2$}. \textcolor{blue}{Safe}, \textcolor{red}{Safe}]{
\begin{minipage}[t]{0.46\linewidth}
\centering
\includegraphics[width=1.6in]{ 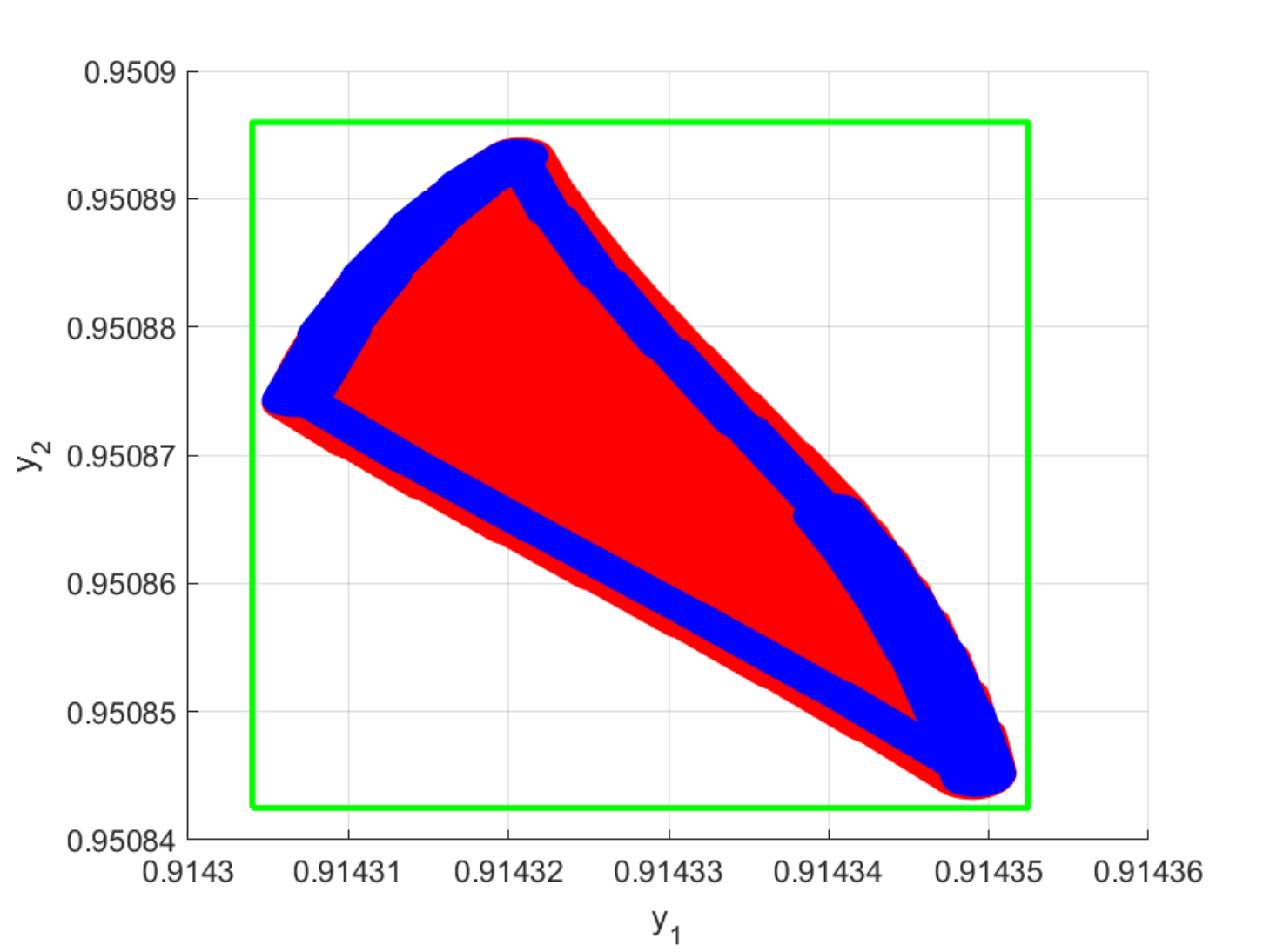}
\label{-11_3}
\end{minipage}
}
\centering
\caption{Safety verification on $\bm{N}_3$.
\textcolor{blue}{$\Omega(\partial \mathcal{X}_{in})$}; \textcolor{red}{$\Omega(\mathcal{X}_{in})$}; \textcolor{green}{$\partial \mathcal{X}_{s}$}}
\label{2-dim inn}
\end{figure}

The output reachable sets from our set-boundary reachability method and the entire set based method are displayed in blue and red in Fig. \ref{2-dim inn} and \ref{3-dim inn1}, respectively. Further, we also show the exact output reachable sets estimated via the Monte-Carlo simulation method in Fig. \ref{2-dim inn} and \ref{3-dim inn1}, which correspond to the yellow regions. The visualized results show that the set-boundary reachability method can generate tighter output reachable sets than the entire set based method. As a result, our set-boundary reachability method can verify the safety properties successfully for all cases. In contrast, the entire set based method fails for large input sets, as shown in Fig. \ref{0.375}, \ref{0.425}, \ref{0.300-1}-\ref{0.300-3} and \ref{0.375-1}-\ref{0.375-3}, since the computed output reachable sets are not included in safe sets. Furthermore, when the safety property cannot be verified with the input set $[-1.0,1.0]^2$, we impose uniform partition operator on both the entire input set and its boundary for verifying the safety property. When the boundary is divided into 20 equal subsets, the safety verification can be verified using our set-boundary reachability method (Fig. \ref{-11_2}) with the computation time of 0.0624 seconds. However, when the entire input set is used, it should be partitioned into 64 equal subsets (Fig. \ref{-11_3}) and the computation time for verification is 0.7405 seconds. Consequently, the computation time from our set-boundary reachability method is reduced by 91.6\%, as opposed to the entire input set based method.

\begin{figure}[t]
\centering
\subfigure[$y_1-y_2$]{
\begin{minipage}[t]{0.3\linewidth}
\centering
\includegraphics[width=1.5in]{ 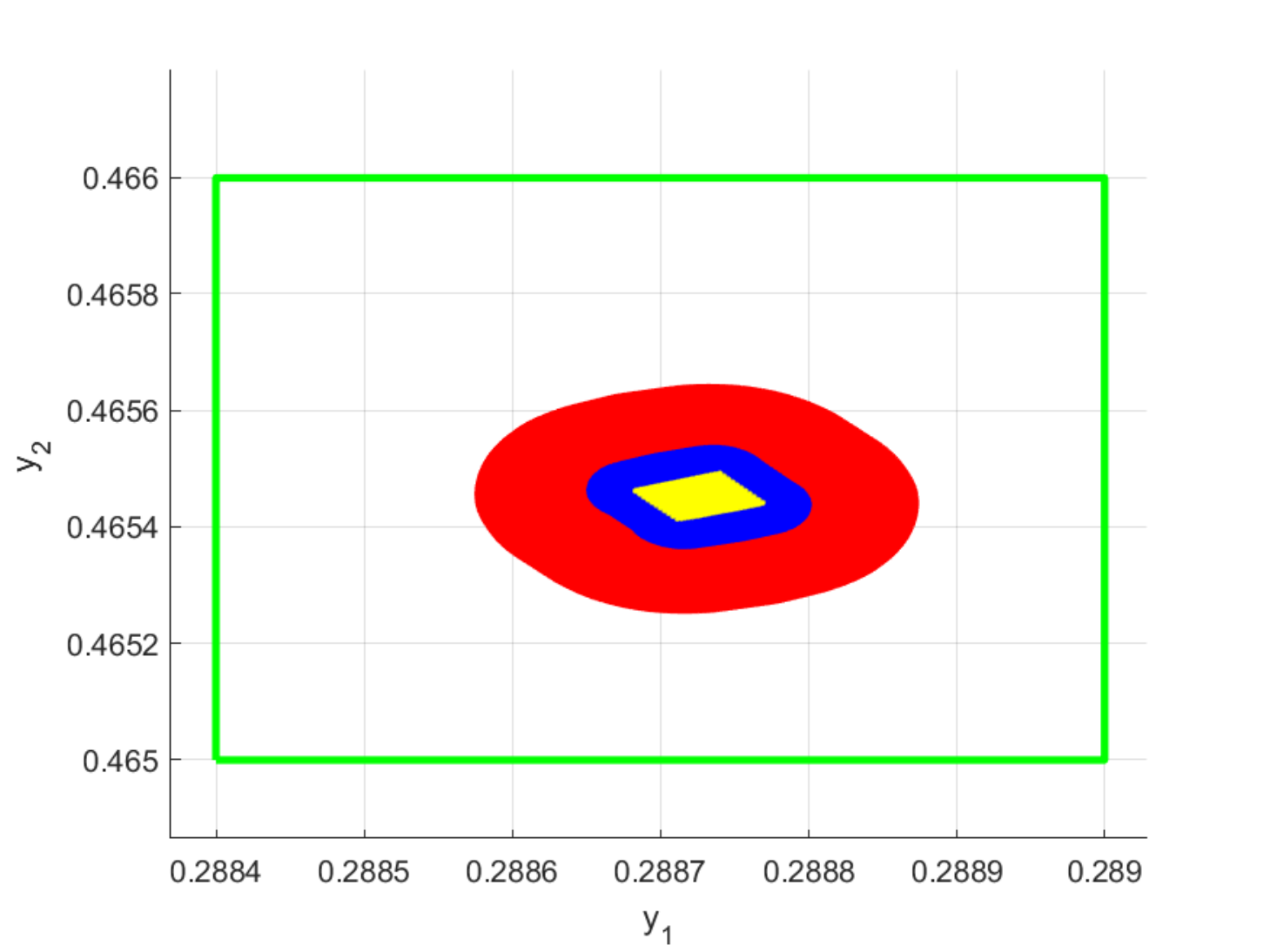}
\end{minipage}%
}%
\subfigure[$y_1-y_3$]{
\begin{minipage}[t]{0.3\linewidth}
\centering
\includegraphics[width=1.5in]{ 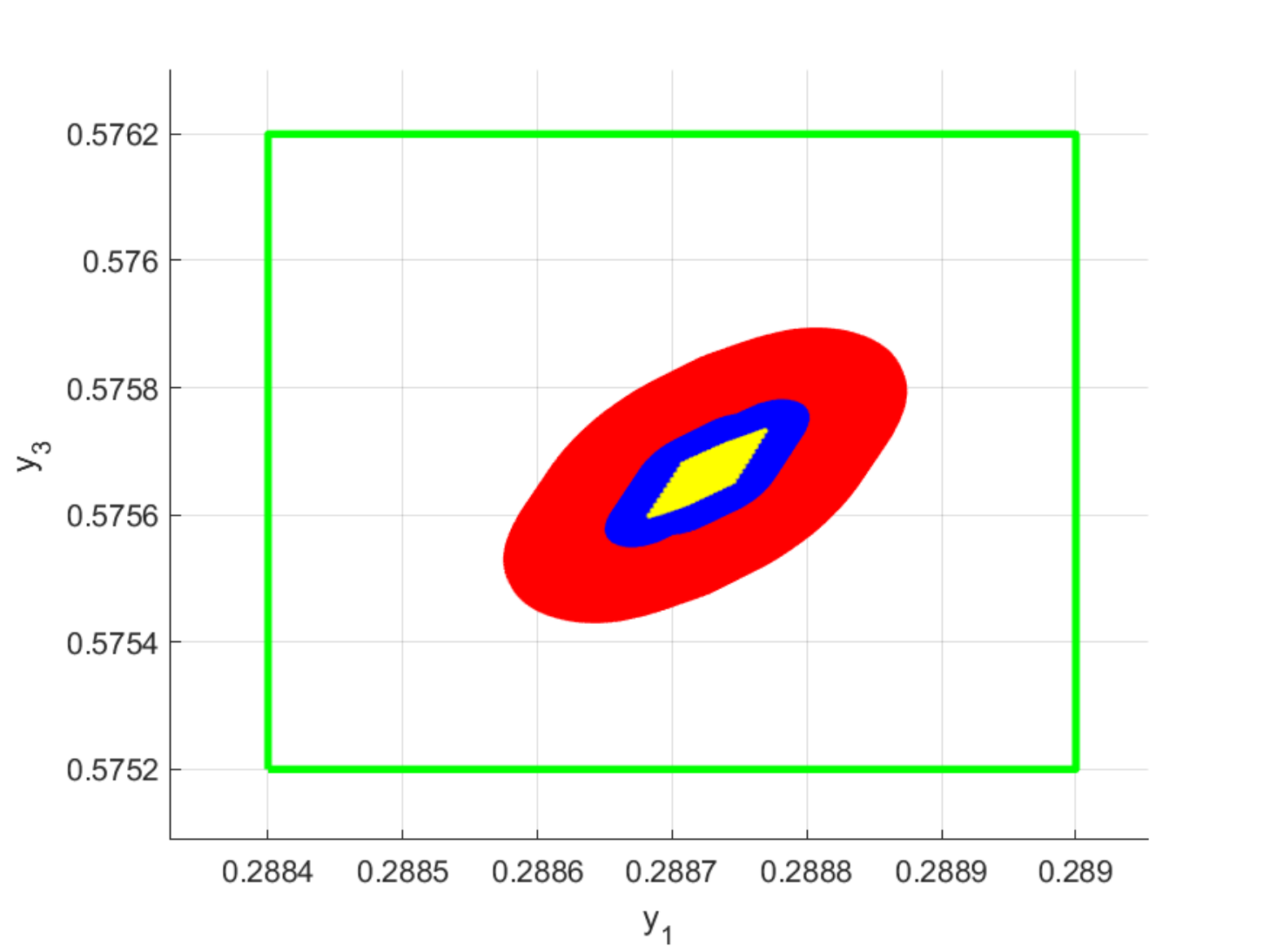}
\end{minipage}%
}%
\subfigure[$y_2-y_3$]{
\begin{minipage}[t]{0.3\linewidth}
\centering
\includegraphics[width=1.5in]{ 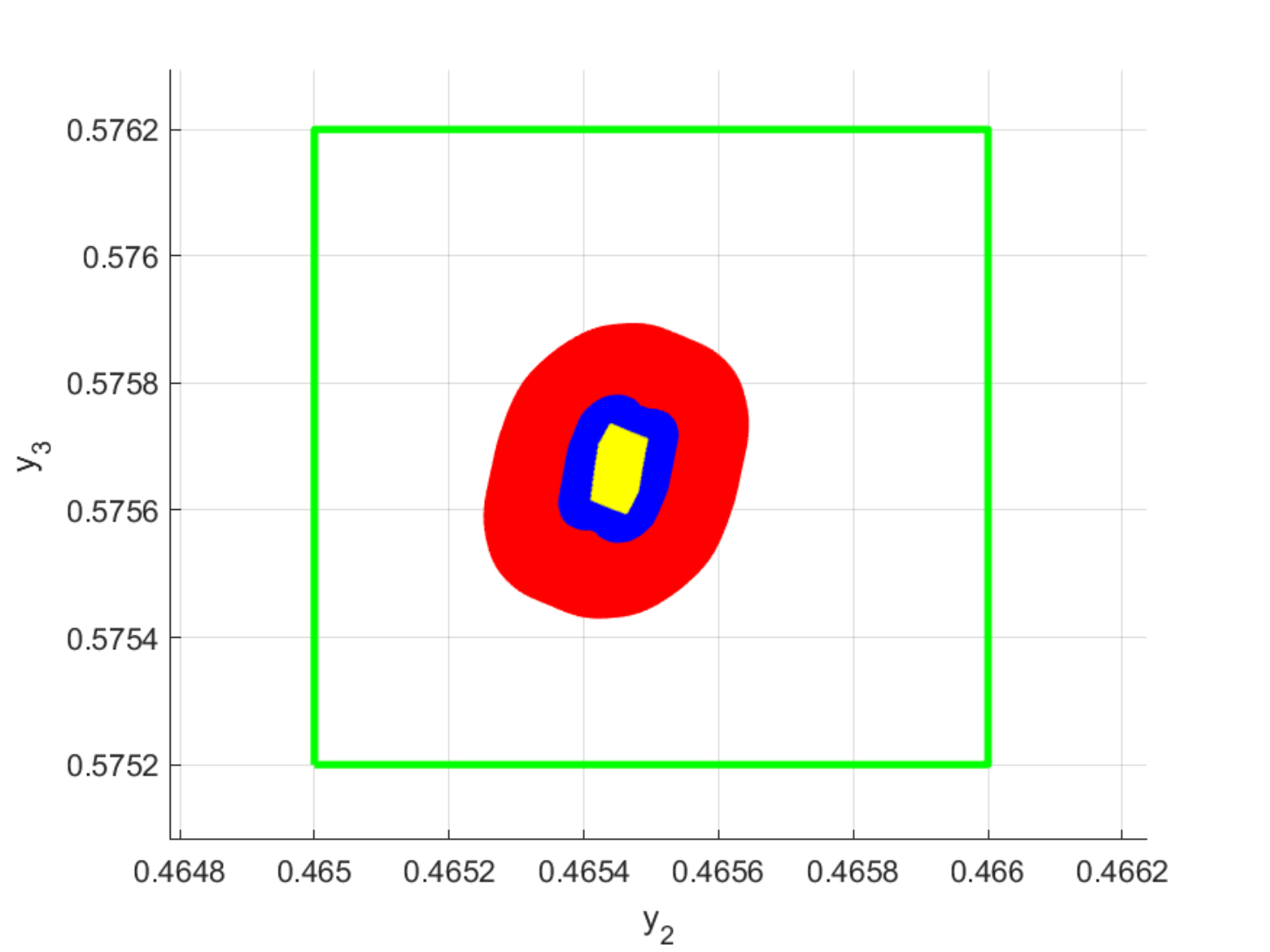}
\end{minipage}
}
\\
$\epsilon=0.250$, \textcolor{blue}{Safe}, \textcolor{red}{Safe}
\\
\subfigure[$y_1-y_2$]{
\begin{minipage}[t]{0.3\linewidth}
\centering
\includegraphics[width=1.5in]{ 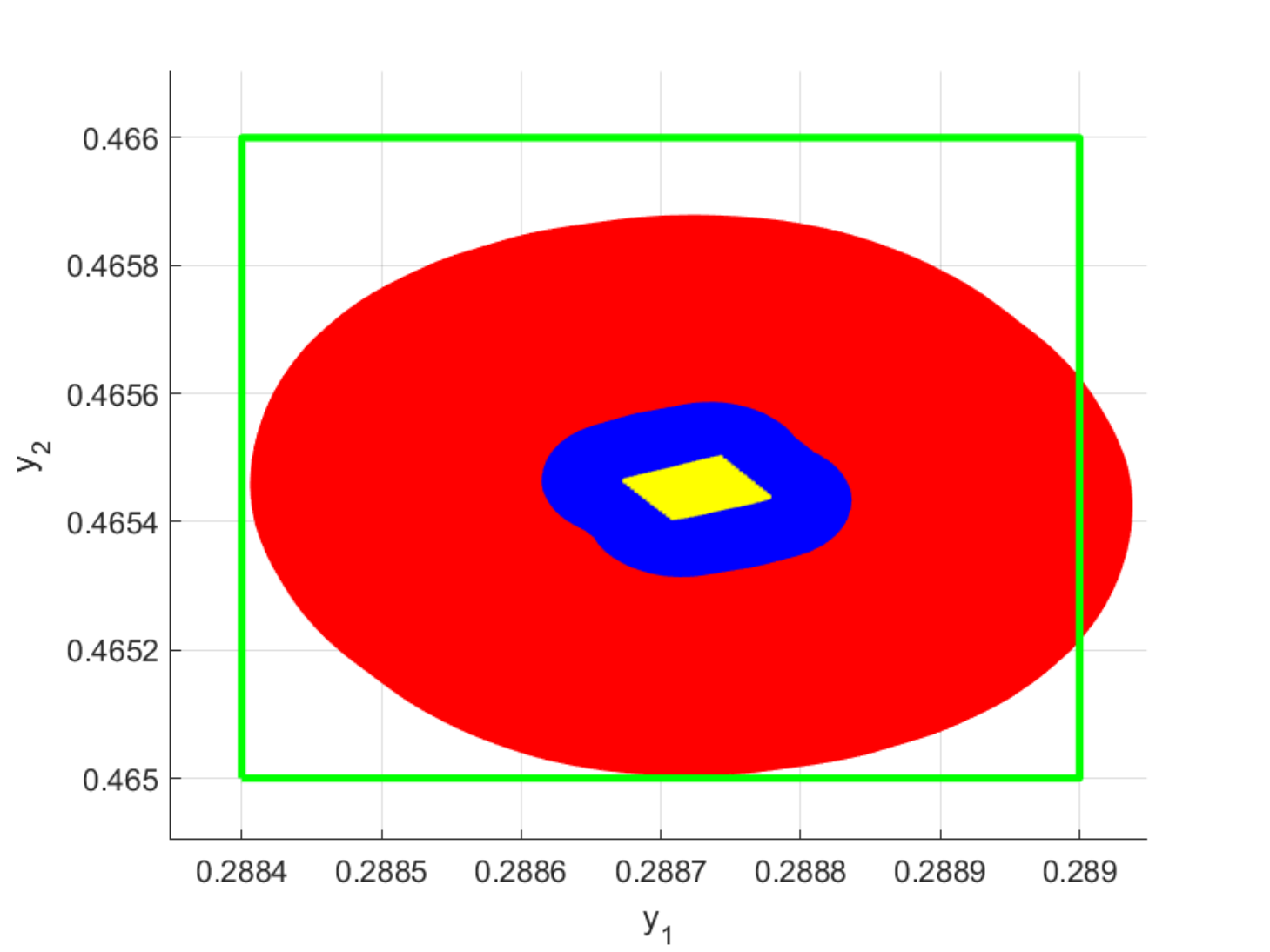}
\label{0.300-1}
\end{minipage}%
}%
\subfigure[$y_1-y_3$]{
\begin{minipage}[t]{0.3\linewidth}
\centering
\includegraphics[width=1.5in]{ 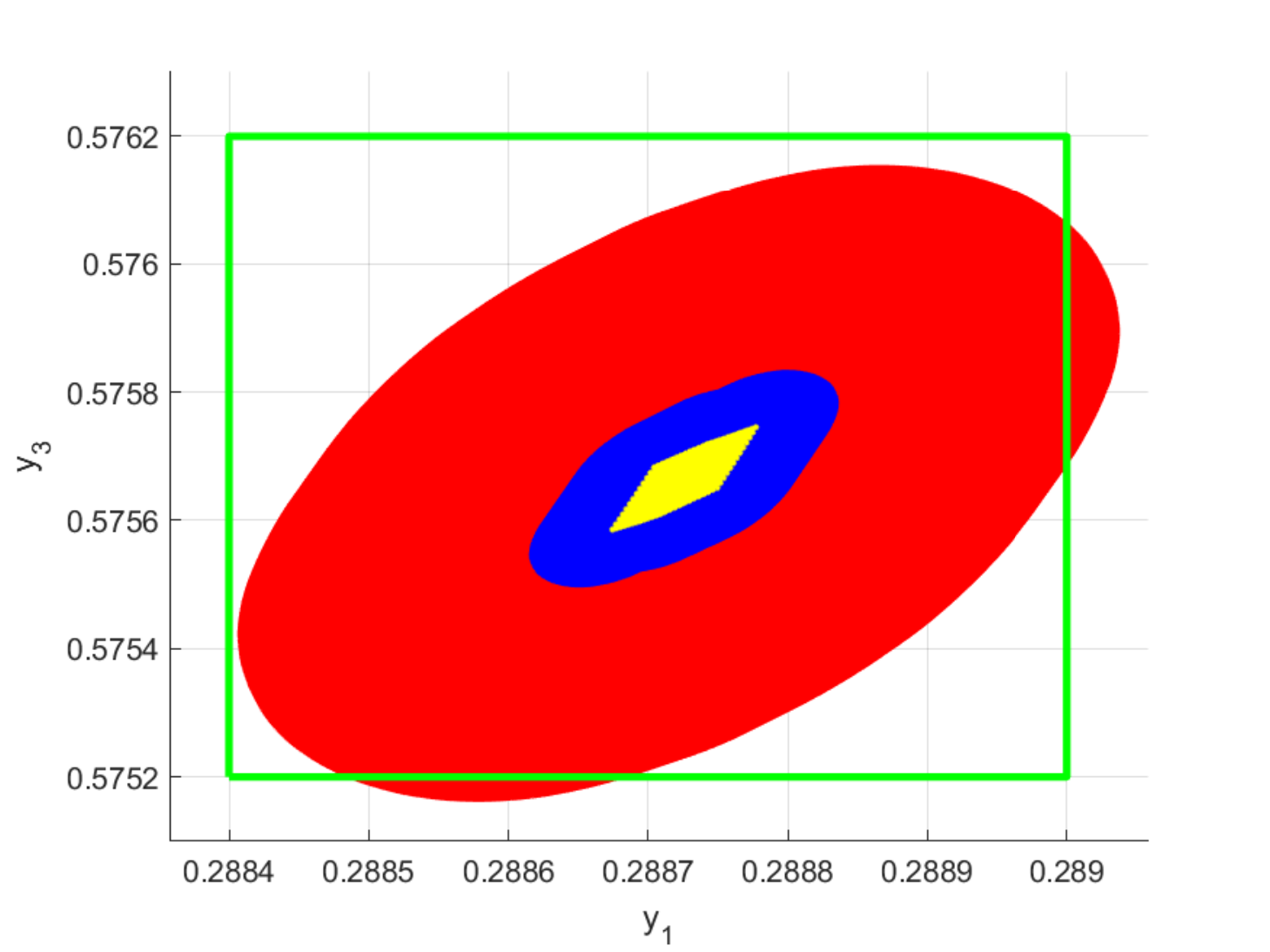}
\label{0.300-2}
\end{minipage}%
}%
\subfigure[$y_2-y_3$.]{
\begin{minipage}[t]{0.3\linewidth}
\centering
\includegraphics[width=1.5in]{ 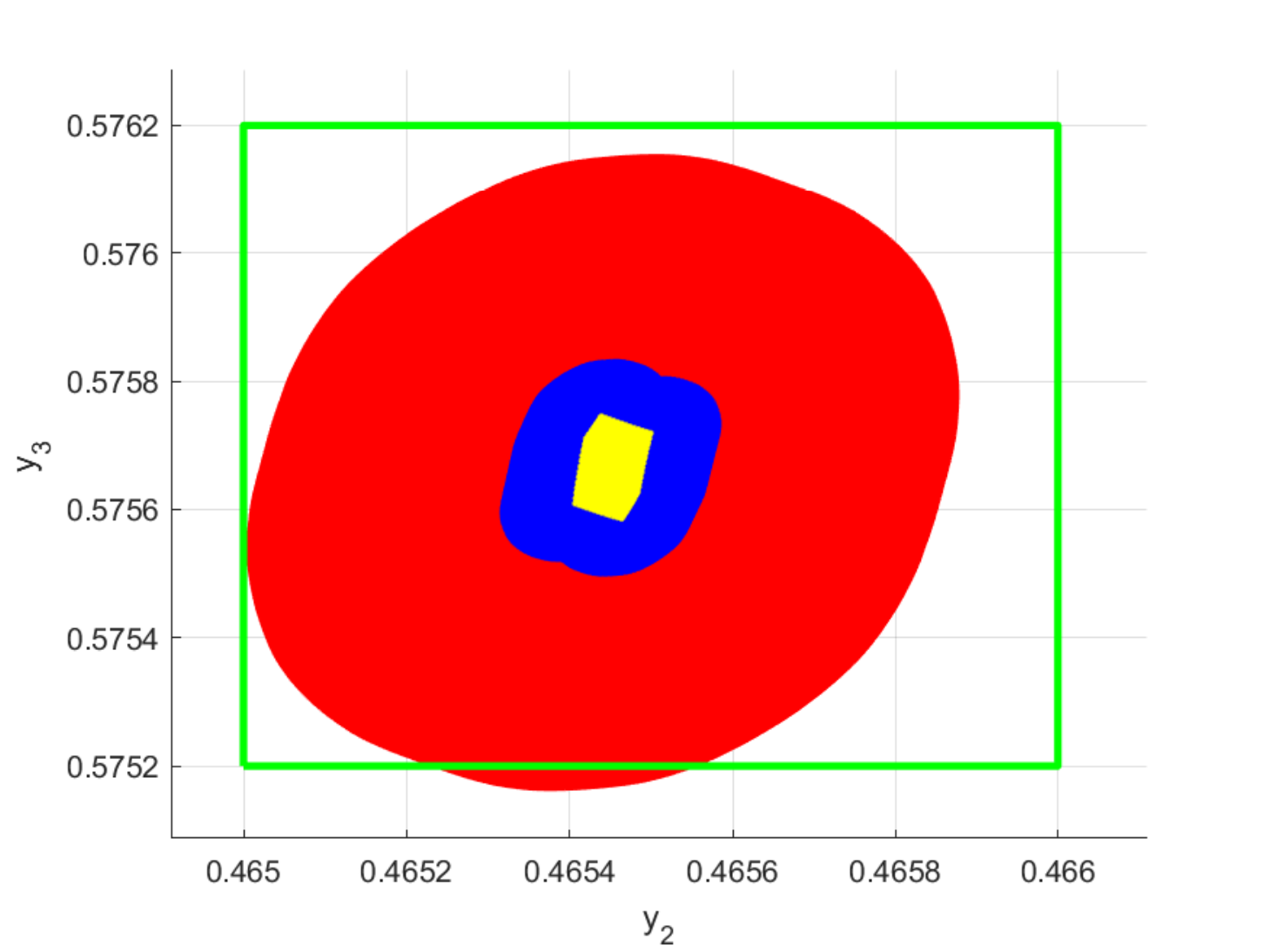}
\label{0.300-3}
\end{minipage}
}
\\
$\epsilon=0.300$, \textcolor{blue}{Safe}, \textcolor{red}{Unknown}
\\
\subfigure[$y_1-y_2$]{
\begin{minipage}[t]{0.3\linewidth}
\centering
\includegraphics[width=1.5in]{ 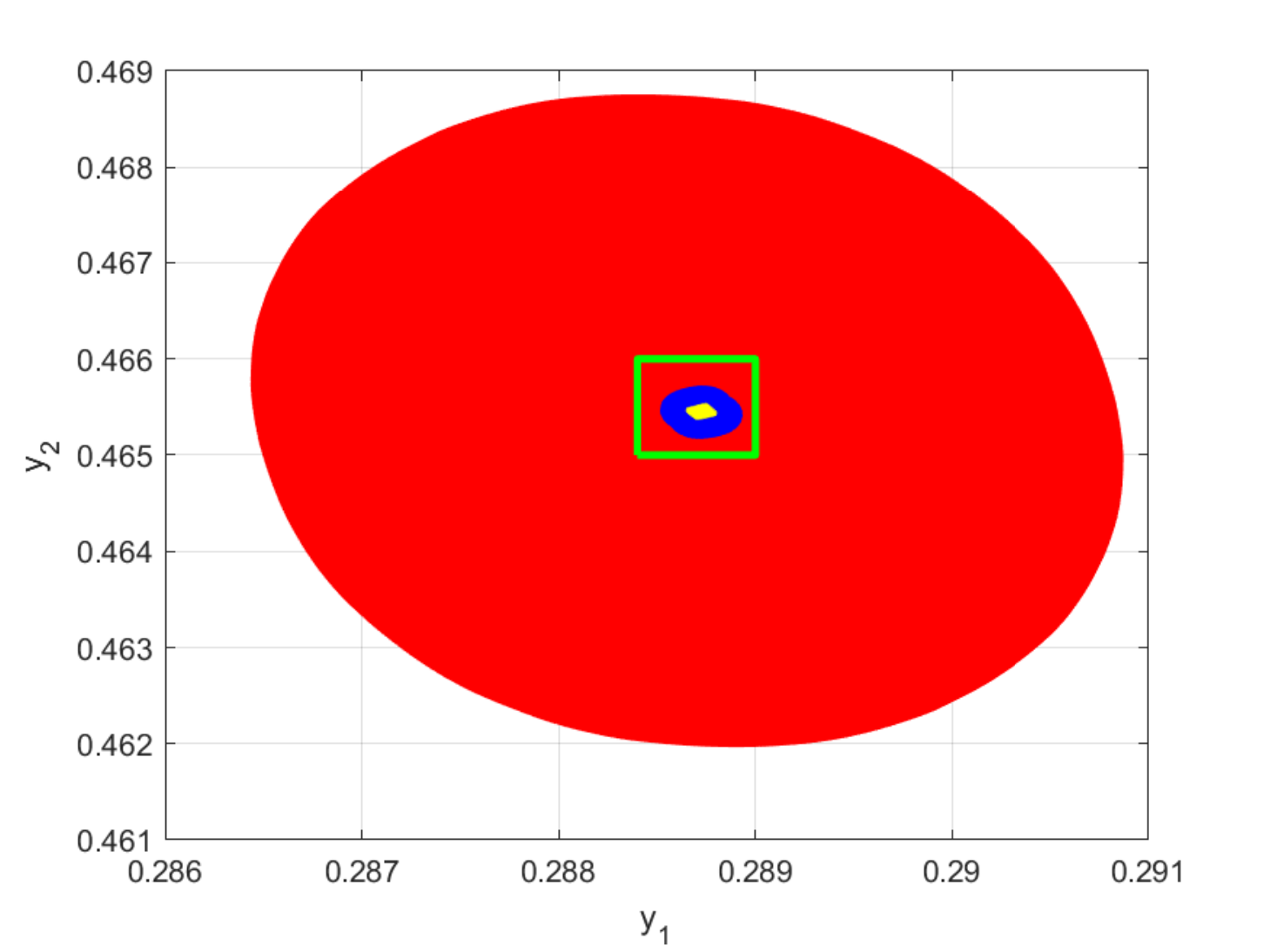}
\label{0.375-1}
\end{minipage}%
}%
\subfigure[$y_1-y_3$]{
\begin{minipage}[t]{0.3\linewidth}
\centering
\includegraphics[width=1.5in]{ 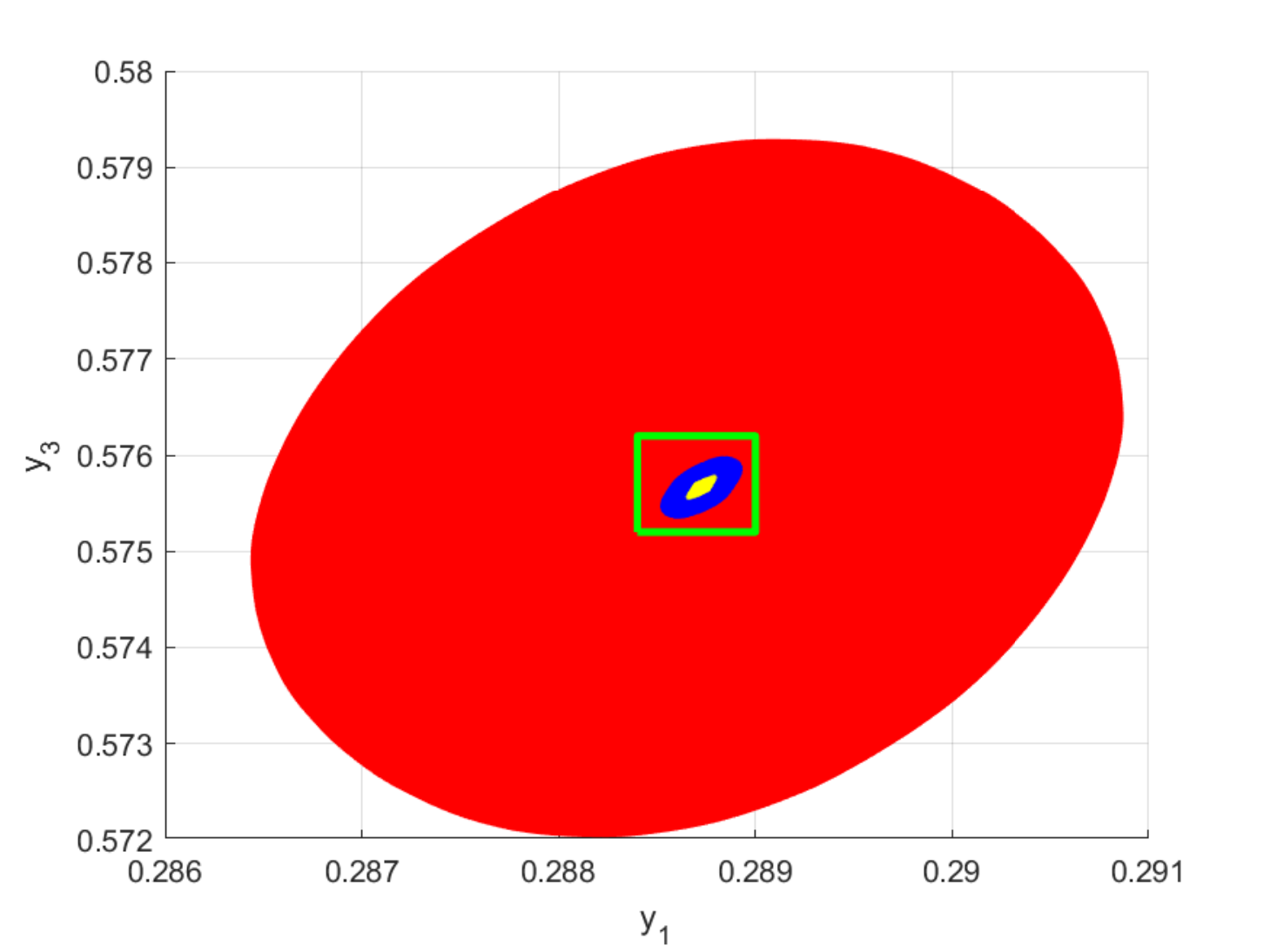}
\label{0.375-2}
\end{minipage}%
}%
\subfigure[$y_2-y_3$]{
\begin{minipage}[t]{0.3\linewidth}
\centering
\includegraphics[width=1.5in]{ 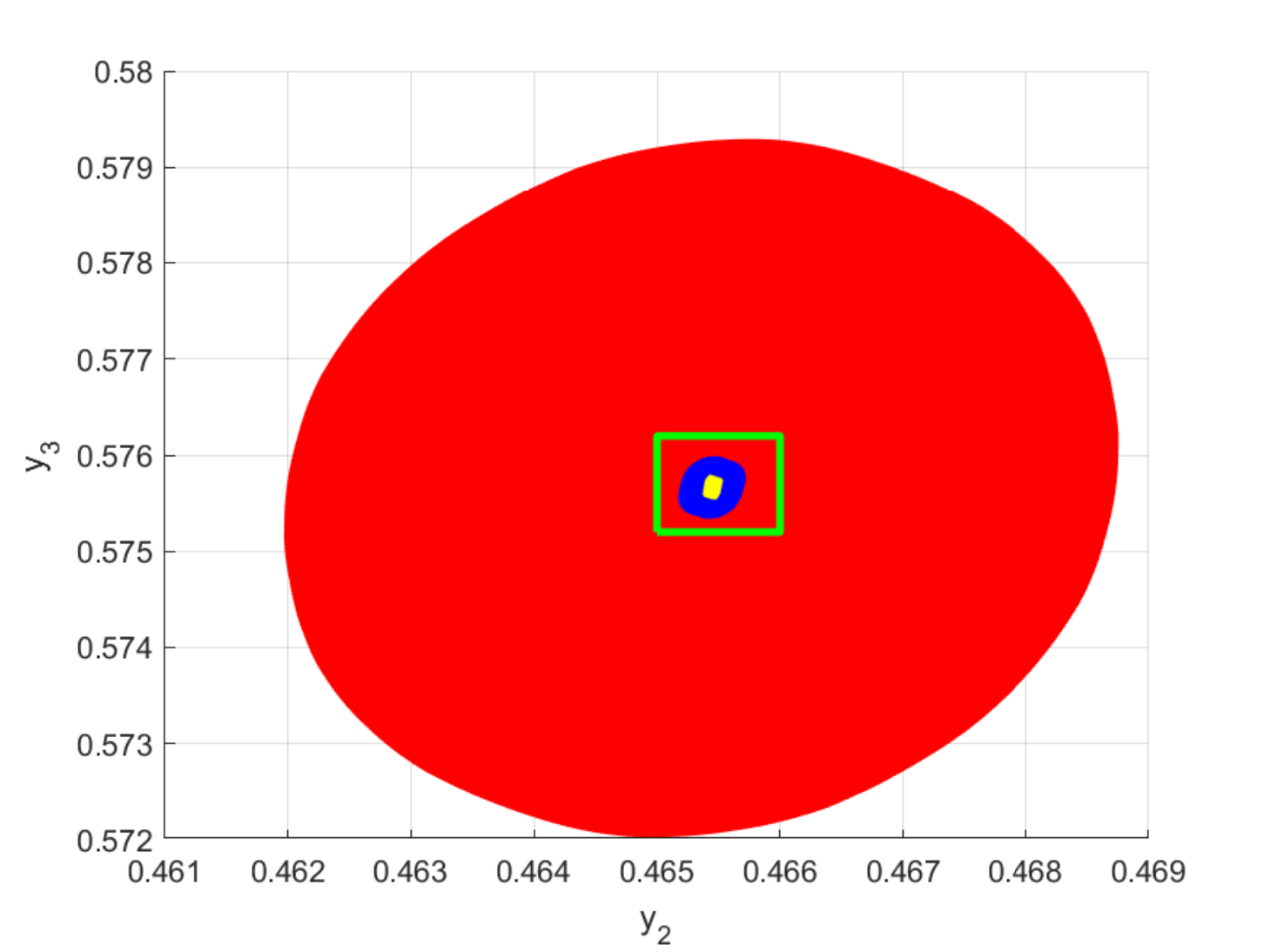}
\label{0.375-3}
\end{minipage}
}
\\
$\epsilon=0.375$, \textcolor{blue}{Safe}, \textcolor{red}{Unknown}
\centering
\caption{Safety verification on $\bm{N}_4$.
\textcolor{blue}{$\Omega(\partial \mathcal{X}_{in})$}; \textcolor{red}{$\Omega(\mathcal{X}_{in})$}; \textcolor{green}{$\partial \mathcal{X}_{s}$}}
\label{3-dim inn1}
\end{figure}

\oomit{\textcolor{red}{
\begin{remark} Though we demonstrate the performance of our set-boundary based verification method from neural ODEs and other invertible NNs separately, the proposed method also works on the combination NNs of these invertible ones, like GNODEs defined in \cite{manzanas2022reachability} (c.f. Definition 3.), as long as each component satisfies the invertibility requirement.
\end{remark}}
}

\subsection{Experiments on Non-invertible NNs}
When homeomorphisms cannot be assured with respect to given input regions, our method is also able to facilitate the extraction of subsets from the input region for safety verification, as done in Algorithm \ref{alg: iNNs1}. 
In this subsection,
we experiment on a non-invertible NN $\bm{N}_5$, which shares the same structure with $\bm{N}_3$. The input set $\mathcal{X}_{in}$ and safe set $\mathcal{X}_s$ are $[-0.5, 0.5]^2$ and $[0.06546,0.06555]\times[0.07828,0.07832]$, respectively. Then, based on the tool  DeepZ, we follow the computational procedure in Algorithm \ref{alg: iNNs1} for verifying the safety property. The computed output reachable sets and the verification result are shown in Fig. \ref{2-dim non-inn}. The subset $\mathcal{A}=[-0.45,0.3]^{2}$ rendering the NN homeomorphic is illustrated in  Fig. \ref{initial}, which is the orange region, and the subset $\overline{\mathcal{X}_{in}\setminus \mathcal{A}}$ is the blue region in Fig. \ref{initial}. It can be seen that the subset $\mathcal{A}$ extracted by set-boundary analysis for safety verification covers only 56.25\% of the initial input set. The output reachable set computed from the entire input set is also displayed in Fig.\ref{result}, which correspond to the red region. 
Moreover, the boundary of the safe region and the output reachable set estimated via the Monte-Carlo simulation method are shown in green and yellow in Fig.\ref{result}, respectively. It can be observed that our set-boundary reachability method facilitates the generation of a tighter output reachable set, which is included in the safe set $\mathcal{X}_s$. Thus, the safety property is ensured by our set-boundary reachability method. However, the entire set based method fails.  Moreover, the computation time of safe verification on $\bm{N}_5$ based on our set-boundary reachability method is 0.0459 seconds, while the verification time from the entire  set based method takes 0.0522 seconds with 4 equal subsets.


\begin{figure}[htbp]
\centering
\subfigure[$\mathcal{X}_{in}$: \textcolor{orange}{$\mathcal{A}$} $\cup $\textcolor{blue}{$\overline{\mathcal{X}_{in}\setminus  \mathcal{A}}$}]{
\begin{minipage}[t]{0.4\linewidth}
\centering
\includegraphics[width=1.5in]{ 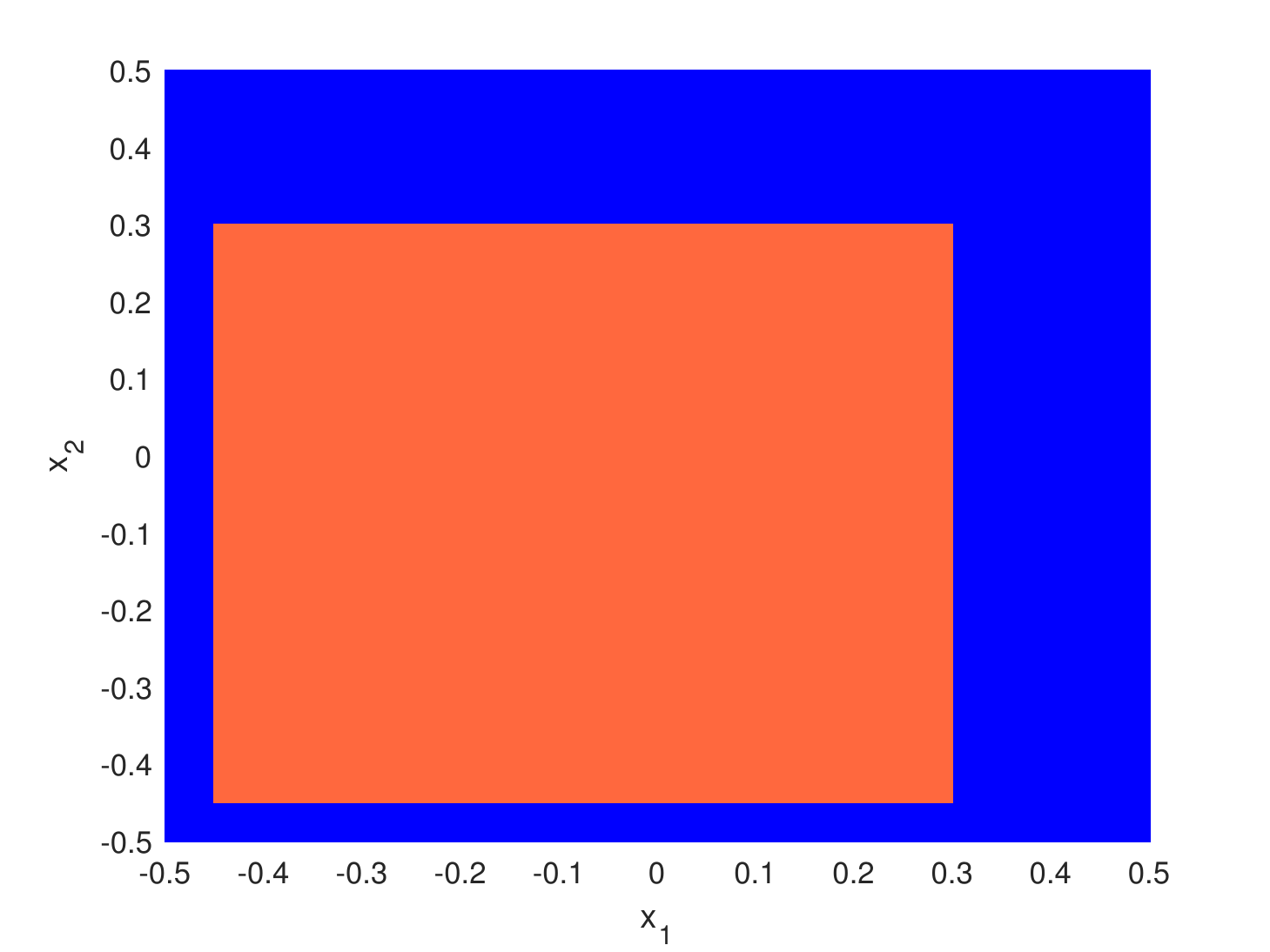}
\label{initial}
\end{minipage}%
}%
\centering
\subfigure[Verification: \textcolor{blue}{Safe}, \textcolor{red}{Unknown}.
\textcolor{blue}{$\Omega(\overline{\mathcal{X}_{in}\setminus  \mathcal{A}})$}; \textcolor{red}{$\Omega(\mathcal{X}_{in})$}; \textcolor{green}{$\partial \mathcal{X}_{s}$}]
{
\begin{minipage}[t]{0.6\linewidth}
\centering
\includegraphics[width=1.5in]{ 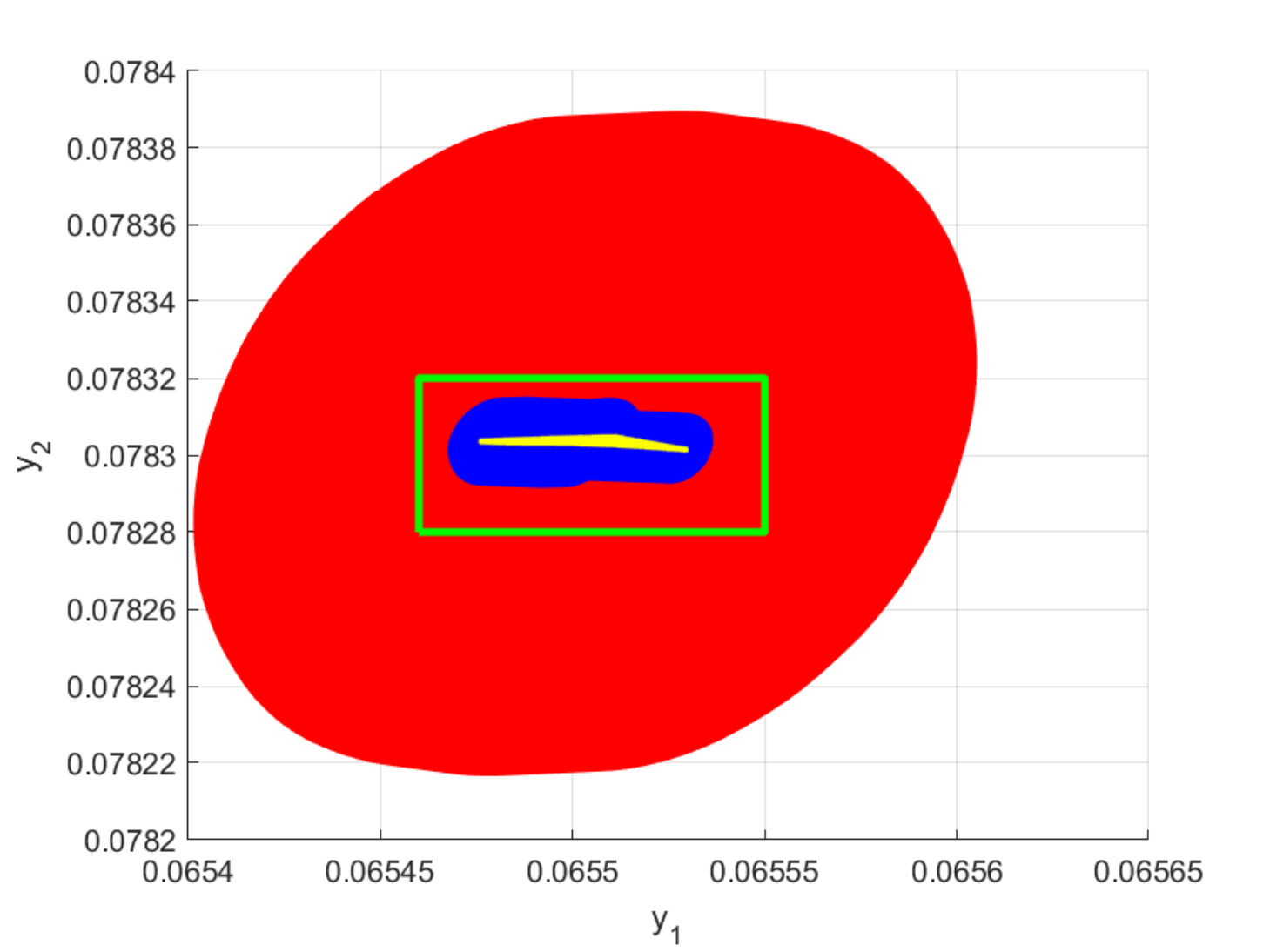}
\label{result}
\end{minipage}%
}%
\centering
\caption{Safety verification on $\bm{N}_5$}
\label{2-dim non-inn}
\end{figure}

\section{Conclusion}
In this paper we proposed a set-boundary reachability method to verify safety property of NNs. Different from existing works on developing computational techniques for output reachable sets estimation of NNs,  the set-boundary reachability method analyzed the reachability from the topology point of view. Based on homeomorphism property, this analysis took a careful inspection on what happens at boundaries of input sets, and uncovered that the homeomorphism property facilitates the reduction of computational burdens on safety verification of NNs. Several examples demonstrated the performance of the proposed method. 

There are a lot of works remaining to be done in order to render the proposed approach more practical. For instance, in this paper a homeomorphism is determined via the use of interval arithmetic to calculate the determinant of the Jacobian matrix. Such an interval estimation is coarse, which affects the determination of a homeomorphism and thus the extraction of the small subset for reachability computations. In the future we will develop more efficient and accurate methods for calculating Jacobian matrices. Besides, the homeomorphism property may be strict. Different from homeomorphisms, open maps, mapping open sets to open sets \cite{mendelson1990introduction}, can also ensure that the output reachable set's boundary corresponds to the input's boundary. Moreover, the open mapping condition is weaker than the one for a homeomorphism. Consequently, in future work we would exploit the open mapping property to facilitate reachability computations for safety verification.


\bibliographystyle{splncs04}
\bibliography{reference.bib}

\begin{thebibliography}{10}
\providecommand{\url}[1]{\texttt{#1}}
\providecommand{\urlprefix}{URL }
\providecommand{\doi}[1]{https://doi.org/#1}

\bibitem{akintunde2019verification}
Akintunde, M.E., Kevorchian, A., Lomuscio, A., Pirovano, E.: Verification of
  rnn-based neural agent-environment systems. In: Proceedings of the AAAI
  Conference on Artificial Intelligence. vol.~33, pp. 6006--6013 (2019)

\bibitem{althoff2015introduction}
Althoff, M.: An introduction to cora 2015. In: Proc. of the workshop on applied
  verification for continuous and hybrid systems. pp. 120--151 (2015)

\bibitem{ardizzone2018analyzing}
Ardizzone, L., Kruse, J., Rother, C., K{\"o}the, U.: Analyzing inverse problems
  with invertible neural networks. In: International Conference on Learning
  Representations (2018)

\bibitem{behrmann2019invertible}
Behrmann, J., Grathwohl, W., Chen, R.T., Duvenaud, D., Jacobsen, J.H.:
  Invertible residual networks. In: International Conference on Machine
  Learning. pp. 573--582. PMLR (2019)

\bibitem{chen2018neural}
Chen, R.T., Rubanova, Y., Bettencourt, J., Duvenaud, D.K.: Neural ordinary
  differential equations. Advances in neural information processing systems
  \textbf{31} (2018)

\bibitem{cousot1977abstract}
Cousot, P., Cousot, R.: Abstract interpretation: a unified lattice model for
  static analysis of programs by construction or approximation of fixpoints.
  In: Proceedings of the 4th ACM SIGACT-SIGPLAN symposium on Principles of
  programming languages. pp. 238--252 (1977)

\bibitem{dahnert2021panoptic}
Dahnert, M., Hou, J., Nie{\ss}ner, M., Dai, A.: Panoptic 3d scene
  reconstruction from a single rgb image. Advances in Neural Information
  Processing Systems  \textbf{34} (2021)

\bibitem{dupont2019augmented}
Dupont, E., Doucet, A., Teh, Y.W.: Augmented neural odes. Advances in Neural
  Information Processing Systems  \textbf{32} (2019)

\bibitem{dutta2017output}
Dutta, S., Jha, S., Sanakaranarayanan, S., Tiwari, A.: Output range analysis
  for deep neural networks. arXiv preprint arXiv:1709.09130  (2017)

\bibitem{ehlers2017formal}
Ehlers, R.: Formal verification of piece-wise linear feed-forward neural
  networks. In: International Symposium on Automated Technology for
  Verification and Analysis. pp. 269--286. Springer (2017)

\bibitem{gehr2018ai2}
Gehr, T., Mirman, M., Drachsler-Cohen, D., Tsankov, P., Chaudhuri, S., Vechev,
  M.: Ai2: Safety and robustness certification of neural networks with abstract
  interpretation. In: 2018 IEEE symposium on security and privacy (SP). pp.
  3--18. IEEE (2018)

\bibitem{ghorbani2019interpretation}
Ghorbani, A., Abid, A., Zou, J.: Interpretation of neural networks is fragile.
  In: Proceedings of the AAAI conference on artificial intelligence. vol.~33,
  pp. 3681--3688 (2019)

\bibitem{gomez2017reversible}
Gomez, A.N., Ren, M., Urtasun, R., Grosse, R.B.: The reversible residual
  network: Backpropagation without storing activations. Advances in neural
  information processing systems  \textbf{30} (2017)

\bibitem{gruenbacher2020lagrangian}
Gruenbacher, S., Cyranka, J., Lechner, M., Islam, M.A., Smolka, S.A., Grosu,
  R.: Lagrangian reachtubes: The next generation. In: 2020 59th IEEE Conference
  on Decision and Control (CDC). pp. 1556--1563. IEEE (2020)

\bibitem{gruenbacher2021gotube}
Gruenbacher, S., Lechner, M., Hasani, R., Rus, D., Henzinger, T.A., Smolka, S.,
  Grosu, R.: Gotube: Scalable stochastic verification of continuous-depth
  models. arXiv preprint arXiv:2107.08467  (2021)

\bibitem{grunbacher2021verification}
Grunbacher, S., Hasani, R., Lechner, M., Cyranka, J., Smolka, S.A., Grosu, R.:
  On the verification of neural odes with stochastic guarantees. In:
  Proceedings of the AAAI Conference on Artificial Intelligence. vol.~35, pp.
  11525--11535 (2021)

\bibitem{hasani2020natural}
Hasani, R., Lechner, M., Amini, A., Rus, D., Grosu, R.: A natural lottery
  ticket winner: Reinforcement learning with ordinary neural circuits. In:
  International Conference on Machine Learning. pp. 4082--4093. PMLR (2020)

\bibitem{huang2019reachnn}
Huang, C., Fan, J., Li, W., Chen, X., Zhu, Q.: Reachnn: Reachability analysis
  of neural-network controlled systems. ACM Transactions on Embedded Computing
  Systems (TECS)  \textbf{18}(5s),  1--22 (2019)

\bibitem{huang2017safety}
Huang, X., Kwiatkowska, M., Wang, S., Wu, M.: Safety verification of deep
  neural networks. In: International conference on computer aided verification.
  pp. 3--29. Springer (2017)

\bibitem{ivanov2020verifying}
Ivanov, R., Carpenter, T.J., Weimer, J., Alur, R., Pappas, G.J., Lee, I.:
  Verifying the safety of autonomous systems with neural network controllers.
  ACM Transactions on Embedded Computing Systems (TECS)  \textbf{20}(1),  1--26
  (2020)

\bibitem{jacobsen2018revnet}
Jacobsen, J.H., Smeulders, A., Oyallon, E.: i-revnet: Deep invertible networks.
  arXiv preprint arXiv:1802.07088  (2018)

\bibitem{joshi1983introduction}
Joshi, K.D.: Introduction to general topology. New Age International (1983)

\bibitem{karch2021grounding}
Karch, T., Teodorescu, L., Hofmann, K., Moulin-Frier, C., Oudeyer, P.Y.:
  Grounding spatio-temporal language with transformers. arXiv preprint
  arXiv:2106.08858  (2021)

\bibitem{katz2017reluplex}
Katz, G., Barrett, C., Dill, D.L., Julian, K., Kochenderfer, M.J.: Reluplex: An
  efficient smt solver for verifying deep neural networks. In: International
  conference on computer aided verification. pp. 97--117. Springer (2017)

\bibitem{krantz2002implicit}
Krantz, S.G., Parks, H.R.: The implicit function theorem: history, theory, and
  applications. Springer Science \& Business Media (2002)

\bibitem{lechner2020neural}
Lechner, M., Hasani, R., Amini, A., Henzinger, T.A., Rus, D., Grosu, R.: Neural
  circuit policies enabling auditable autonomy. Nature Machine Intelligence
  \textbf{2}(10),  642--652 (2020)

\bibitem{liu2021algorithms}
Liu, C., Arnon, T., Lazarus, C., Strong, C., Barrett, C., Kochenderfer, M.J.,
  et~al.: Algorithms for verifying deep neural networks. Foundations and
  Trends{\textregistered} in Optimization  \textbf{4}(3-4),  244--404 (2021)

\bibitem{liuww2020article}
Liu, W., Song, F., Zhang, T., Wang, J.: Verifying relu neural networks from a
  model checking perspective. Journal of Computer Science and Technology
  \textbf{35},  1365--1381 (11 2020). \doi{10.1007/s11390-020-0546-7}

\bibitem{lomuscio2017approach}
Lomuscio, A., Maganti, L.: An approach to reachability analysis for
  feed-forward relu neural networks. arXiv preprint arXiv:1706.07351  (2017)

\bibitem{lopez2022reachability}
Lopez, D.M., Musau, P., Hamilton, N., Johnson, T.T.: Reachability analysis of a
  general class of neural ordinary differential equations. arXiv preprint
  arXiv:2207.06531  (2022)

\bibitem{manzanas2022reachability}
Manzanas~Lopez, D., Musau, P., Hamilton, N., Johnson, T.T.: Reachability
  analysis of a general class of neural ordinary differential equations. arXiv
  e-prints pp. arXiv--2207 (2022)

\bibitem{massey2019basic}
Massey, W.S.: A basic course in algebraic topology, vol.~127. Springer (2019)

\bibitem{mendelson1990introduction}
Mendelson, B.: Introduction to topology. Courier Corporation (1990)

\bibitem{naitzat2020topology}
Naitzat, G., Zhitnikov, A., Lim, L.H.: Topology of deep neural networks. J.
  Mach. Learn. Res.  \textbf{21}(184),  1--40 (2020)

\bibitem{pulina2010abstraction}
Pulina, L., Tacchella, A.: An abstraction-refinement approach to verification
  of artificial neural networks. In: International Conference on Computer Aided
  Verification. pp. 243--257. Springer (2010)

\bibitem{singh2018fast}
Singh, G., Gehr, T., Mirman, M., P{\"u}schel, M., Vechev, M.: Fast and
  effective robustness certification. Advances in neural information processing
  systems  \textbf{31} (2018)

\bibitem{singh2019abstract}
Singh, G., Gehr, T., P{\"u}schel, M., Vechev, M.: An abstract domain for
  certifying neural networks. Proceedings of the ACM on Programming Languages
  \textbf{3}(POPL),  1--30 (2019)

\bibitem{tian2021image}
Tian, Y., Yang, W., Wang, J.: Image fusion using a multi-level image
  decomposition and fusion method. Applied Optics  \textbf{60}(24),  7466--7479
  (2021)

\bibitem{tran2019star}
Tran, H.D., Manzanas~Lopez, D., Musau, P., Yang, X., Nguyen, L.V., Xiang, W.,
  Johnson, T.T.: Star-based reachability analysis of deep neural networks. In:
  International symposium on formal methods. pp. 670--686. Springer (2019)

\bibitem{tran2019parallelizable}
Tran, H.D., Musau, P., Lopez, D.M., Yang, X., Nguyen, L.V., Xiang, W., Johnson,
  T.T.: Parallelizable reachability analysis algorithms for feed-forward neural
  networks. In: 2019 IEEE/ACM 7th International Conference on Formal Methods in
  Software Engineering (FormaliSE). pp. 51--60. IEEE (2019)

\bibitem{wang2018efficient}
Wang, S., Pei, K., Whitehouse, J., Yang, J., Jana, S.: Efficient formal safety
  analysis of neural networks. Advances in Neural Information Processing
  Systems  \textbf{31} (2018)

\bibitem{xiang2018reachability}
Xiang, W., Johnson, T.T.: Reachability analysis and safety verification for
  neural network control systems. arXiv preprint arXiv:1805.09944  (2018)

\bibitem{xiang2017reachable}
Xiang, W., Tran, H.D., Johnson, T.T.: Reachable set computation and safety
  verification for neural networks with relu activations. arXiv preprint
  arXiv:1712.08163  (2017)

\bibitem{xiang2018output}
Xiang, W., Tran, H.D., Johnson, T.T.: Output reachable set estimation and
  verification for multilayer neural networks. IEEE transactions on neural
  networks and learning systems  \textbf{29}(11),  5777--5783 (2018)

\bibitem{xue2016reach}
Xue, B., Easwaran, A., Cho, N.J., Fr{\"a}nzle, M.: Reach-avoid verification for
  nonlinear systems based on boundary analysis. IEEE Transactions on Automatic
  Control  \textbf{62}(7),  3518--3523 (2016)

\bibitem{xue2016under}
Xue, B., She, Z., Easwaran, A.: Under-approximating backward reachable sets by
  polytopes. In: International Conference on Computer Aided Verification. pp.
  457--476. Springer (2016)

\bibitem{xue2020over}
Xue, B., Wang, Q., Feng, S., Zhan, N.: Over-and underapproximating reach sets
  for perturbed delay differential equations. IEEE Transactions on Automatic
  Control  \textbf{66}(1),  283--290 (2020)

\bibitem{yang2021improving}
Yang, P., Li, R., Li, J., Huang, C.C., Wang, J., Sun, J., Xue, B., Zhang, L.:
  Improving neural network verification through spurious region guided
  refinement. In: International Conference on Tools and Algorithms for the
  Construction and Analysis of Systems. pp. 389--408. Springer (2021)

\bibitem{yuan2021bartscore}
Yuan, W., Neubig, G., Liu, P.: Bartscore: Evaluating generated text as text
  generation. arXiv preprint arXiv:2106.11520  (2021)

\end{thebibliography}
\end{document}